\documentclass[letterpaper]{article} 
\usepackage{aaai2027}  
\usepackage[hyphens]{url}  
\usepackage{graphicx} 
\usepackage{natbib}  
\usepackage{caption} 
\usepackage{algorithm}
\usepackage{algorithmic}

\usepackage[hyphens]{url} 
\usepackage{graphicx} 
\usepackage{natbib} 
\usepackage{caption} 
\usepackage{booktabs}
\usepackage{amsmath,amssymb,amsthm}
\usepackage{algorithm}
\usepackage{algorithmic}
\newtheorem{theorem}{Theorem}
\newtheorem{proposition}{Proposition}
\newtheorem{lemma}{Lemma}
\newtheorem{assumption}{Assumption}
\newcommand{\TV}{\operatorname{TV}}
\newcommand{\BA}{\operatorname{BA}}
\newcommand{\cba}{C_{\mathrm{BA}}}
\newcommand{\Rstar}{R^{\star}}
\usepackage{newfloat}
\usepackage{listings}
\usepackage[hyphens]{url}
\usepackage{graphicx}
\usepackage{natbib}
\usepackage{caption}
\usepackage{booktabs}
\usepackage{amsmath,amssymb,amsthm}
\usepackage{algorithm}
\usepackage{algorithmic}
\usepackage{float}
\usepackage{microtype}
\newtheorem{definition}{Definition}

\DeclareCaptionStyle{ruled}{labelfont=normalfont,labelsep=colon,strut=off} 
\floatstyle{ruled}
\newfloat{listing}{tb}{lst}{}
\floatname{listing}{Listing}

\usepackage{booktabs}

\title{The Ceiling Is in the Channel: Auditing Learner Gaps and Measurement Frontiers in Clinical Prediction}
\author{Sayeed Shafayet Chowdhury,$^{1}$
Nusrat Jahan,$^{2}$
Snehasis Mukhopadhyay,$^{3}$
Shiaofen Fang,$^{1}$
Vijay R. Ramakrishnan$^{4}$}
\affiliations{$^{1}$Department of Computer Science, Luddy School of Informatics, Computing, and Engineering, Indiana University Indianapolis, Indianapolis, IN, USA, email: saychow@iu.edu\\
$^{2}$Department of Computer Science, Ahsanullah University of Science and Technology, Dhaka, Bangladesh\\
$^{3}$Department of Computer Science, Purdue University Indianapolis, Indianapolis, IN, USA\\
$^{4}$Department of Otolaryngology--Head and Neck Surgery, Indiana University School of Medicine, Indianapolis, IN, USA}

\begin{document}
\maketitle

\begin{abstract}
Clinical prediction can saturate for two different reasons: a fitted learner may fail to extract available information, or the recorded variables may impose a population frontier. We separate these quantities through the \emph{learner gap} and the \emph{measurement-channel ceiling}. Optimal balanced accuracy is characterized by total-variation separation, yielding architecture invariance, a sharp partial-identification result under replacement contamination, a cross-fitted ceiling estimator, and exact conditions for multimodal decision improvement. We add two finite-sample diagnostics, namely a label-permutation optimism floor and an underfit curve, and validate the audit on three real cohorts: UCI readmission ($n=99{,}343$), BRFSS diabetes ($n=253{,}680$), and NHANES HbA1c ($n=10{,}219$). Well-tuned gradient boosting nearly reaches the estimated frontier in UCI and BRFSS, whereas deliberately or practically deficient learners retain large gaps. NHANES yields a null difference between questionnaire and measured marginal frontiers but a significant joint complementarity gain, refining the simplistic claim that an objective modality must dominate. Across all cohorts, modest AUROC gains coexist with substantially larger Bayes decision-flip rates, and several architectures estimate similar frontiers while their achieved balanced accuracy differs sharply. A PRISMA-guided synthesis of 104 clinical tasks then shows that the same channel-level regularities recur across more than 18 disease categories: a broad but non-universal structured-clinical region, diminishing same-channel gains across model families, and higher performance when measurement channels change. The framework converts saturation from an empirical observation into an auditable decision: improve the learner when headroom remains; improve measurement when it does not.
\end{abstract}

\section{Introduction}
Increasing model capacity and cohort size does not guarantee a corresponding increase in clinical predictive performance. Structured-record studies using logistic regression, random forests, gradient boosting, neural networks, and large language models frequently report AUROC values in a broad region near $0.78$--$0.88$ \cite{shamout2021machine,elfanagely2021machine,ogink2021wide,liu2025machine,musat2024machine}. The same literature contains important counterexamples: weak administrative or patient-reported channels can fall below this region, whereas imaging, ECG, genomic, and complementary multimodal systems can exceed it \cite{khurshid2022ecgbased,xie2024machine,makarious2022multimodality,dammu2023deep}. These observations are usually narrated as a model-scaling puzzle, but they conflate two distinct objects.

For observed variables $X$, the \emph{measurement-channel ceiling} is the Bayes frontier attainable from $X$ in the population. A trained model reaches only an \emph{achieved performance}; their difference is the \emph{learner gap}. More data, better optimization, and a richer architecture may close that gap. However, they cannot increase the fixed-channel frontier, which a new measurement, repeated administration, adjudicated label, or complementary modality potentially can. This distinction changes the experimental question from ``Which model scores highest?'' to ``Is the task still learner-limited, or has the recorded channel become limiting?''

We make this distinction operational as summarized in Figure~\ref{fig:framework}: class-conditional separation determines the fixed-channel frontier, while the learner gap measures remaining extractive headroom. We estimate the frontier from out-of-fold equal-prior posteriors and require two diagnostics before trusting it. A permutation-null audit quantifies upward plug-in bias: random labels should have ceiling $0.5$. An underfit curve tests whether the posterior learner has stabilized as its training fraction grows. These diagnostics are essential because a flexible estimator can be optimistically overconfident, while an underfit estimator can produce a downward-biased lower bound.

\begin{figure}[t]
\centering
\includegraphics[width=\columnwidth]{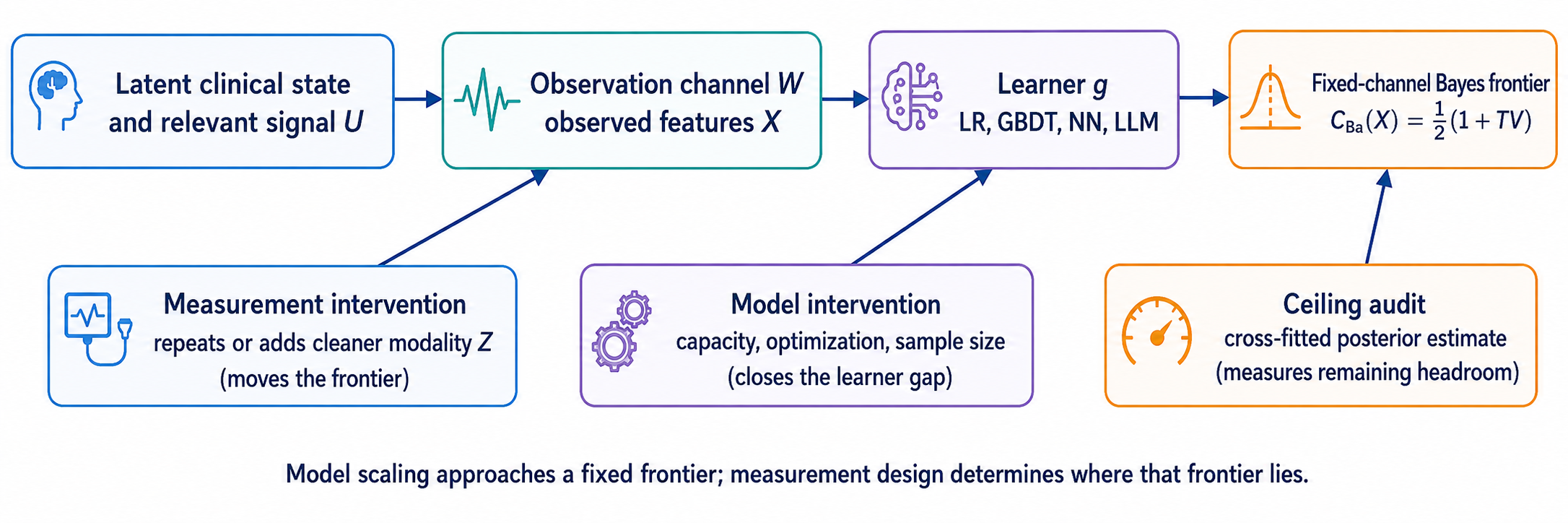}
\caption{Learner-gap and measurement-frontier decomposition. Scaling a model can approach $\cba(X)$; changing the observation channel can move it.}
\vspace{-7mm}
\label{fig:framework}
\end{figure}

The empirical study operates at two complementary scales. Three patient-level cohorts cover $99{,}343$, $253{,}680$, and $10{,}219$ observations, five channel configurations, grouped and ungrouped cross-fitting, administrative records, telephone-survey responses, questionnaire variables, and laboratory or examination measurements. Together, these cohorts provide a controlled evaluation of the proposed framework across heterogeneous measurement settings, allowing us to quantify learner headroom, assess estimator reliability, and isolate the contribution of complementary information channels. We then examine 104 task-level observations from more than 18 clinical categories to determine whether the same distinctions---diminishing same-channel gains and frontier shifts after measurement change---recur across the broader clinical literature. 
Our contributions are:
\begin{enumerate}
\item \textbf{A rigorous channel frontier.} We show that optimal balanced accuracy equals total-variation separation of class-conditional distributions; architecture invariance follows from data processing.
\item \textbf{Partial identification instead of post-hoc noise fitting.} Under shared replacement contamination, we give an exact ceiling and a sharp identified set: a ceiling of $0.85$ identifies separation $0.70$ but implies only $\alpha\in[0,0.30]$.
\item \textbf{An operational ceiling audit and criterion for added modalities.} A cross-fitted posterior estimator is paired with a consistency proof, a label-permutation optimism floor, and an underfit curve. Positive conditional mutual information is insufficient for hard-classification gain; strict improvement occurs exactly when Bayes decisions change on a positive-probability set, which we measure through a decision-flip rate.

\item \textbf{Multi-scale empirical validation.}
We evaluate the proposed audit on three real clinical cohorts containing
$99{,}343$, $253{,}680$, and $10{,}219$ observations across five measurement-channel
configurations. 
We complement them with a PRISMA-guided synthesis of
104 task-level observations spanning more than 18 clinical categories, showing
that the same distinction between diminishing same-channel gains and frontier
shifts after measurement change recurs across the broader clinical literature.
\end{enumerate}

\section{Related Work and Scope}
Bayes risk, binary hypothesis testing, and total variation provide the decision-theoretic foundation for prediction limits \cite{nielsen2014,jiao2019estimating}. Markov kernels contract divergence, and Dobrushin coefficients quantify total-variation contraction \cite{polyanskiy2015,gaubert2013}. Contamination models are classical in robust statistics \cite{huber1964}. Jiao, Han, and Weissman study estimation of fundamental limits, while Tao et al. estimate total variation discriminatively \cite{jiao2019estimating,tao2024}. Our target is narrower and operational: the equal-prior posterior functional corresponding to a clinical hard-decision rule. We pair it with diagnostics that determine whether a finite-sample estimate is trustworthy and use the resulting quantity to separate learner gap from channel ceiling on real cohorts. Note that we do not claim a new general estimator for every divergence-estimation setting.

The exact theory concerns balanced accuracy and Bayes $0$--$1$ risk. AUROC is a ranking functional, raw accuracy is prevalence dependent, and calibration is distinct from both. The literature synthesis therefore retains reported metrics and remains descriptive. The real-cohort audit reports AUROC and balanced accuracy separately and never treats them as interchangeable.

\section{Channel-Ceiling Theory}
Let $Y\in\{0,1\}$ have prevalence $\pi$, and let $P_y=\mathcal{L}(X\mid Y=y)$ have density $p_y$ with respect to a common measure. Define
\[
\begin{aligned}
\kappa_X&:=\TV(P_0,P_1)=\tfrac12\int|p_1-p_0|d\mu,\\
\BA(g)&:=\tfrac12\{\mathrm{TPR}(g)+\mathrm{TNR}(g)\}.
\end{aligned}
\]
We call $\kappa_X$ the \emph{effective channel separability}. In what follows, we present proof sketches for the theoretical results; complete proofs are provided in the supplementary material.

\begin{lemma}[Balanced-accuracy separation identity]
\label{lem:ba}
For any binary prediction problem,
\[
\cba(X):=\sup_g\BA(g)=\frac12(1+\kappa_X),
\]
attained by the equal-prior likelihood-ratio rule $\mathbf{1}\{p_1\ge p_0\}$. Under prevalence $\pi$,
\[
\begin{aligned}
\Rstar_\pi(X)
&=\int\min\{\pi p_1,(1-\pi)p_0\}d\mu\\
&=\tfrac12-\tfrac12\int|\pi p_1-(1-\pi)p_0|d\mu.
\end{aligned}
\]
\end{lemma}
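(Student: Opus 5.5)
First we handle deterministic rules, then randomized ones. Every deterministic rule $g$ is the indicator of an acceptance region $A=\{x:g(x)=1\}$. For such a rule, $\mathrm{TPR}(g)=P_1(A)$ and $\mathrm{TNR}(g)=1-P_0(A)$. Hence
\[
\BA(g)=\tfrac12\{1+P_1(A)-P_0(A)\}=\tfrac12\Bigl\{1+\int_A(p_1-p_0)\,d\mu\Bigr\}.
\]
Maximizing over $A$ is pointwise: the integral is maximized by including exactly those $x$ with $p_1(x)-p_0(x)>0$. Points with $p_1=p_0$ contribute nothing, so including them under the tie-breaking convention $\mathbf{1}\{p_1\ge p_0\}$ does not change the value. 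For this $A^\star$,
\[
\int_{A^\star}(p_1-p_0)\,d\mu=\int(p_1-p_0)_+\,d\mu.
\]
Since $\int(p_1-p_0)\,d\mu=0$, the positive and negative parts have equal mass. Each therefore equals $\tfrac12\int|p_1-p_0|\,d\mu=\kappa_X$, which gives $\cba(X)=\tfrac12(1+\kappa_X)$.

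A randomized rule $g$ with $\Pr(g(x)=1)=\phi(x)\in[0,1]$ gives $\BA=\tfrac12\{1+\int\phi(p_1-p_0)\,d\mu\}$. This is linear in $\phi$, so it is bounded by the same pointwise maximum and the supremum does not increase.

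The prevalence statement uses the same pointwise argument with weights. The risk of the rule with acceptance region $A$ is
\[
R_\pi(A)=\pi P_1(A^c)+(1-\pi)P_0(A)=\int_{A^c}\pi p_1\,d\mu+\int_A(1-\pi)p_0\,d\mu.
\]
This is minimized pointwise by assigning each $x$ to the smaller of $\pi p_1(x)$ and $(1-\pi)p_0(x)$, i.e.\ by taking $A=\{\pi p_1\ge(1-\pi)p_0\}$. The minimum is $\int\min\{\pi p_1,(1-\pi)p_0\}\,d\mu$, and randomized rules again cannot do better by linearity. For the second form, apply the identity $\min\{a,b\}=\tfrac12(a+b)-\tfrac12|a-b|$ and note that $\int\{\pi p_1+(1-\pi)p_0\}\,d\mu=1$. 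As a consistency check, at $\pi=\tfrac12$ the Bayes risk equals $1-\cba(X)$, matching the first part.

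The argument is elementary, and the only step needing care is measure-theoretic hygiene. The class of rules $g$ must be taken as all measurable, possibly randomized, maps, and the densities must be defined with respect to a common dominating measure, e.g.\ $\mu=P_0+P_1$. These choices make the pointwise optimization legitimate and the supremum attained; with them, $\kappa_X$ is independent of the choice of $\mu$.
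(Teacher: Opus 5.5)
Your proof is correct and follows the paper's own argument: for a decision region $A$, write $\BA=\tfrac12+\tfrac12\{P_1(A)-P_0(A)\}$, maximize over $A$ to get the total-variation distance attained at $\{p_1\ge p_0\}$, and obtain the raw-risk form from $\min(a,b)=\tfrac12(a+b-|a-b|)$. Your extra steps are sound and make the argument more complete: the linearity argument for randomized rules, and the pointwise derivation of the first Bayes-risk equality, which the paper takes as given.
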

\noindent\emph{Proof sketch.} For decision region $A$, $\BA=\tfrac12+\tfrac12\{P_1(A)-P_0(A)\}$; optimize over $A$. The raw-risk identity follows from $\min(a,b)=(a+b-|a-b|)/2$. 

\begin{lemma}[Data processing and architecture invariance]
\label{lem:dpi}
If $T$ is any deterministic or randomized representation of $X$, so $Y\to X\to T$, then
\[
\begin{aligned}
\TV\{\mathcal L(T\mid0),\mathcal L(T\mid1)\}&\le\kappa_X,\\
\cba(T)&\le\cba(X),\quad \Rstar_\pi(T)\ge\Rstar_\pi(X).
\end{aligned}
\]
\end{lemma}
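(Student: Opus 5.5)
The plan is to model $T$ through a Markov kernel $K(\cdot\mid x)$ from the space of $X$ to the space of $T$; a deterministic map is the special case of a point-mass kernel. The chain $Y\to X\to T$ says the kernel does not depend on $Y$, so the class-conditional laws are $Q_y:=\mathcal L(T\mid Y=y)=\int K(\cdot\mid x)\,P_y(dx)$ for $y\in\{0,1\}$.

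\textbf{Total-variation contraction.} We use the variational form $\TV(Q_0,Q_1)=\sup_B\{Q_1(B)-Q_0(B)\}$. For a measurable set $B$ in the $T$-space, put $f_B(x):=K(B\mid x)\in[0,1]$. Then
\[
Q_1(B)-Q_0(B)=\int f_B\,(p_1-p_0)\,d\mu\le\int_{\{p_1>p_0\}}(p_1-p_0)\,d\mu=\kappa_X,
\]
because $0\le f_B\le1$: the integral is largest when $f_B$ equals $1$ where $p_1>p_0$ and $0$ elsewhere. The last equality holds since $\int(p_1-p_0)\,d\mu=0$, so the positive part of $p_1-p_0$ integrates to half of $\int|p_1-p_0|\,d\mu$. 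Taking the supremum over $B$ gives the first inequality.

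\textbf{Ceilings.} Lemma~\ref{lem:ba} applies to $T$ as well as to $X$. It gives $\cba(T)=\tfrac12(1+\TV(Q_0,Q_1))\le\tfrac12(1+\kappa_X)=\cba(X)$. For the raw risk, Lemma~\ref{lem:ba} gives $\Rstar_\pi(T)=\tfrac12-\tfrac12\|\pi Q_1-(1-\pi)Q_0\|_1$. The same bounded-test argument applies to the signed measure $\nu:=\pi P_1-(1-\pi)P_0$, because the kernel pushes it forward to $\pi Q_1-(1-\pi)Q_0$. We write $\|\cdot\|_1=\sup_{|h|\le1}\int h\,d(\cdot)$. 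For $|h|\le1$, the function $x\mapsto\int h\,dK(\cdot\mid x)$ is also bounded by $1$, so the $L^1$ norm cannot increase. This yields $\Rstar_\pi(T)\ge\Rstar_\pi(X)$.

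A shorter route is also available. Any randomized decision rule $g(T)$ composed with the kernel is itself a randomized rule in $X$, so the supremum over rules in $X$ dominates. This requires Lemma~\ref{lem:ba}'s supremum to range over randomized rules, which costs nothing because the optimum there is deterministic.

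\textbf{Main obstacle.} The only delicate point is measure-theoretic. $Q_0$ and $Q_1$ may have no densities with respect to a convenient common measure, and a general randomized representation must be written as a measurable kernel, possibly with auxiliary randomness $U$ independent of $(X,Y)$. Both issues are handled by working with the set-supremum form of total variation and defining $T=\phi(X,U)$. No density for $T$ is ever required, since Lemma~\ref{lem:ba} can be restated with the dominating measure $Q_0+Q_1$.
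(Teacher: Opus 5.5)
Your proposal is correct and follows the paper's proof: total-variation contraction under the Markov kernel, Lemma~\ref{lem:ba} for the balanced-accuracy ceiling, and inclusion of $T$-based rules among $X$-based rules for the raw risk. The differences are minor: you prove the contraction directly via the bounded test functions $K(B\mid x)$ rather than citing it, you add an equivalent signed-measure contraction argument for $\Rstar_\pi$, and you state explicitly a point the paper leaves implicit, namely that composing with a randomized channel produces randomized rules, which Lemma~\ref{lem:ba} covers because its optimum is deterministic.
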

\noindent\emph{Proof sketch.} Total variation contracts under Markov kernels. For raw risk, every rule based on $T$ is a restricted rule based on $X$. Lemma~\ref{lem:dpi} establishes a common upper bound; it does not assert that finite learners are equally close to it.

\subsection{Contaminated and General Measurement Channels}
Let $U$ denote informative latent content with class laws $Q_y$. A shared replacement channel returns class-independent content $R$ with probability $\alpha$:
\[
P_y=(1-\alpha)Q_y+\alpha R.
\]

\begin{theorem}[Exact replacement-contamination ceiling]
\label{thm:contam}
Let $\tau=\TV(Q_0,Q_1)$. Then
\[
\kappa_X=(1-\alpha)\tau,
\qquad
\cba(X)=\tfrac12\{1+(1-\alpha)\tau\}\le1-\alpha/2.
\]
Equality in the upper bound holds iff $\tau=1$. Moreover, $\Rstar_\pi(X)\ge\alpha\min\{\pi,1-\pi\}$.
\end{theorem}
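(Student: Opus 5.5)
The argument reduces everything to a pointwise computation on densities, followed by an appeal to Lemma~\ref{lem:ba}. Fix a common dominating measure $\mu$ for $Q_0,Q_1,R$, with densities $q_0,q_1,r$. The mixture then has densities $p_y=(1-\alpha)q_y+\alpha r$. The key observation is that the shared replacement component cancels in the difference:
\[
p_1-p_0=(1-\alpha)(q_1-q_0).
\]
Integrating $\tfrac12|p_1-p_0|$ gives $\kappa_X=(1-\alpha)\tau$ immediately. Substituting this into Lemma~\ref{lem:ba} yields $\cba(X)=\tfrac12\{1+(1-\alpha)\tau\}$.

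For the upper bound I use $\tau\le 1$, so that
\[
\cba(X)\le \tfrac12(2-\alpha)=1-\alpha/2.
\]
Equality requires $(1-\alpha)\tau=1-\alpha$. Provided $\alpha<1$, this holds iff $\tau=1$. I will note the degenerate case $\alpha=1$ separately: there both sides equal $1/2$ for every $\tau$, so the ``iff'' statement implicitly assumes $\alpha<1$, or should be read with that caveat.

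For the raw-risk bound I start from the first expression in Lemma~\ref{lem:ba}, $\Rstar_\pi(X)=\int\min\{\pi p_1,(1-\pi)p_0\}\,d\mu$. Since $q_y\ge0$, pointwise $\pi p_1\ge \pi\alpha r$ and $(1-\pi)p_0\ge(1-\pi)\alpha r$. Hence
\[
\min\{\pi p_1,(1-\pi)p_0\}\ge \alpha r\min\{\pi,1-\pi\}.
\]
Integrating and using $\int r\,d\mu=1$ gives $\Rstar_\pi(X)\ge\alpha\min\{\pi,1-\pi\}$.

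None of the steps is a genuine obstacle. The only points needing care are the choice of a common dominating measure (for instance $\mu=Q_0+Q_1+R$, so the densities exist) and the $\alpha=1$ edge case in the equality condition. Everything else is the cancellation of the shared component combined with Lemma~\ref{lem:ba}.
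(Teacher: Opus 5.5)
Your proof is correct and follows the paper's argument essentially step for step: cancel the shared $\alpha r$ term in $p_1-p_0$, read off $\kappa_X=(1-\alpha)\tau$ and substitute into the balanced-accuracy separation identity, then lower-bound $\min\{\pi p_1,(1-\pi)p_0\}$ pointwise by $\alpha r\min\{\pi,1-\pi\}$ and integrate. You also check the equality condition, which the paper's proof does not, and your caveat is right: since $1-\alpha/2-\cba(X)=\tfrac12(1-\alpha)(1-\tau)$, the ``iff $\tau=1$'' claim holds only for $\alpha<1$ and fails at $\alpha=1$ whenever $\tau<1$.
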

\noindent\emph{Proof sketch.} The common $\alpha R$ component cancels in $P_1-P_0$; for raw risk, lower-bound both weighted densities by their common contamination component.

\begin{proposition}[Sharp identified set]
\label{prop:id}
If the model above holds and the population balanced-accuracy ceiling is $c\in[1/2,1]$, then without external knowledge of $\tau$ the sharp identified set is
\[
\mathcal I_\alpha(c)=[0,2(1-c)].
\]
Thus $c=0.85$ identifies $\kappa_X=0.70$ and only $\alpha\in[0,0.30]$.
\end{proposition}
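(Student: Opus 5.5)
The plan is to prove the two inclusions separately, using Theorem~\ref{thm:contam} as the only link between the observable ceiling $c$ and the unobserved pair $(\alpha,\tau)$. By Lemma~\ref{lem:ba}, the population ceiling determines $\kappa_X=2c-1$ exactly. By Theorem~\ref{thm:contam}, any admissible model satisfies $\kappa_X=(1-\alpha)\tau$ with $\tau\in[0,1]$ and $\alpha\in[0,1]$. So the identified set for $\alpha$ is the projection onto the $\alpha$-coordinate of
\[
\{(\alpha,\tau)\in[0,1]^2:(1-\alpha)\tau=2c-1\},
\]
provided every such pair is realized by some pair of laws $Q_0,Q_1,R$.

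\emph{Outer bound.} Since $\tau\le 1$, we get $2c-1=(1-\alpha)\tau\le 1-\alpha$, hence $\alpha\le 2-2c=2(1-c)$. Together with $\alpha\ge0$, this gives $\mathcal I_\alpha(c)\subseteq[0,2(1-c)]$. This direction is immediate from the bound $\cba(X)\le 1-\alpha/2$ already stated in Theorem~\ref{thm:contam}.

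\emph{Sharpness.} For each $\alpha\in[0,2(1-c)]$, I will exhibit a model consistent with ceiling $c$.
\begin{itemize}
\item \textbf{Case $\alpha<1$.} Set $\tau=(2c-1)/(1-\alpha)$. This lies in $[0,1]$ precisely because $\alpha\le 2(1-c)$.
\item \textbf{Case $\alpha=1$.} This is possible only when $c=1/2$. Any $\tau$ works, since the channel is pure noise.
\end{itemize}
To realize a prescribed $\tau$, take $U\in\{0,1\}$ with $Q_0=\mathrm{Bernoulli}((1-\tau)/2)$ and $Q_1=\mathrm{Bernoulli}((1+\tau)/2)$. Their total variation is exactly $\tau$. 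Let $R$ be any law, for example a point mass on a third symbol. Theorem~\ref{thm:contam} then gives $\kappa_X=(1-\alpha)\tau=2c-1$, so $\cba(X)=c$. Every $\alpha$ in the interval is therefore observationally equivalent under the data, which proves sharpness.

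\emph{Numerical instance.} For $c=0.85$, Lemma~\ref{lem:ba} gives $\kappa_X=0.70$, and the interval becomes $[0,0.30]$.

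The main (mild) obstacle is making precise what ``identified'' means. The observable is only the ceiling $c$, or the laws $P_y$ under which only $\kappa_X$ is pinned down. I would state explicitly that the identifying information is $c$. If one instead allowed the full laws $P_0,P_1$ to be observed, I would need a construction in which the same $P_y$ decomposes with different $\alpha$. This is also possible: fixed $P_y$ can be written with a smaller replacement weight by absorbing part of $R$ into the $Q_y$. However, I would handle this only as a remark, with the main proof given under the ceiling-only observation model.
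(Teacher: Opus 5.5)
Your proposal is correct and follows the paper's own argument. The ceiling pins down only $\kappa_X=2c-1=(1-\alpha)\tau$, the constraint $\tau\le1$ forces $\alpha\le 2(1-c)$, and choosing $\tau=(2c-1)/(1-\alpha)$ shows every such $\alpha$ is attainable. Your explicit Bernoulli realization of $\tau$ and your separate treatment of the degenerate case $\alpha=1$, $c=1/2$ (where the paper's formula would divide by zero) are small but welcome tightenings of the same proof.
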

\noindent\emph{Proof sketch.} The plateau identifies the product $(1-\alpha)\tau=2c-1$. Every $\alpha$ in the displayed interval is feasible with $\tau=(2c-1)/(1-\alpha)\le1$. Hence, $0.30$ is a limiting compatible value, not an estimated clinical noise rate.

\begin{theorem}[Dobrushin channel bound]
\label{thm:dobrushin}
For a common report kernel $W(dx\mid u)$ with Dobrushin coefficient
$\vartheta(W)=\sup_{u,u'}\TV\{W(\cdot\mid u),W(\cdot\mid u')\}$,
\[
\begin{aligned}
\kappa_X&\le\vartheta(W)\TV(Q_0,Q_1),\\
\cba(X)&\le\tfrac12\{1+\vartheta(W)\TV(Q_0,Q_1)\}.
\end{aligned}
\]
The shared replacement channel has $\vartheta(W)=1-\alpha$.
\end{theorem}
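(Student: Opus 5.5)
The approach is to prove the standard Dobrushin contraction inequality for total variation under the common kernel $W$. The balanced-accuracy bound then follows from Lemma~\ref{lem:ba}, and the value of $\vartheta(W)$ for the replacement channel comes from a direct computation.

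\textbf{Setup.} Write $P_y=Q_yW$, meaning $P_y(B)=\int W(B\mid u)\,Q_y(du)$. Let $\nu:=Q_1-Q_0$, a signed measure with total mass zero and $\|\nu\|_{\TV}=\tau:=\TV(Q_0,Q_1)$. Take the Jordan decomposition $\nu=\nu^+-\nu^-$. Then $\nu^+(\mathcal U)=\nu^-(\mathcal U)=\tau$. If $\tau=0$ there is nothing to prove. Otherwise, $\mu^\pm:=\nu^\pm/\tau$ are probability measures.

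\textbf{Main step.} For any measurable $B$,
\[
P_1(B)-P_0(B)=\tau\int\!\!\int\{W(B\mid u)-W(B\mid u')\}\,\mu^+(du)\,\mu^-(du').
\]
This is where the zero total mass of $\nu$ is used: it lets us couple the positive and negative parts through a product measure. The integrand is bounded by $\sup_{u,u'}\TV\{W(\cdot\mid u),W(\cdot\mid u')\}=\vartheta(W)$. Taking the supremum over $B$ therefore gives $\kappa_X\le\vartheta(W)\,\tau$.

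The only delicate point is measurability. We need $u\mapsto W(B\mid u)$ to be measurable, which is exactly the definition of a Markov kernel. The coupling and supremum steps are then routine, and I expect no real obstacle beyond writing the decomposition carefully. The ceiling bound follows by substituting $\kappa_X\le\vartheta(W)\tau$ into $\cba(X)=\tfrac12(1+\kappa_X)$ from Lemma~\ref{lem:ba}.

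\textbf{Replacement channel.} Here $W(\cdot\mid u)=(1-\alpha)\delta_u+\alpha R$, so for $u\ne u'$,
\[
W(\cdot\mid u)-W(\cdot\mid u')=(1-\alpha)(\delta_u-\delta_{u'}),
\]
whose total variation is $1-\alpha$. Hence $\vartheta(W)=1-\alpha$, provided the latent space contains at least two points. This is consistent with Theorem~\ref{thm:contam}, where the bound is attained with equality.
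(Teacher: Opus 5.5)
Your proposal is correct and follows the same route as the paper, which cites the strong data-processing (Dobrushin contraction) inequality for total variation and observes that the shared $R$ term cancels in the replacement kernel; your Jordan-decomposition coupling argument supplies the standard proof of that cited inequality. Your caveat that $\vartheta(W)=1-\alpha$ requires the latent space to contain two distinguishable points is a valid refinement that the paper leaves implicit.
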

\noindent\emph{Proof sketch.} Apply the strong data-processing inequality for total variation. A class-dependent channel $W_y$ violates the common-channel assumption and may create as well as destroy apparent separation.

\subsection{Estimating the Fixed-Channel Frontier}
Let $M=\tfrac12(P_0+P_1)$ and $\eta_{\rm eq}(x)=p_1(x)/(p_0(x)+p_1(x))$.

\begin{proposition}[Posterior representation]
\label{prop:post}
\[
\begin{aligned}
\kappa_X&=\mathbb E_{X\sim M}|2\eta_{\rm eq}(X)-1|,\\
\cba(X)&=\tfrac12\{1+\mathbb E_M|2\eta_{\rm eq}(X)-1|\}.
\end{aligned}
\]
\end{proposition}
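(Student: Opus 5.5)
The plan is a change of measure from $\mu$ to the mixture $M$, followed by an appeal to Lemma~\ref{lem:ba} for the second identity. Since $M=\tfrac12(P_0+P_1)$ has density $m=\tfrac12(p_0+p_1)$ with respect to $\mu$, the first step records that $P_0,P_1\ll M$. The Radon--Nikodym derivatives are $dP_1/dM=2\eta_{\rm eq}$ and $dP_0/dM=2(1-\eta_{\rm eq})$ on the set $\{p_0+p_1>0\}$. These follow from $p_1/m=2p_1/(p_0+p_1)$ and the analogous expression for $p_0$.

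Next I would rewrite the integrand of $\kappa_X$. On $\{p_0+p_1>0\}$ we have
\[
|p_1-p_0|=(p_0+p_1)\,\bigl|\eta_{\rm eq}-(1-\eta_{\rm eq})\bigr|=2m\,|2\eta_{\rm eq}-1|.
\]
On the complement $\{p_0+p_1=0\}$ both densities vanish, so the integrand is zero there, and that set is $M$-null. This makes the value assigned to $\eta_{\rm eq}$ there irrelevant. It follows that
\[
\kappa_X=\tfrac12\int 2m\,|2\eta_{\rm eq}-1|\,d\mu=\mathbb E_{X\sim M}|2\eta_{\rm eq}(X)-1|.
\]
Substituting this into $\cba(X)=\tfrac12(1+\kappa_X)$ from Lemma~\ref{lem:ba} gives the second display.

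The only delicate point, and it is minor, is the null set where $\eta_{\rm eq}$ is undefined, together with showing that the result does not depend on the choice of dominating measure $\mu$. I would handle this by working with $dP_y/dM$ directly, which removes the $\mu$-dependence. As a sanity check, I would note that $\eta_{\rm eq}$ is the posterior $\Pr(Y=1\mid X)$ when $\pi=\tfrac12$. Under that reading, $X\sim M$ is exactly the marginal law under equal priors, which motivates the cross-fitted plug-in estimator.
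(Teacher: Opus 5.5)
Your proposal is correct and follows the paper's own proof. Both substitute the mixture density $m=(p_0+p_1)/2$ so that the denominator of $\eta_{\rm eq}$ cancels, which yields $\tfrac12\int|p_1-p_0|\,d\mu=\kappa_X$, and both then invoke the balanced-accuracy separation identity for the second display; your explicit handling of the $M$-null set $\{p_0+p_1=0\}$ is a small piece of care that the paper leaves implicit.
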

\noindent\emph{Proof sketch.} Substitute the mixture density $(p_0+p_1)/2$ into the expectation; the denominator cancels and yields $\tfrac12\int|p_1-p_0|$.

On a balanced sample, partition observations into $K$ folds, fit a probabilistic learner on the remaining folds, and collect out-of-fold predictions $\widehat\eta_{-k(i)}(X_i)$. Define
\[
\widehat\kappa_{\rm CF}=\frac1n\sum_{i=1}^n|2\widehat\eta_{-k(i)}(X_i)-1|,
\qquad
\widehat\cba_{\rm CF}=\tfrac12(1+\widehat\kappa_{\rm CF}).
\]

\begin{proposition}[Consistency of the cross-fitted audit]
\label{prop:consistency}
If each out-of-fold posterior estimator is $L^1(M)$-consistent and fold sizes diverge, then
$\widehat\kappa_{\rm CF}\to_p\kappa_X$ and $\widehat\cba_{\rm CF}\to_p\cba(X)$.
\end{proposition}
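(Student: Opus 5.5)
The plan is to split the error into an estimation term and a sampling term, working on the balanced sample where each $X_i$ is marginally distributed as $M$. With $\kappa_X=\mathbb E_M|2\eta_{\rm eq}(X)-1|$ from Proposition~\ref{prop:post}, write
\[
\widehat\kappa_{\rm CF}-\kappa_X
=\underbrace{\frac1n\sum_{i}\Bigl(|2\widehat\eta_{-k(i)}(X_i)-1|-|2\eta_{\rm eq}(X_i)-1|\Bigr)}_{A_n}
+\underbrace{\frac1n\sum_i|2\eta_{\rm eq}(X_i)-1|-\kappa_X}_{B_n}.
\]
The term $B_n$ is handled by the weak law of large numbers. Its summands are i.i.d., bounded in $[0,1]$, and have mean $\kappa_X$ under $M$, so $B_n\to_p0$.

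For $A_n$, use the reverse triangle inequality $\bigl||a|-|b|\bigr|\le|a-b|$. This gives
\[
|A_n|\le\frac{2}{n}\sum_{k=1}^K\sum_{i\in I_k}|\widehat\eta_{-k}(X_i)-\eta_{\rm eq}(X_i)|.
\]
The key step is cross-fitting. Conditional on the training folds $D_{-k}$, the points $X_i$ with $i\in I_k$ are independent of $\widehat\eta_{-k}$ and distributed as $M$. Hence the conditional expectation of the inner fold average is $\|\widehat\eta_{-k}-\eta_{\rm eq}\|_{L^1(M)}$. By the $L^1(M)$-consistency hypothesis, which applies because fold sizes, and hence training sizes, diverge, this quantity tends to $0$ in probability.

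Each summand lies in $[0,1]$, so the conditional expectation is bounded. Dominated convergence then gives
\[
\mathbb E\,|A_n|\le 2\sum_k\frac{|I_k|}{n}\,\mathbb E\|\widehat\eta_{-k}-\eta_{\rm eq}\|_{L^1(M)}\to0.
\]
Markov's inequality turns this into $A_n\to_p0$. This uses that $K$ is fixed, or at least that the weights $|I_k|/n$ sum to one.

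Combining the two terms, $\widehat\kappa_{\rm CF}\to_p\kappa_X$. The statement for $\widehat\cba_{\rm CF}$ follows from the continuous mapping $\kappa\mapsto\tfrac12(1+\kappa)$ together with Lemma~\ref{lem:ba}.

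The main obstacle is stating the cross-fitting independence step cleanly. Two points need care. First, "balanced sample" must be read as the evaluation points being draws from $M$; I would handle this by assuming i.i.d. sampling from the equal-prior mixture, or by class-stratified sampling with equal class counts and averaging the class-wise laws of large numbers. Second, passing from convergence of the random $L^1$ norm in probability to convergence in expectation needs boundedness, which holds once the predicted posteriors are clipped to $[0,1]$.
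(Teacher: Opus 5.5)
Your proposal is correct and follows the same route as the paper's proof. The $2$-Lipschitz bound on $a\mapsto|2a-1|$ reduces the plug-in error to $2|\widehat\eta-\eta_{\rm eq}|$, $L^1(M)$-consistency removes that term, a law of large numbers handles the sampling term, and your conditional-on-training-folds expectation with Markov's inequality makes rigorous the step the paper summarizes as ``$L^1$ consistency controls this term.''
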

\noindent\emph{Proof sketch.} The map $a\mapsto|2a-1|$ is $2$-Lipschitz. Posterior $L^1$ error controls plug-in error, and cross-fitting permits a foldwise law of large numbers. In practice, underfitting tends to bias the estimate downward; multiple flexible learners and bootstrap intervals should be reported.

\subsection{Prospective Measurement Interventions}
A meaningful reliability claim requires an explicit measurement model rather than equating contamination probability with Cronbach's alpha. Suppose
$U\mid Y=y\sim\mathcal N(\mu_y,\sigma_U^2)$ and repeated reports satisfy
$X_j=U+\varepsilon_j$, with independent $\varepsilon_j\sim\mathcal N(0,\sigma_\varepsilon^2)$. Let
$\rho=\sigma_U^2/(\sigma_U^2+\sigma_\varepsilon^2)$ and $d'_U=|\mu_1-\mu_0|/\sigma_U$.

\begin{theorem}[Reliability--repetition ceiling law]
\label{thm:repeat}
For the average $\bar X_m=m^{-1}\sum_{j=1}^mX_j$,
\[
\begin{aligned}
d'_m&=d'_U\sqrt{\frac{m\rho}{1+(m-1)\rho}},\\
\cba(\bar X_m)&=\Phi\!\left(
\frac{d'_U}{2}\sqrt{\frac{m\rho}{1+(m-1)\rho}}\right).
\end{aligned}
\]
For $d'_U>0$, the ceiling increases with $m$ when $\rho<1$ and converges to the latent-score ceiling $\Phi(d'_U/2)$.
\end{theorem}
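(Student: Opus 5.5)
The plan is to reduce the problem to two univariate Gaussian laws with a common variance and then apply Lemma~\ref{lem:ba}. Conditional on $Y=y$, we have $\bar X_m=U+\bar\varepsilon_m$, where $\bar\varepsilon_m=m^{-1}\sum_j\varepsilon_j\sim\mathcal N(0,\sigma_\varepsilon^2/m)$. Since $\bar\varepsilon_m$ is independent of $U$ given $Y$, it follows that $\bar X_m\mid Y=y\sim\mathcal N(\mu_y,\sigma_U^2+\sigma_\varepsilon^2/m)$. I read the model as also taking the errors $\varepsilon_j$ independent of $(U,Y)$; this is implicit in the statement and I will make it explicit. The two class laws therefore share the variance $s_m^2=\sigma_U^2+\sigma_\varepsilon^2/m$. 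The standardized separation is $d'_m=|\mu_1-\mu_0|/s_m=d'_U\,\sigma_U/s_m$.

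To match the displayed form, I rewrite $\sigma_U^2/s_m^2$ in terms of $\rho$. Using $\sigma_\varepsilon^2/\sigma_U^2=(1-\rho)/\rho$, we get
\[
\frac{\sigma_U^2}{s_m^2}=\frac{1}{1+(1-\rho)/(m\rho)}=\frac{m\rho}{m\rho+1-\rho}=\frac{m\rho}{1+(m-1)\rho}.
\]
This is the Spearman--Brown reliability of the $m$-average, and taking the square root gives the formula for $d'_m$.

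Next I compute $\kappa$ for two equal-variance Gaussians. By Lemma~\ref{lem:ba}, the optimal rule is the likelihood-ratio rule. Because the variances are equal, this rule thresholds $\bar X_m$ at the midpoint $(\mu_0+\mu_1)/2$. Each class is then misclassified with probability $\Phi(-d'_m/2)$, so $\TPR=\TNR=\Phi(d'_m/2)$. Hence $\cba(\bar X_m)=\Phi(d'_m/2)$, or equivalently $\kappa=2\Phi(d'_m/2)-1$. If $\mu_0=\mu_1$, both sides equal $1/2$, so that case is trivial.

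For monotonicity, set $f(m)=m\rho/(1+(m-1)\rho)$ and treat $m$ as a continuous variable. Its derivative is $f'(m)=\rho(1-\rho)/(1+(m-1)\rho)^2$, which is strictly positive when $0<\rho<1$. Because $\Phi$ is strictly increasing and $d'_U>0$, the ceiling is strictly increasing in $m$. As $m\to\infty$, $f(m)\to1$, and continuity of $\Phi$ gives the limit $\Phi(d'_U/2)$. This limit is exactly $\cba(U)$ by the same Gaussian computation applied to $U$ itself.

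As a consistency check, Lemma~\ref{lem:dpi} applies because $Y\to U\to\bar X_m$ is a Markov chain. It confirms that $\Phi(d'_U/2)$ upper-bounds every $m$, matching $f(m)\le1$.

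There is no real obstacle in this argument. The only delicate points are the independence of the errors from $(U,Y)$, which makes the class-conditional laws Gaussian with equal variance, and the algebra converting the noise variance into the Spearman--Brown factor in $\rho$.
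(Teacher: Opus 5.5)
Your proposal is correct and follows the paper's proof essentially step for step. Both derive $\bar X_m\mid Y=y\sim\mathcal N(\mu_y,\sigma_U^2+\sigma_\varepsilon^2/m)$ (using the independence of the errors from $(U,Y)$, which the paper's repeated-measurement assumption states explicitly), rewrite the variance ratio as the Spearman--Brown factor, and use the midpoint threshold for equal-variance Gaussians to get $\Phi(d'_m/2)$; your explicit derivative for monotonicity and the data-processing check just spell out what the paper says ``follows directly.''
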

\noindent\emph{Proof sketch.} Averaging reduces error variance to $\sigma_\varepsilon^2/m$; equal-variance Gaussian discrimination has balanced accuracy $\Phi(d'/2)$. The result gives a forward, independently parameterized prediction.

\begin{theorem}[Multimodal non-decrease and strictness]
\label{thm:multi}
For an auxiliary modality $Z$, $\cba(X,Z)\ge\cba(X)$, with strict inequality iff
\[
\TV(P^{XZ}_0,P^{XZ}_1)>\TV(P^X_0,P^X_1).
\]
For raw $0$--$1$ risk, with $\eta(X,Z)=P(Y=1\mid X,Z)$ and $\eta_X(X)=P(Y=1\mid X)$,
\[
\begin{aligned}
\Rstar(X)-\Rstar(X,Z)
&=\mathbb E|\eta(X,Z)-1/2|\\
&\quad-\mathbb E|\eta_X(X)-1/2|\ge0.
\end{aligned}
\]
Equality holds iff, conditional on almost every $X=x$, $\eta(x,Z)-1/2$ does not change sign almost surely.
\end{theorem}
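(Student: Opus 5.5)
The plan has two parts: the balanced-accuracy statement, which reduces to earlier lemmas, and the raw-risk statement, which is a direct computation.

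\textbf{Balanced accuracy.} The projection $(X,Z)\mapsto X$ is a deterministic representation of $(X,Z)$. So Lemma~\ref{lem:dpi}, applied with $(X,Z)$ in place of $X$ and $T=X$, gives
\[
\TV(P^X_0,P^X_1)\le\TV(P^{XZ}_0,P^{XZ}_1).
\]
Lemma~\ref{lem:ba} states $\cba(\cdot)=\tfrac12\{1+\TV\}$ for any observed variable, so $\cba(X,Z)\ge\cba(X)$. Because this map from $\TV$ to $\cba$ is strictly increasing, strict inequality of ceilings holds exactly when the total-variation inequality is strict. This step is immediate.

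\textbf{Raw risk: Bayes risk as a posterior functional.} I will first write Bayes risk in terms of the posterior. For any observation $W$ with posterior $\eta_W=P(Y=1\mid W)$, the pointwise Bayes risk is $\min\{\eta_W,1-\eta_W\}=\tfrac12-|\eta_W-\tfrac12|$. Hence
\[
\Rstar(W)=\tfrac12-\mathbb E|\eta_W(W)-\tfrac12|.
\]
This is the raw-risk identity of Lemma~\ref{lem:ba}, rewritten through $\min(a,b)=(a+b-|a-b|)/2$. Applying it to $W=(X,Z)$ and to $W=X$ and subtracting gives the displayed difference formula.

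\textbf{Raw risk: nonnegativity and the equality case.} By the tower property, $\eta_X(X)=\mathbb E[\eta(X,Z)\mid X]$. Conditional Jensen for the convex map $a\mapsto|a-\tfrac12|$ then gives, for almost every $x$,
\[
|\eta_X(x)-\tfrac12|=\bigl|\mathbb E[\eta(x,Z)-\tfrac12\mid X=x]\bigr|\le\mathbb E\bigl[|\eta(x,Z)-\tfrac12|\,\big|\,X=x\bigr].
\]
Taking expectations over $X$ yields the difference $\ge0$.

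The main obstacle is characterizing equality sharply. The difference is the expectation of a pointwise-nonnegative conditional gap, so it vanishes iff that gap is zero for a.e.\ $x$. For a fixed $x$, write $V=\eta(x,Z)-\tfrac12$. Then $|\mathbb E V|=\mathbb E|V|$ holds iff $V\ge0$ a.s.\ or $V\le0$ a.s. To verify this, suppose $\mathbb E V\ge0$. Then $\mathbb E|V|-\mathbb E V=2\mathbb E V^-$, which is zero iff $V^-=0$ a.s.; the case $\mathbb E V\le0$ is symmetric. This is exactly the stated condition that $\eta(x,Z)-\tfrac12$ does not change sign. It also matches the decision-flip interpretation: the Bayes decision based on $(X,Z)$ differs from the one based on $X$ only on a null set, with ties at $\tfrac12$ harmless. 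The one point needing care is that conditional distributions are defined only for a.e.\ $x$, so I will state everything with regular conditional probabilities and ``almost every $x$'' qualifiers.
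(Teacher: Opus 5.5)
Your proposal is correct and matches the paper's proof step for step. For balanced accuracy you use data processing under the projection $(X,Z)\mapsto X$ together with the strictly increasing map $\kappa\mapsto\tfrac12(1+\kappa)$; for raw risk you use the posterior form $\Rstar(W)=\tfrac12-\mathbb{E}|\eta_W(W)-\tfrac12|$ with the tower property and conditional Jensen. Your identity $\mathbb{E}|V|-\mathbb{E}V=2\mathbb{E}V^-$ (when $\mathbb{E}V\ge0$) usefully makes explicit the equality case, which the paper only asserts by saying the conditional support must lie in one affine region of $|\cdot|$.
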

\noindent\emph{Proof sketch.} Marginalization from $(X,Z)$ to $X$ contracts total variation. The raw-risk result follows from conditional Jensen. Positive $I(Z;Y\mid X)$ alone may refine confidence without crossing a decision boundary.

\begin{proposition}[Gaussian multimodal complementarity]
\label{prop:gaussmulti}
Let $X^{(1)},\ldots,X^{(M)}$ be conditionally independent given $Y$, with
$X^{(j)}\mid Y=y\sim\mathcal N(\mu_{jy},\Sigma_j)$ and common within-modality covariance. Define
\[
d_j^2=(\mu_{j1}-\mu_{j0})^\top
\Sigma_j^{-1}(\mu_{j1}-\mu_{j0}).
\]
Then
\[
\cba(X^{(1)},\ldots,X^{(M)})
=\Phi\!\left(\frac12\sqrt{\sum_{j=1}^Md_j^2}\right).
\]
Any added modality with $d_j>0$ strictly raises a non-perfect joint frontier.
\end{proposition}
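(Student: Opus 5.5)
The plan is to reduce the joint problem to a single equal-covariance Gaussian discrimination problem and then apply Lemma~\ref{lem:ba} directly. Stack the modalities into $X=(X^{(1)},\ldots,X^{(M)})$. Conditional independence given $Y$ with Gaussian marginals means $X\mid Y=y\sim\mathcal N(\mu_y,\Sigma)$, where $\mu_y=(\mu_{1y},\ldots,\mu_{My})$ and $\Sigma=\mathrm{diag}(\Sigma_1,\ldots,\Sigma_M)$ is block diagonal and shared across classes. We assume each $\Sigma_j$ is positive definite, as the definition of $d_j$ requires. The joint Mahalanobis distance then splits blockwise:
\[
d^2:=(\mu_1-\mu_0)^\top\Sigma^{-1}(\mu_1-\mu_0)=\sum_{j=1}^M d_j^2 .
\]

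Next we compute $\kappa_X$ for two Gaussians with a common covariance. Let $\Delta=\mu_1-\mu_0$. The log-likelihood ratio is affine in $x$:
\[
\log\frac{p_1(x)}{p_0(x)}=\Delta^\top\Sigma^{-1}x-\tfrac12(\mu_1^\top\Sigma^{-1}\mu_1-\mu_0^\top\Sigma^{-1}\mu_0).
\]
So the equal-prior Bayes region $\{p_1\ge p_0\}$ from Lemma~\ref{lem:ba} is a half-space. Under class $y$, the statistic $S=\Delta^\top\Sigma^{-1}(x-\tfrac12(\mu_0+\mu_1))$ is distributed as $\mathcal N(\pm d^2/2,\,d^2)$, with the plus sign for class $1$. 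Hence
\[
P_1(A)=\Phi(d/2),\qquad P_0(A)=1-\Phi(d/2),
\]
and $\BA=\tfrac12\{P_1(A)+1-P_0(A)\}=\Phi(d/2)$. The same computation shows $\kappa_X=2\Phi(d/2)-1$. By Lemma~\ref{lem:ba} this rule is optimal, so $\cba(X)=\Phi(\tfrac12\sqrt{\sum_j d_j^2})$. The one degenerate case is $d=0$: then $P_0=P_1$ and the ceiling is $1/2=\Phi(0)$, so the formula still holds.

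For strictness, let $D$ be the Mahalanobis distance of the current collection of modalities. Adding a modality with $d_j>0$ replaces $D^2$ by $D^2+d_j^2>D^2$. Since $\Phi$ is strictly increasing on $\mathbb R$, the ceiling strictly increases. This is consistent with Theorem~\ref{thm:multi}. Within this model the frontier is never perfect, because $\Phi<1$ everywhere, so the qualifier ``non-perfect'' holds automatically. It would matter only if degenerate (singular) covariances were allowed, in which case $d=\infty$ and the ceiling equals $1$; we exclude this via positive definiteness.

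The only step with real content is the block-diagonal decomposition of the Mahalanobis distance and the half-space computation. I would also note explicitly that $\Sigma$ is common across classes: this is what makes the likelihood ratio affine, which is why the formula uses only the $d_j$.
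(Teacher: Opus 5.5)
Your proof is correct and follows the same route as the paper: conditional independence gives a block-diagonal joint covariance, the squared Mahalanobis separations add, and equal-covariance Gaussian discrimination has ceiling $\Phi(d/2)$. The only difference is that you derive $\Phi(d/2)$ explicitly from the affine log-likelihood ratio and the separation identity (Lemma~\ref{lem:ba}), whereas the paper simply asserts it; your remark that the ``non-perfect'' qualifier holds automatically under positive-definite covariances is also accurate.
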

\noindent\emph{Proof sketch.} Conditional independence gives block-diagonal covariance, so squared Mahalanobis separations add. Importantly, $\max\{\cba(X),\cba(Z)\}$ is only a lower bound on the joint ceiling.

\section{Controlled Validation}
We retain simulations only where the population frontier is analytically known. Figure~\ref{fig:synthetic} summarizes four complementary checks. In the replacement experiment (a), latent classes have disjoint nonlinear supports and the full feature vector is replaced by a class-independent draw with probability $\alpha$, so Theorem~\ref{thm:contam} gives the exact envelope $\cba=1-\alpha/2$. Flexible nonlinear learners approach this frontier, whereas logistic regression remains below it because its decision class cannot express the radial boundary; at $\alpha=0.30$, the controlled ceiling is exactly $0.85$. In the Gaussian multimodal experiment (b), the marginal ceilings are $0.80$ for $X$ and $0.90$ for $Z$, while Proposition~\ref{prop:gaussmulti} gives the complementary joint ceiling $0.9374$, which exceeds both marginals rather than merely matching their maximum. Panel (c) illustrates Theorem~\ref{thm:repeat}: repeated measurements raise the frontier along the predicted saturating curves for $\rho\in\{0.30,0.60,0.90\}$ but cannot exceed the latent-score ceiling $0.95$. This remains a prospective theoretical prediction and is not empirically validated by the three real cohorts. Finally, panel (d) applies the cross-fitted estimator to the contamination experiment and closely recovers the known frontier across $\alpha\in\{0,.1,\ldots,.5\}$. Together, the panels verify the exact population identities, the distinction between marginal and complementary channels, and the ability of the proposed audit to recover a known fixed-channel frontier under controlled conditions.

\begin{figure}[t]
\centering
\begin{minipage}[t]{.49\columnwidth}
\centering
\includegraphics[width=\linewidth]{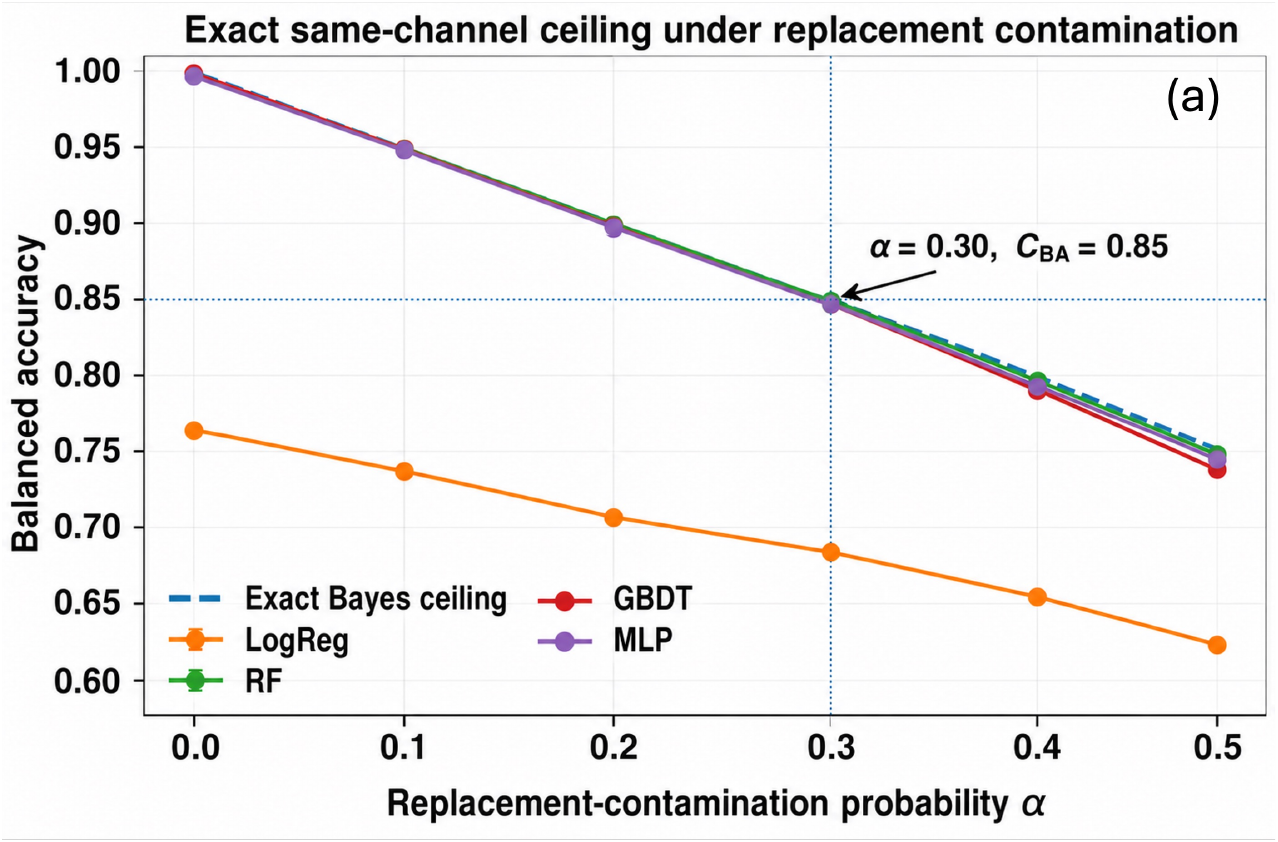}
\end{minipage}\hfill
\begin{minipage}[t]{.49\columnwidth}
\centering
\includegraphics[width=\linewidth]{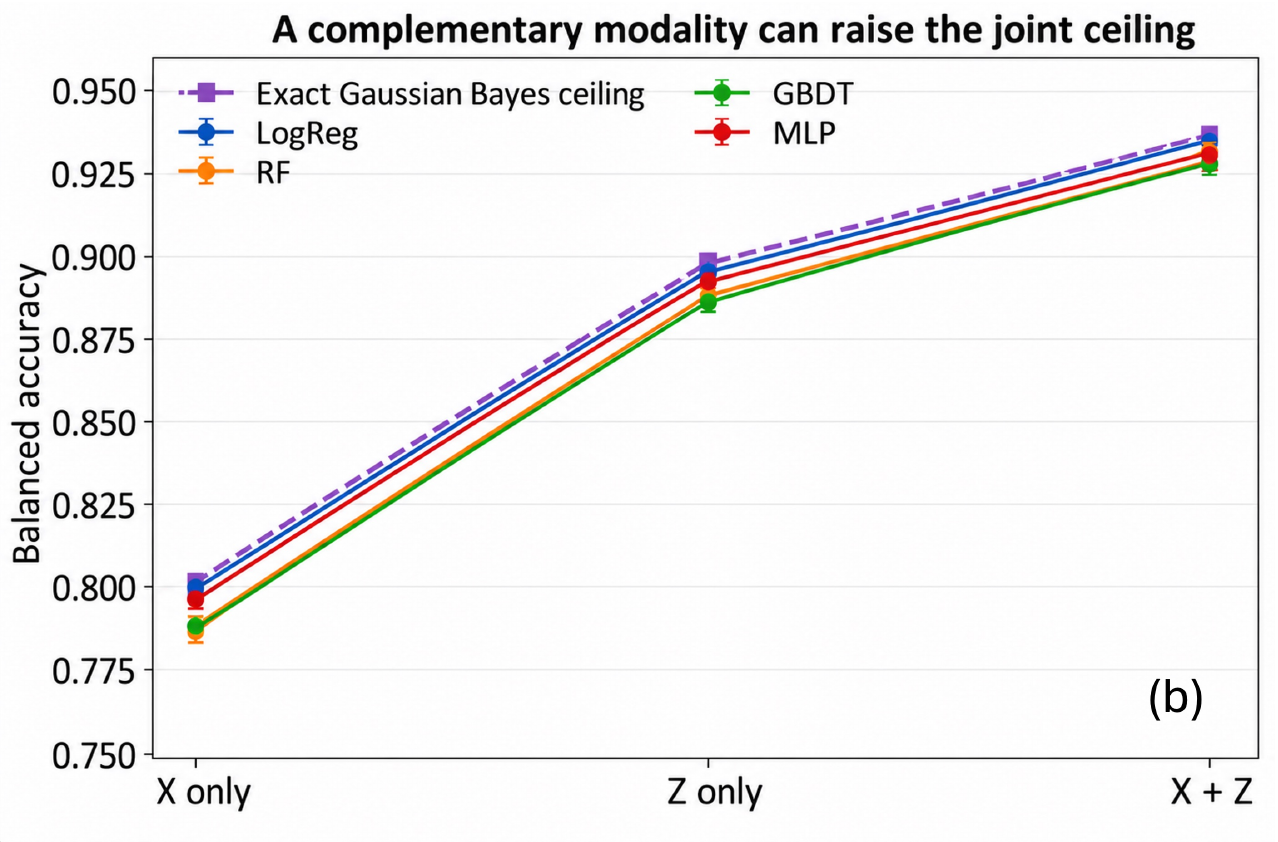}
\end{minipage}

\begin{minipage}[t]{.49\columnwidth}
\centering
\includegraphics[width=\linewidth]{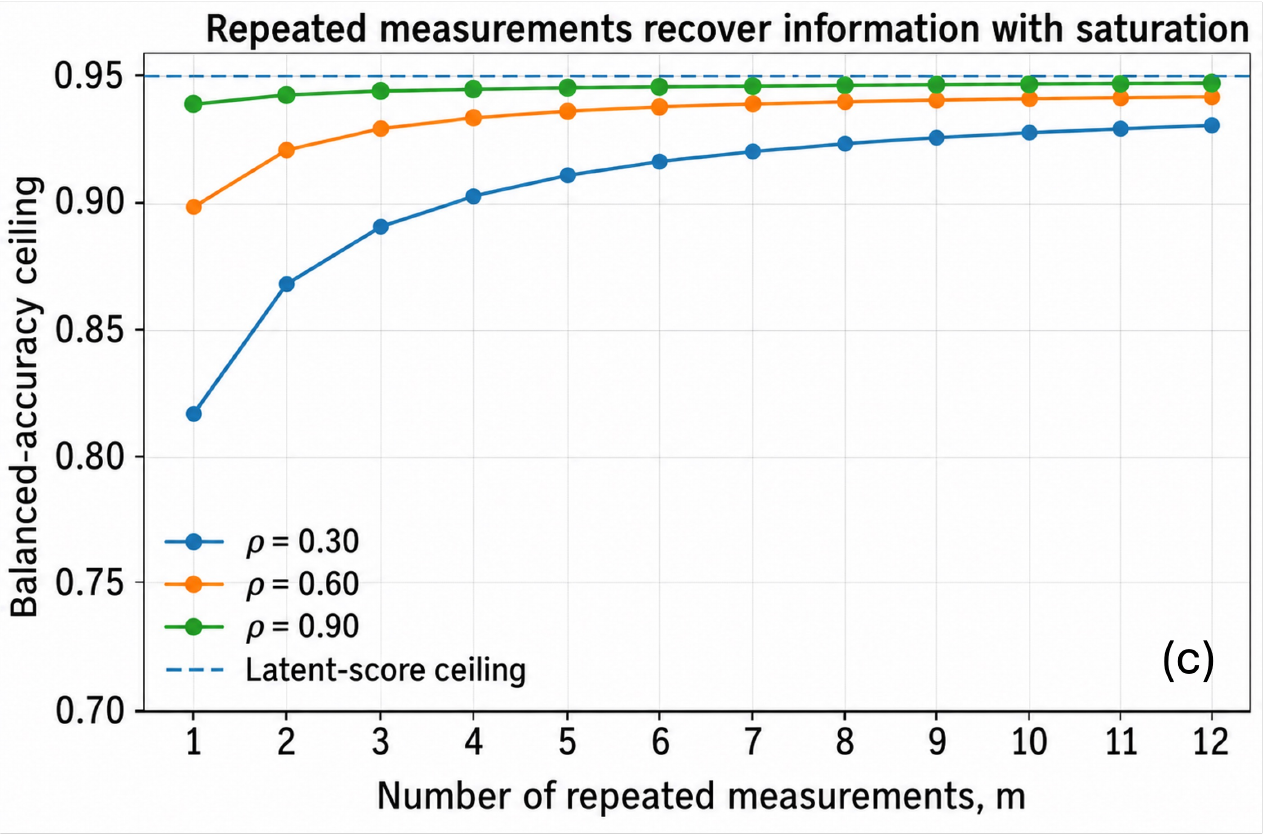}
\end{minipage}\hfill
\begin{minipage}[t]{.49\columnwidth}
\centering
\includegraphics[width=\linewidth]{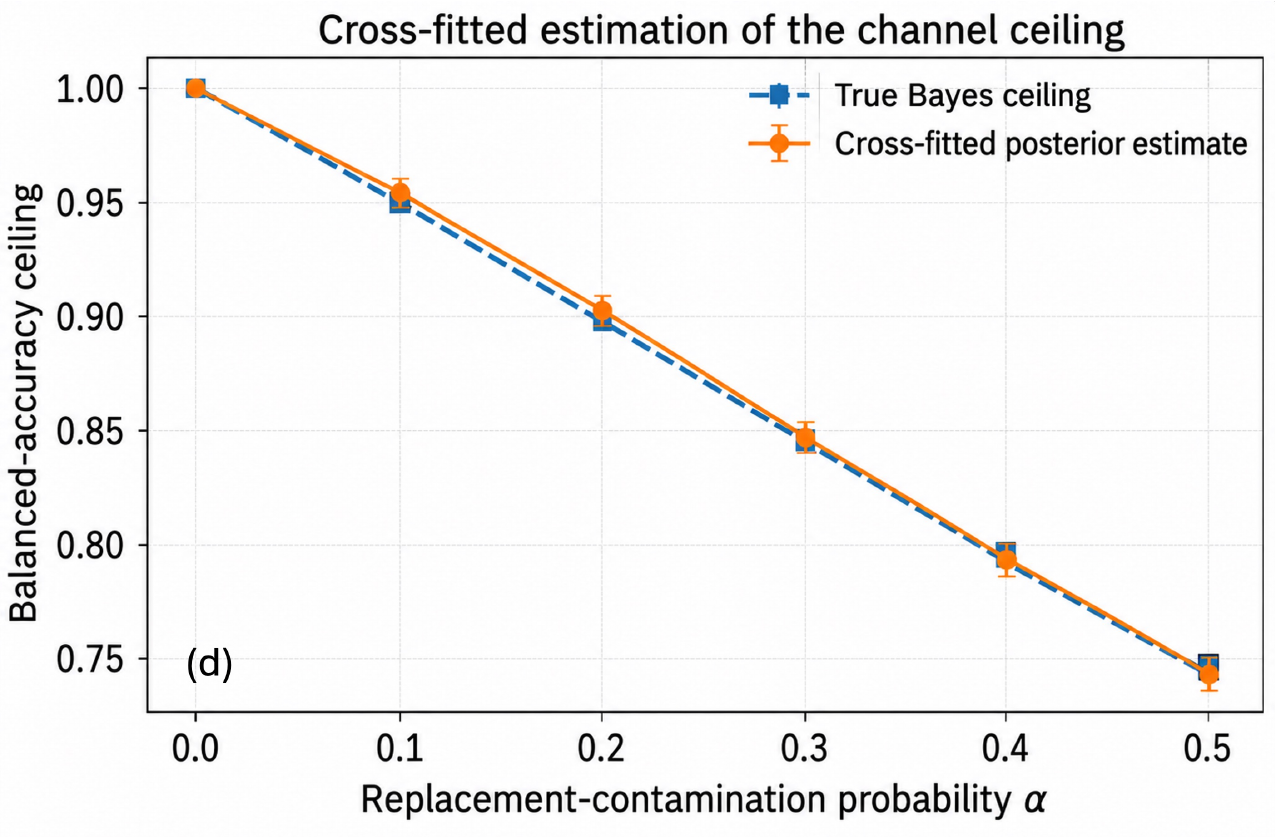}
\end{minipage}
\caption{Controlled validation with known frontiers. (a) Replacement-contamination envelope. (b) Gaussian complementarity, with the joint frontier exceeding both marginals. (c) Reliability--repetition curves approaching the latent-score ceiling. (d) Cross-fitted recovery of the contamination frontier. Panels (a), (b), and (d) validate the audit mechanisms; panel (c) is a prospective prediction.}
\vspace{-7mm}
\label{fig:synthetic}
\end{figure}

\section{Real-Cohort Frontier Audits}
\subsection{Audit Design and Cohorts}
For each cohort, probabilistic learners generate out-of-fold posterior estimates. We evaluate the equal-prior functional in Proposition~\ref{prop:post}, report its bootstrap interval, and define learner headroom as $G=\widehat\cba_{\rm CF}-\BA_{\rm best}$. The permutation-null optimism floor is $\widehat\cba_{\rm perm}-0.5$. The underfit curve refits the posterior learner at training fractions $0.25/0.5/0.75/1.0$. A positive final increment indicates that the ceiling estimate remains a lower bound; stabilization or small oscillation supports convergence. Full algorithms, importance weighting, preprocessing, and hyperparameters are in the supplement. Implementations use scikit-learn \cite{pedregosa2011scikit}.

The UCI Diabetes 130-US Hospitals cohort \cite{strack2014impact} contains $99{,}343$ encounters from $69{,}990$ patients after removing death and hospice discharges from $101{,}766$ raw encounters; prevalence is $0.1139$. Cross-fitting and bootstrap resampling are patient-grouped. BRFSS 2015 contributes $253{,}680$ respondents with prevalence $0.1393$; all predictors are telephone-survey self-reports \cite{cdcbrfss2015}. NHANES 2015--2018 contains $10{,}219$ adults with measured glycohemoglobin and prevalence $0.1409$ \cite{nhanes20152016,nhanes20172018}. Its outcome is HbA1c $\ge6.5\%$; questionnaire, measured, and joint channels are audited after explicitly excluding all glycemic analytes from the predictors.

\begin{table*}[t]
\centering
\scriptsize
\caption{Three-cohort frontier audit. All ceilings and achieved values are balanced accuracy; AUROC is reported in its own column. ``Floor'' is the permutation-null excess above $0.5$.}
\label{tab:audit}
\begin{tabular}{lrrrrrrl}
\toprule
Cohort & $n$ & Prev. & Ceiling (95\% CI) & Floor & AUROC & Best BA / $G$ & Underfit verdict\\
\midrule
UCI readmission & $99{,}343$ & $0.1139$ & $0.6225$ $[0.6213,0.6240]$ & $+0.0147$ & $0.6692$ & $0.6223/+0.0002$ & lower bound; final $+0.0029$\\
BRFSS diabetes & $253{,}680$ & $0.1393$ & $0.7522$ $[0.7515,0.7529]$ & $+0.0045$ & $0.8298$ & $0.7518/+0.0003$ & converged; final $-0.0002$\\
NHANES HbA1c, joint & $10{,}219$ & $0.1409$ & $0.7623$ $[0.7582,0.7659]$ & $+0.0309$ & $0.8419$ & $0.7541/+0.0081$ & converged; oscillating\\
\bottomrule
\end{tabular}
\end{table*}

\paragraph{Metric scale.}
The literature band is AUROC, whereas the audit frontier is balanced accuracy. Under a single-index approximation,
$\BA=\Phi((\sqrt{2}/2)\Phi^{-1}(\mathrm{AUROC}))$: AUROC $0.78/0.80/0.85/0.88$ maps to BA $0.707/0.724/0.768/0.797$. Thus the literature region corresponds approximately to BA $0.71$--$0.80$, which is the converted band shaded in Figure~\ref{fig:frontiers}. We do not compare a BA ceiling to the unconverted AUROC interval.

\begin{figure}[t]
\centering
\includegraphics[width=.98\columnwidth]{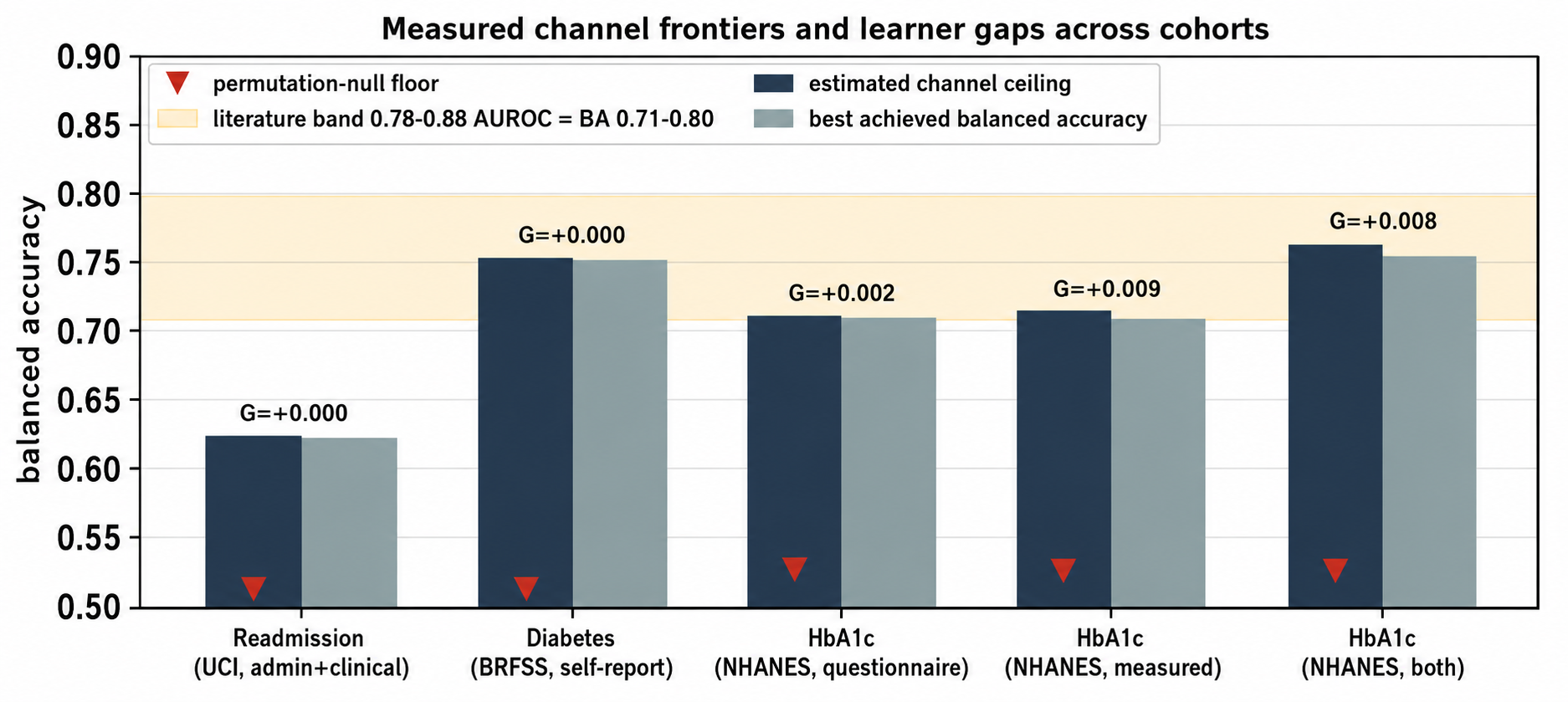}
\caption{Measured channel frontiers and learner gaps across five configurations. The shaded region is the literature reported balanced accuracy band. UCI lies below it; BRFSS and the NHANES joint channel lie within it. These placements describe the cohorts and do not validate a universal band.}
\vspace{-7mm}
\label{fig:frontiers}
\end{figure}

\subsection{Cohort Results}
\paragraph{UCI readmission: the low result is channel-limited.}
The ceiling is $0.6225$ with 95\% CI $[0.6213,0.6240]$, and the best balanced accuracy is $0.6223$, giving $G=+0.0002$. The published study on this cohort reports XGBoost AUROC $0.667$ \cite{emijohnson2025predicting}; our patient-grouped cross-fitted GBDT obtains $0.6692$, a difference of $+0.0022$ under stricter validation than its encounter-level 80/20 split. The audit therefore measures the same task reported in the literature. It also resolves the earlier speculative appeal to unobserved social determinants: the observed $0.667$ is already close to what the recorded variables support, rather than evidence of a failed algorithm. Because the underfit curve ends $0.6102,0.6116,0.6196,0.6225$, with final increment $+0.0029$, the estimated frontier is reported honestly as a lower bound. Splitting administrative channel $A$ from clinical channel $B$ yields ceilings $0.5753$, $0.5961$, and $0.6206$ for $A$, $B$, and $A+B$; AUROC rises $0.6058\to0.6674$, while the decision-flip rate is $0.2996$ and risk gain is $0.0454$.

\paragraph{BRFSS: the strongest audit.}
BRFSS has ceiling $0.7522$ $[0.7515,0.7529]$, best balanced accuracy $0.7518$, $G=+0.0003$, and the smallest optimism floor, $+0.0045$, consistent with its largest sample. Its underfit sequence $0.7478,0.7520,0.7524,0.7522$ has final increment $-0.0002$ and is treated as converged. The perception channel $A$ (GenHlth, MentHlth, PhysHlth, DiffWalk) and recalled-diagnosis channel $B$ (HighBP, HighChol, CholCheck, Stroke, HeartDiseaseorAttack, BMI) have ceilings $0.6917$ and $0.7165$; their joint ceiling is $0.7413$, a complementarity gain of $+0.0248$. Recalled diagnoses transmit prior objective measurement through memory, a different pathway from subjective symptom perception and a concrete instance of recall-mediated channel distortion. AUROC rises $0.7488\to0.8160$, flip rate is $0.1907$, and risk gain is $0.0496$.

\paragraph{NHANES: a null marginal contrast and significant complementarity.}
The questionnaire channel has ceiling $0.7110$ $[0.7082,0.7141]$; the measured channel has $0.7152$ $[0.7118,0.7190]$. Their difference is $+0.0042$, and the intervals overlap: this is a null result, not evidence that measured variables dominate. The joint frontier is $0.7623$ $[0.7582,0.7659]$, giving complementarity $+0.0471$; its interval is disjoint from the measured-channel interval. The gain therefore arises from complementary decision information, not from either marginal channel being intrinsically cleaner. The joint AUROC is $0.8419$, the flip rate is $0.2105$, and risk gain is $0.0513$. The underfit curve $0.7456,0.7639,0.7602,0.7623$ drops $0.0037$ between fractions $0.5$ and $0.75$ and is treated as oscillating within noise rather than rising.

We used HistGB \cite{ke2017lightgbm} with early stopping, \texttt{max\_leaf\_nodes=15}, \texttt{min\_samples\_leaf=50}, $\ell_2=1.0$, learning rate $0.05$, native NaN handling, and no sentinel imputation; the floor falls to $+0.0309$. Our diagnostic results demonstrate 
that the plug-in estimator requires regularization or larger samples for performing its intended function.

\begin{figure}[t]
\centering
\includegraphics[width=.99\columnwidth]{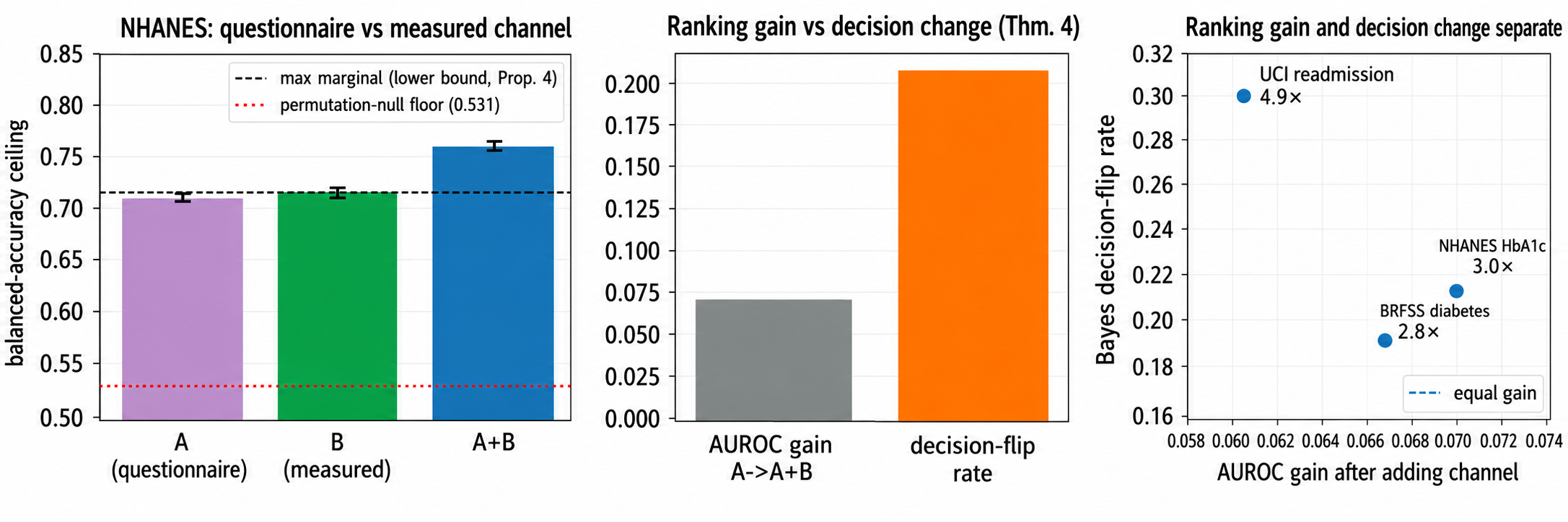}
\caption{Channel complementarity and ranking--decision separation. Left: NHANES marginal frontiers overlap, while the joint frontier is higher. Middle: the joint channel yields AUROC gain $+0.0705$ with decision-flip rate $0.2105$. Right: this separation replicates across UCI, BRFSS, and NHANES, with AUROC gains $+0.0617/+0.0672/+0.0705$ and flip rates $0.2996/0.1907/0.2105$.}
\vspace{-7mm}
\label{fig:complementarity}
\end{figure}

\subsection{Two Replicated Regularities}
The cohort experiments reveal two replicated regularities.
Figure~\ref{fig:complementarity} summarizes the separation between
ranking improvement and Bayes decision change, while
Table~\ref{tab:learners} compares performance and 
frontiers across learners.

\textbf{Ranking gain and decision change separate.} Across channels, AUROC gains are $+0.0617$, $+0.0672$, and $+0.0705$, whereas flip rates are $0.2996$, $0.1907$, and $0.2105$: decision changes are $4.9\times$, $2.8\times$, and $3.0\times$ larger. The same dissociation appears across learners. The MLP reaches AUROC $0.8180$ on BRFSS and $0.8385$ on NHANES while balanced accuracy collapses to $0.5793$ and $0.5724$, leaving gaps $+0.1642$ and $+0.1933$. In UCI, RF ranks second by AUROC at $0.6648$ but has balanced accuracy $0.5349$ and gap $+0.0861$. Good ranking can coexist with poor thresholded decisions, precisely the distinction formalized by Theorem~\ref{thm:multi}.

\textbf{Architecture invariance is visible within cohorts.}
Table~\ref{tab:learners} shows that, across logistic regression, random
forest, gradient boosting, and MLP, the estimated frontiers span only
$0.0087$ on BRFSS and $0.0195$ on NHANES, whereas achieved balanced
accuracy spans $0.173$ and $0.182$, respectively. Thus, learners with
substantially different decision performance nevertheless recover similar
fixed-channel frontiers. This within-cohort comparison removes the disease,
dataset, and metric confounding present in the literature-level model-family
analysis. It also addresses a potential tautology concern: if $G\approx0$
were mechanically induced because the frontier and achieved decisions are
derived from the same fitted posterior, the audit could not reveal gaps of
$+0.0861$ for RF on UCI or $+0.1642$ and $+0.1933$ for MLP on BRFSS and
NHANES. The near-zero GBDT gaps are therefore empirical findings rather
than algebraic artifacts.

\begin{table*}[t]
\centering
\scriptsize
\caption{Each entry is achieved BA/estimated ceiling. Large deficient-learner gaps coexist with tightly clustered frontier estimates.}
\label{tab:learners}
\begin{tabular}{lccc}
\toprule
Learner & UCI readmission & BRFSS diabetes & NHANES joint HbA1c\\
\midrule
Logistic regression & $0.6016/0.6055$ & $0.7461/0.7481$ & $0.7426/0.7461$\\
Random forest & $0.5349/0.6211$ & $0.7269/0.7478$ & $0.7358/0.7554$\\
Gradient boosting & $0.6223/0.6225$ & $0.7518/0.7522$ & $0.7541/0.7623$\\
MLP & $0.5146/0.5673$ & $0.5793/0.7435$ & $0.5724/0.7656$\\
\midrule
Ceiling spread & --- & $0.0087$ & $0.0195$\\
Achieved-BA spread & --- & $0.173$ & $0.182$\\
\bottomrule
\end{tabular}
\end{table*}

\section{Large-Scale Empirical Observations Across Clinical Prediction}
The cohort audits test the method under fixed outcomes, channels, and validation designs. To assess external scope, we conduct a PRISMA-guided umbrella synthesis of 
104 task-level observations spanning more than 18 disease categories \cite{page2021prisma,aromataris2015summarizing}, (see \textit{supplementary section 7.1, Table 5}). Outcomes, horizons, prevalence, validation, and metrics differ, so the results are not pooled as a common estimand, rather we test whether the qualitative learner--channel patterns recur across clinical domains.

\begin{figure*}[t]
\centering
\includegraphics[width=.9\textwidth]{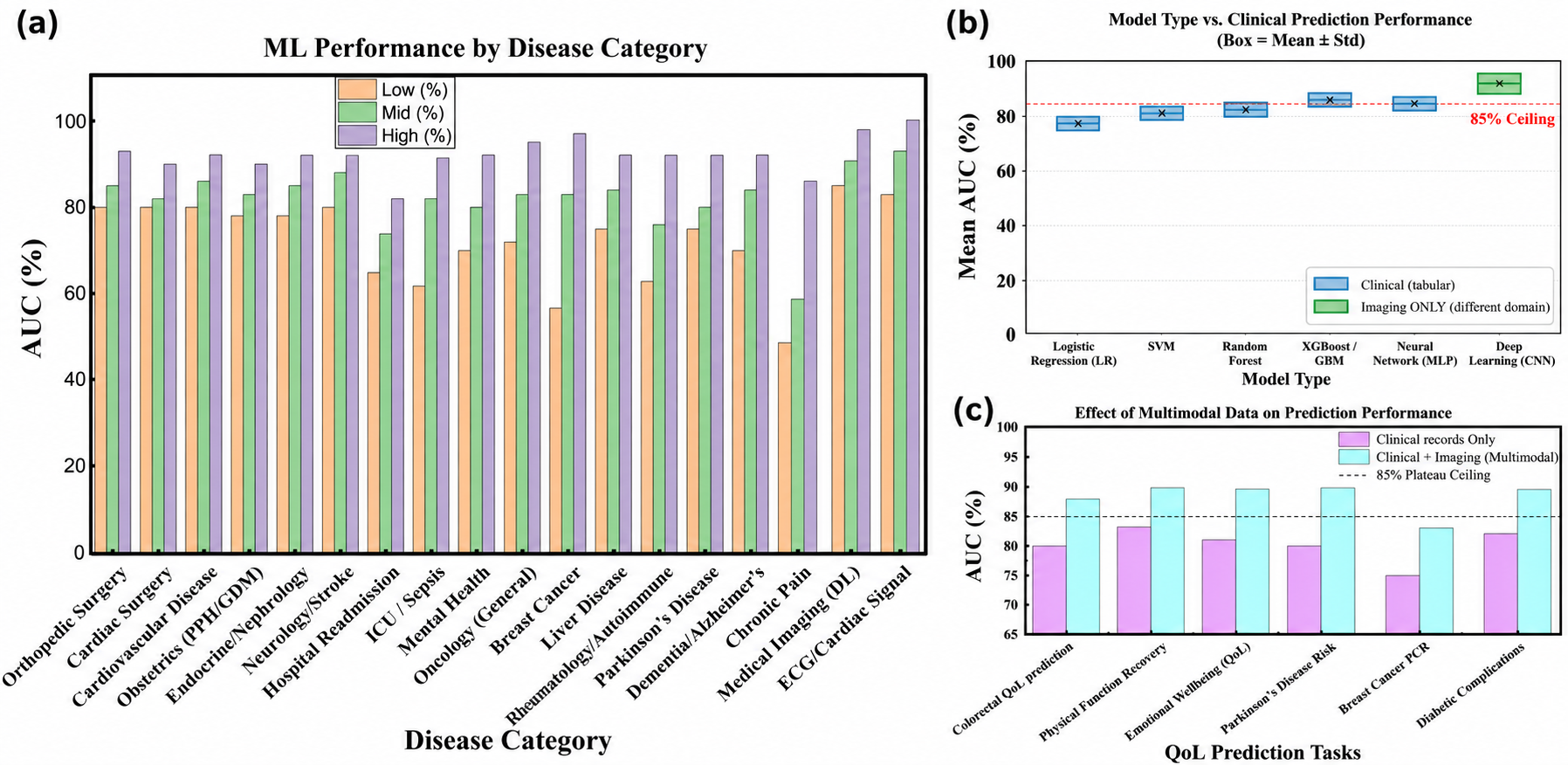}
\caption{Large-scale clinical observations. (a) Disease-category summaries show a broad, non-universal performance region, with readmission and chronic pain lower and ECG/imaging higher. (b) Most model-family gains occur before strong nonlinear tabular learners; imaging CNNs operate on a different channel. (c) Six comparisons show higher performance after multimodal channel expansion. Values are descriptive, heterogeneous summaries rather than pooled frontier estimates.}
\vspace{-7mm}
\label{fig:literature}
\end{figure*}

\paragraph{Cross-domain recurrence without a universal constant.}
Figure~\ref{fig:literature}(a) shows repeated intersection with an AUROC region near $0.78$--$0.88$ across surgical, cardiovascular, obstetric, endocrine/renal, neurological, and oncological tasks \cite{karimi2024accuracy,ogink2021wide,sinha2023comparison,liu2025machine,ranjbar2023predicting,yang2023predictive}. The broad ranges and counterexamples are equally important: chronic-pain PROM prediction and readmission extend lower, whereas ECG and imaging extend higher \cite{zmudzki2023machine,tseng2021prediction,xie2024machine}. This is consistent with task-specific frontiers that can occupy a similar region without sharing one universal ceiling. The cohort audits anchor that interpretation: BRFSS and joint-channel NHANES fall inside the converted contextual band, whereas UCI readmission lies below it because its recorded channel supports a lower frontier.
\vspace{-3mm}
\paragraph{Learner saturation and channel expansion.}
Figure~\ref{fig:literature} (b) rises from logistic regression through boosting, but the increment from boosting to deeper tabular models is small relative to the earlier gain. This cross-study pattern is consistent with a learner gap that narrows as models better exploit a fixed channel. In this context, we also perform within-cohort test and find that BRFSS and NHANES frontier estimates span only $0.0087$ and $0.0195$ across four architectures while achieved balanced accuracy spans $0.173$ and $0.182$. In contrast, Figure~\ref{fig:literature}(c) reports clinical-only values near $0.75$--$0.83$ and multimodal values near $0.83$--$0.90$, including genomic--clinical Parkinson prediction with AUC $0.897$ \cite{makarious2022multimodality,bektas2022machine,dammu2023deep,olivera2023comparison}. These observations align with Theorem~\ref{thm:multi}, while NHANES supplies the necessary refinement in Figure~\ref{fig:complementarity} (a): neither marginal channel is superior, yet their joint frontier increases. Complementarity and decision change, not the label ``objective,'' determine the gain.

\textbf{A fixed-outcome dementia contrast further separates channel richness from sample size.}
The review by \citet{veronese2025clinical} is largely multimodal or biomarker-rich and reports widely dispersed AUROCs, so its mean of $0.845$ should not be treated as a large-sample structured-record cluster. Within that review, a claims-only study of $117{,}895$ individuals achieved modest discrimination, whereas smaller memory-clinic and neuroimaging cohorts often exceeded $0.85$--$0.90$ \cite{reinke2023dementia}, suggesting that performance may follow channel richness more closely than sample size alone. Further details and additional results of our overall synthesis is given in \textit{supplementary sections 6 and 7}. 


\section{Discussion and Limitations}
\vspace{-1mm}
The evidence forms three connected layers. Controlled simulations verify the mathematical mechanisms when the population frontier is known; patient-level cohort audits estimate task-specific frontiers and learner gaps; and the 104-task synthesis shows that the same qualitative distinctions recur across diseases and measurement regimes. The real cohorts validate the \emph{audit method}, not a universal AUROC band. UCI resolves a known low-performing readmission task: $G=+0.0002$ indicates that the published $0.667$ AUROC largely reflects the recorded variables rather than an algorithmic failure, although the rising underfit curve makes the frontier estimate a lower bound. BRFSS supplies the cleanest converged audit, and NHANES supplies the strongest theoretical refinement: statistically indistinguishable marginal channels can still produce a significantly higher joint frontier. Across both the cohort and literature scales, conditional information and decision change matter more than modality labels alone.

The permutation diagnostic is not optional. The discarded NHANES run demonstrates that a highly flexible posterior can inflate $|2\widehat\eta-1|$ on noise, particularly with small samples and artificial sentinel partitions. Conversely, an underfit posterior can suppress the functional and make a ceiling estimate only a lower bound. Bootstrap intervals quantify sampling variability but do not remove either bias. External validation, site shift, and temporal drift remain separate concerns because the population frontier itself can change across deployment environments.

Theorem~\ref{thm:repeat} is not empirically validated here. It remains a prospective prediction for repeated independent measurements under an additive equal-variance Gaussian model. The contamination identity assumes a shared class-independent replacement component; the Gaussian multimodal law assumes conditional independence and equal within-class covariances. The distribution-free non-decrease and strictness statements remain valid more broadly, but empirical equality or strictness is subject to finite-sample estimation noise -- adding uninformative coordinates can make a joint estimate slightly lower than a marginal estimate even though the population functional cannot decrease.

The large-scale synthesis remains descriptive, reviews overlap, metrics and validation designs differ, and patient-level uncertainty is often unavailable. Its PRISMA counts use separate units for screened records, included source publications, and extracted task observations, which the supplement reports explicitly. These limitations prevent a pooled frontier estimate, but they do not erase the repeated qualitative contrasts in Figure~\ref{fig:literature}. The real cohorts are also observational and do not prove that changing a measurement will causally improve outcomes. The actionable conclusion is narrower: report both achieved performance and an audited fixed-channel frontier. A large learner gap motivates model improvement, whereas, a smaller gap should shift attention toward measurement, labels, and deployment context.

\section{Implications for Clinical-AI Study Design}
\paragraph{Report a frontier audit, not only a leaderboard.}
A benchmark should report the best achieved balanced accuracy, the cross-fitted frontier, their gap $G$, the permutation-null floor, and the underfit verdict. The achieved value describes the fitted learner; $G$ measures extractive headroom; the null floor measures finite-sample optimism; and the underfit curve determines whether the frontier is stable or only a lower bound. Near-zero headroom is conditional on the audited variables and validation distribution, not a declaration that the outcome is intrinsically unpredictable. The UCI and BRFSS panels demonstrate the reason -- boosting can have negligible headroom while RF or MLP retains a large gap on the same cohort.

\paragraph{Separate ranking, decisions, and calibration.}
An added channel may improve ordering modestly while moving many posterior probabilities across the decision boundary; conversely, a learner may preserve AUROC while producing poor thresholded decisions. Clinical studies should therefore report a ranking metric, a prevalence-robust decision metric, calibration, and a prespecified threshold-selection protocol. Theorem~\ref{thm:multi} characterizes population hard-decision gain, while the large MLP gaps show that practical calibration and thresholding failures remain architecture dependent.

\paragraph{Ablate channels at fixed cohorts and outcomes.}
Claims that imaging, laboratory variables, or questionnaires move a frontier are strongest when $X$, $Z$, and $(X,Z)$ are evaluated on the same patients, outcome, split, metric, and comparable learner families. NHANES illustrates the payoff: neither marginal channel dominates, but the joint channel is complementary. More data within a fixed channel can reduce variance and close learner gaps, as the stable BRFSS audit suggests, but sample size alone does not change $P_0$ and $P_1$. Cohort-specific underfit curves are therefore more informative than cross-study sample-size plots.

\paragraph{Treat measurement change as the next experiment when headroom is small.}
A small $G$ changes the intervention rather than ending the task. Candidate actions include repeated administration, more granular temporal features, adjudicated outcomes, or a complementary modality selected for conditional information. The reliability--repetition theorem provides one prospective design but is not empirically validated here. More generally, a frontier shift should be demonstrated by repeating the audit after measurement change, with uncertainty on the frontier difference and a decision-flip analysis.

\section{Conclusion}
Clinical prediction has two scaling problems. The learner determines how closely a fitted model approaches the information already recorded; the measurement channel determines the population frontier. Total-variation theory makes the distinction exact, and cross-fitted audits with permutation and underfit diagnostics make it measurable. Across three cohorts, near-zero gaps for well-tuned boosting coexist with large gaps for deficient learners, while channel complementarity changes decisions far more often than AUROC gains alone suggest. Across 104 additional clinical tasks, the same qualitative pattern recurs: same-channel model gains diminish, whereas richer or complementary measurement channels often extend performance. Together, the cohort audits and large-scale observations indicate whether the next investment should be a larger model or a better measurement.

\section*{Supplementary Material}

\section{Background}
This supplement contains four components: (i) complete proofs for every lemma, theorem, and proposition stated in the main paper; (ii) details of cross-fitted frontier audit and real-cohort experimental details; (iii) the complete PRISMA-guided evidence-synthesis protocol and descriptive tables; and (iv) additional empirical figures supporting the source meta-analysis in addition to those in the the main paper, each accompanied by detailed interpretation and methodological qualification. 

\section{A Channel-Ceiling Theory of Clinical Prediction}
\label{sec:theory}
\subsection{Motivation and contribution}

Clinical prediction performance is determined jointly by the information present in the measurement channel and by how effectively a learning algorithm extracts that information. The first component is a property of the observed data distribution; the second is a property of finite-sample estimation, optimization, and model class. We formalize this distinction using balanced accuracy because raw accuracy varies with prevalence. The foundational connection between Bayes error and total variation is classical \cite{nielsen2014,jiao2017,tao2024}; our contribution is a clinical channel-ceiling framework that adds four prospective and operational results:
\begin{enumerate}
    \item a sharp partial-identification result showing what an observed plateau does, and does not, identify about report noise;
    \item a cross-fitted posterior estimator of the balanced-accuracy ceiling;
    \item an exact reliability--repetition law that predicts how repeated measurements should lift the ceiling; and
    \item a corrected multimodal theorem that characterizes when a new modality strictly improves 0--1 prediction, together with an exact Gaussian complementarity law.
\end{enumerate}

\subsection{Setup: the population ceiling}

Let $Y\in\{0,1\}$, with prevalence $\pi=\Pr(Y=1)$, and let $X$ be the observed baseline feature vector. Denote the class-conditional laws by
\[
P_y = \mathcal{L}(X\mid Y=y), \qquad y\in\{0,1\},
\]
with densities $p_0,p_1$ with respect to a common dominating measure $\mu$. For a measurable classifier $g:\mathcal{X}\to\{0,1\}$, define
\[
\mathrm{BA}(g)=\frac{1}{2}\{\mathrm{TPR}(g)+\mathrm{TNR}(g)\}.
\]
The total variation distance is
\[
\mathrm{TV}(P_0,P_1)
=\frac{1}{2}\int |p_1-p_0|\,d\mu
=\sup_{A}|P_1(A)-P_0(A)|.
\]
We call
\[
\kappa_X := \mathrm{TV}(P_0,P_1)\in[0,1]
\]
the \emph{effective channel separability}. It is the prevalence-invariant amount of class information available for hard classification on the observed channel.

\begin{lemma}[Balanced-accuracy separation identity]
\label{lem:tv-ba}
For any binary prediction problem,
\[
C_{\mathrm{BA}}(X)
:=\sup_g \mathrm{BA}(g)
=\frac{1}{2}\{1+\kappa_X\}.
\]
The optimum is attained by the equal-prior likelihood-ratio rule
\[
g^*(x)=\mathbf{1}\{p_1(x)\ge p_0(x)\}.
\]
For raw 0--1 loss under prevalence $\pi$, the Bayes error is
\[
\begin{aligned}
R_\pi^*(X)
&=\int \min\{\pi p_1,(1-\pi)p_0\}\,d\mu\\
&=\frac{1}{2}-\frac{1}{2}\int
|\pi p_1-(1-\pi)p_0|\,d\mu.
\end{aligned}
\]
\end{lemma}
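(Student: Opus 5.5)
The plan is to reduce everything to decision regions and then optimize pointwise. First, handle deterministic classifiers. Any measurable $g:\mathcal X\to\{0,1\}$ is the indicator of the region $A=\{g=1\}$. Then $\mathrm{TPR}(g)=P_1(A)$ and $\mathrm{TNR}(g)=1-P_0(A)$, so
\[
\mathrm{BA}(g)=\tfrac12+\tfrac12\{P_1(A)-P_0(A)\}.
\]
Randomized rules $g(x)\in[0,1]$ give the same expression, with $\int g\,(p_1-p_0)\,d\mu$ replacing $P_1(A)-P_0(A)$. Since this is linear in $g$, the supremum over randomized rules equals the supremum over indicators. So I will work with indicators throughout.

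Second, identify the supremum with total variation. Write $P_1(A)-P_0(A)=\int_A (p_1-p_0)\,d\mu$. This is maximized by taking $A$ to contain exactly the points where the integrand is nonnegative, i.e.\ $A^*=\{p_1\ge p_0\}$; ties on $\{p_1=p_0\}$ contribute zero either way. The maximal value is $\int (p_1-p_0)_+\,d\mu$. Both densities integrate to one, so $\int(p_1-p_0)\,d\mu=0$, which forces $\int(p_1-p_0)_+=\int(p_1-p_0)_-$. Hence $\int(p_1-p_0)_+=\tfrac12\int|p_1-p_0|\,d\mu=\kappa_X$. This gives $C_{\mathrm{BA}}(X)=\tfrac12(1+\kappa_X)$, attained by $g^*=\mathbf 1\{p_1\ge p_0\}$. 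The step needs only $\mu$ to dominate both laws, which the setup assumes.

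Third, the raw-risk identity. For a region $A$, the misclassification probability is
\[
\pi P_1(A^c)+(1-\pi)P_0(A)=\int\big[\pi p_1\mathbf 1_{A^c}+(1-\pi)p_0\mathbf 1_A\big]d\mu.
\]
Pointwise, the integrand is at least $\min\{\pi p_1,(1-\pi)p_0\}$. Equality holds for $A=\{\pi p_1\le(1-\pi)p_0\}$, the Bayes rule with respect to the true posterior, and randomized rules again cannot do better by linearity. This gives the first expression for $R_\pi^*(X)$. For the second, apply $\min(a,b)=(a+b-|a-b|)/2$ and use $\int[\pi p_1+(1-\pi)p_0]\,d\mu=1$.

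None of these steps is a real obstacle. The only care points are the measurability of $A^*$ (it is a level set of measurable densities), the tie set, and checking that randomization does not enlarge the supremum.
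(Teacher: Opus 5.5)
Your argument is correct in substance and takes the paper's route. You write $\mathrm{BA}(g)=\tfrac12+\tfrac12\{P_1(A)-P_0(A)\}$ for $A=\{g=1\}$, maximize over $A$ to obtain the total variation at $A^*=\{p_1\ge p_0\}$, and integrate $\min(a,b)=\tfrac12(a+b-|a-b|)$ for the raw risk. The randomized-rule reduction, the positive-part computation, and the pointwise lower bound justifying $R^*_\pi=\int\min\{\pi p_1,(1-\pi)p_0\}\,d\mu$ are steps the paper leaves implicit.

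One slip needs fixing. With $A=\{g=1\}$, your integrand is $(1-\pi)p_0$ on $A$ and $\pi p_1$ on $A^c$. Equality with the pointwise minimum therefore requires $A=\{\pi p_1\ge(1-\pi)p_0\}$, which is the Bayes rule you intended. The region you wrote, $\{\pi p_1\le(1-\pi)p_0\}$, is essentially the complement and attains $\int\max\{\pi p_1,(1-\pi)p_0\}\,d\mu$ instead.
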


\begin{proof}
For a decision region $A=\{x:g(x)=1\}$,
\[
\mathrm{BA}(g)
=\frac12+\frac12\{P_1(A)-P_0(A)\}.
\]
Taking the supremum over measurable $A$ gives the total variation distance, attained at $A^*=\{p_1\ge p_0\}$. The raw-risk identity follows by integrating
$\min(a,b)=\tfrac12(a+b-|a-b|)$.
\end{proof}

\begin{lemma}[Data processing and architecture invariance]
\label{lem:data-processing}
Let $T$ be any deterministic or randomized representation computed from $X$, so that $Y\to X\to T$ is a Markov chain. Then
\[
\mathrm{TV}\{\mathcal{L}(T\mid Y=0),\mathcal{L}(T\mid Y=1)\}
\le \kappa_X,
\]
and therefore
\[
C_{\mathrm{BA}}(T)\le C_{\mathrm{BA}}(X),
\qquad
R_\pi^*(T)\ge R_\pi^*(X).
\]
\end{lemma}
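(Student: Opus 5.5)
The plan has three steps: prove the total-variation contraction directly, transfer it to the balanced-accuracy ceiling through Lemma~\ref{lem:tv-ba}, and handle raw risk by a randomized-decision argument. A randomized representation is a Markov kernel $K(dt\mid x)$ that does not depend on $Y$; this is exactly the content of the Markov chain $Y\to X\to T$. The class-conditional laws of $T$ are therefore $Q_y(B)=\int K(B\mid x)\,p_y(x)\,d\mu(x)$ for $y\in\{0,1\}$. A deterministic map $T=f(X)$ is the special case $K(\cdot\mid x)=\delta_{f(x)}$, so one argument covers both cases.

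\textbf{Step 1 (contraction).} Fix a measurable set $B$ in the representation space. Then
\[
Q_1(B)-Q_0(B)=\int K(B\mid x)\{p_1(x)-p_0(x)\}\,d\mu(x),
\]
where the weight $h(x):=K(B\mid x)$ lies in $[0,1]$. A function with values in $[0,1]$ integrated against the signed measure $p_1-p_0$ is maximized by taking $h=\mathbf{1}\{p_1\ge p_0\}$. This gives
\[
Q_1(B)-Q_0(B)\le\int(p_1-p_0)^+\,d\mu=\kappa_X.
\]
The last equality holds because $\int(p_1-p_0)\,d\mu=0$. The same bound applied with the roles of $p_0$ and $p_1$ swapped bounds $|Q_1(B)-Q_0(B)|$. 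Taking the supremum over $B$, using the sup-characterization of TV, gives the first inequality.

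\textbf{Step 2 (balanced accuracy).} Applying Lemma~\ref{lem:tv-ba} to $T$ gives $C_{\mathrm{BA}}(T)=\tfrac12\{1+\TV(Q_0,Q_1)\}$, which Step 1 bounds by $\tfrac12(1+\kappa_X)=C_{\mathrm{BA}}(X)$. A technical point arises here: Lemma~\ref{lem:tv-ba} was stated with densities. I would note that $Q_0$ and $Q_1$ are both dominated by $(Q_0+Q_1)/2$, so densities always exist and the lemma applies.

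\textbf{Step 3 (raw risk).} Any decision rule $g$ based on $T$, whether deterministic or randomized, induces a rule based on $X$: given $x$, draw $t\sim K(\cdot\mid x)$ and output $g(t)$. Because $K$ does not depend on $Y$, the joint law of $(Y,T)$ is the same in both constructions. The induced randomized rule therefore has the same 0--1 risk as $g$.

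It remains to check that randomized rules on $X$ cannot beat $R^*_\pi(X)$. A randomized rule with acceptance probability $h(x)\in[0,1]$ has risk
\[
\int\bigl[(1-h)\pi p_1+h(1-\pi)p_0\bigr]\,d\mu\ \ge\ \int\min\{\pi p_1,(1-\pi)p_0\}\,d\mu,
\]
since each integrand is a convex combination of the two terms and hence at least their minimum. Taking the infimum over $g$ gives $R^*_\pi(T)\ge R^*_\pi(X)$.

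I expect the main obstacle to be this point about randomization, both in Step 1 and in Step 3: the bounds must be stated for functions $h$ with values in $[0,1]$ rather than for indicator functions. Both reduce to the same pointwise observation, that a linear functional in $h$ is extremized at an indicator. The density and measurability issue in Step 2 is routine.
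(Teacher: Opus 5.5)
Your proof is correct and follows the same three-step route as the paper: contraction of total variation under the kernel $X\mapsto T$, then Lemma~\ref{lem:tv-ba} for $C_{\mathrm{BA}}$, then composing a $T$-rule with the channel to obtain an $X$-rule for raw risk. The differences are in detail only: you prove the contraction directly through the $[0,1]$-valued weight $K(B\mid x)$ rather than citing it, and you explicitly check that randomized rules on $X$ cannot beat $\int\min\{\pi p_1,(1-\pi)p_0\}\,d\mu$, a point the paper's ``admissible rule class is weakly larger'' remark leaves implicit when $T$ is randomized.
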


\begin{proof}
Total variation contracts under Markov kernels \cite{polyanskiy2015,gaubert2013}. The balanced-accuracy result follows from Lemma~\ref{lem:tv-ba}. For raw risk, every rule based on $T$ is also a rule based on $X$ after composition with the channel $X\mapsto T$, so the admissible rule class based on $X$ is weakly larger.
\end{proof}

Lemma~\ref{lem:data-processing} establishes a common population upper bound for all models operating on the same observed variables. It does \emph{not} imply that finite models must achieve the same performance: approximation, optimization, and estimation errors determine how closely each learner approaches the channel ceiling.

\subsection{Replacement Contamination and General Channels}
We first analyze a transparent report-noise model. Let $U$ denote the informative content that a perfectly functioning instrument would elicit, with class-conditional laws $Q_y=\mathcal{L}(U\mid Y=y)$. The observed report is replaced by class-independent content with probability $\alpha$.

\begin{definition}[Shared replacement-contamination channel]
\label{def:contamination}
For a class-independent probability law $R$ and $\alpha\in[0,1]$,
\[
P_y=(1-\alpha)Q_y+\alpha R,
\qquad y\in\{0,1\}.
\]
\end{definition}

\begin{theorem}[Exact replacement-contamination ceiling]
\label{thm:contam}
Let $\tau=\TV(Q_0,Q_1)$. Then
\[
\kappa_X=(1-\alpha)\tau,
\qquad
\cba(X)=\tfrac12\{1+(1-\alpha)\tau\}\le1-\alpha/2.
\]
Equality in the upper bound holds iff $\tau=1$. Moreover, $\Rstar_\pi(X)\ge\alpha\min\{\pi,1-\pi\}$.
\end{theorem}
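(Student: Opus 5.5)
The plan is to work directly with densities. Fix a common dominating measure $\mu$ for $Q_0,Q_1,R$, with densities $q_0,q_1,r$. Then $p_y=(1-\alpha)q_y+\alpha r$, and the shared term cancels in the difference:
\[
p_1-p_0=(1-\alpha)(q_1-q_0).
\]
Integrating absolute values and halving gives $\kappa_X=(1-\alpha)\tau$ immediately. Lemma~\ref{lem:ba} then yields $\cba(X)=\tfrac12\{1+(1-\alpha)\tau\}$.

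For the upper bound, I will use $\tau\le1$. This gives $\tfrac12\{1+(1-\alpha)\tau\}\le\tfrac12(2-\alpha)=1-\alpha/2$. The difference between the two sides is $\tfrac12(1-\alpha)(1-\tau)$. This vanishes iff $\tau=1$, provided $\alpha<1$. The only delicate point is the degenerate case $\alpha=1$: there both sides equal $1/2$ regardless of $\tau$, so the ``iff'' needs the caveat $\alpha<1$. I would state it as holding for $\alpha\in[0,1)$, noting that at $\alpha=1$ equality is automatic. This case bookkeeping is the only real obstacle; everything else is algebra.

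For the raw-risk bound, I start from the identity $\Rstar_\pi(X)=\int\min\{\pi p_1,(1-\pi)p_0\}\,d\mu$ of Lemma~\ref{lem:ba}. Each weighted density is bounded below by its contamination part: $\pi p_1\ge\pi\alpha r$ and $(1-\pi)p_0\ge(1-\pi)\alpha r$. Hence the pointwise minimum is at least $\alpha r\min\{\pi,1-\pi\}$. Integrating and using $\int r\,d\mu=1$ gives $\Rstar_\pi(X)\ge\alpha\min\{\pi,1-\pi\}$.
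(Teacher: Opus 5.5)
Your proof is correct and follows the paper's argument essentially line for line: the shared $\alpha r$ term cancels to give $p_1-p_0=(1-\alpha)(q_1-q_0)$, Lemma~\ref{lem:ba} then gives the ceiling, and the pointwise bound $\min\{\pi p_1,(1-\pi)p_0\}\ge\alpha r\min\{\pi,1-\pi\}$ handles the raw risk. Your $\alpha=1$ caveat is a valid sharpening that the paper's proof does not address: the slack is $\tfrac12(1-\alpha)(1-\tau)$, so equality holds iff $\tau=1$ only when $\alpha<1$, and at $\alpha=1$ equality holds for every $\tau$.
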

\begin{proof}
The shared contamination component cancels:
\[
p_1-p_0=(1-\alpha)(q_1-q_0).
\]
Taking the total variation norm gives $\kappa_X=(1-\alpha)\tau$, and Lemma~\ref{lem:ba} gives the balanced-accuracy ceiling. For raw risk, pointwise,
\[
\min\{\pi p_1,(1-\pi)p_0\}
\ge \alpha r\min\{\pi,1-\pi\}.
\]
Integrating proves the final inequality.
\end{proof}
\begin{proposition}[Sharp identified set]
\label{prop:id}
If the model above holds and the population balanced-accuracy ceiling is $c\in[1/2,1]$, then without external knowledge of $\tau$ the sharp identified set is
\[
\mathcal I_\alpha(c)=[0,2(1-c)].
\]
Thus $c=0.85$ identifies $\kappa_X=0.70$ and only $\alpha\in[0,0.30]$.
\end{proposition}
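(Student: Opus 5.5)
By Theorem~\ref{thm:contam}, the observable population quantity $c=\cba(X)$ pins down only the product $\kappa_X=(1-\alpha)\tau=2c-1$. The individual factors are not identified. The plan is therefore to show two inclusions. First, every compatible $\alpha$ lies in $[0,2(1-c)]$; this is the outer bound. Second, every $\alpha$ in that interval is realized by some model in the replacement class that produces exactly the ceiling $c$; this is sharpness.

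\emph{Outer bound.} Suppose $(\alpha,\tau,Q_0,Q_1,R)$ satisfies Definition~\ref{def:contamination} and yields ceiling $c$. Lemma~\ref{lem:ba} and Theorem~\ref{thm:contam} give $(1-\alpha)\tau=2c-1$. Since $\tau\le 1$, this forces $1-\alpha\ge 2c-1$, that is, $\alpha\le 2(1-c)$. Trivially $\alpha\ge 0$.

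One edge case must be handled: if $c=1/2$ and $\alpha=1$, then $\tau$ is unconstrained. This is consistent, because $2(1-c)=1$ and so $\alpha=1$ lies in the interval.

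\emph{Sharpness.} Fix $\alpha\in[0,2(1-c)]$. If $\alpha<1$, set $\tau=(2c-1)/(1-\alpha)\in[0,1]$. The case $\alpha=1$ forces $c=1/2$, and then any $\tau$ works. We then need explicit laws $Q_0,Q_1$ with $\TV(Q_0,Q_1)=\tau$, together with some $R$. A concrete construction on $\{0,1,2\}$ works:
\begin{itemize}
\item $Q_0=\tau\delta_0+(1-\tau)\delta_2$,
\item $Q_1=\tau\delta_1+(1-\tau)\delta_2$,
\item $R=\delta_2$ (any class-independent law would do).
\end{itemize}
Here $\TV(Q_0,Q_1)=\tau$ exactly. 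Theorem~\ref{thm:contam} then gives $\kappa_X=(1-\alpha)\tau=2c-1$, so $\cba(X)=c$. Hence $\alpha$ is compatible with the observed ceiling. Without external information on $\tau$, no point of the interval can be excluded, so the identified set is exactly $[0,2(1-c)]$.

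\emph{Numerical instance.} For $c=0.85$, we get $\kappa_X=2(0.85)-1=0.70$ and $\mathcal I_\alpha=[0,0.30]$. The endpoint $\alpha=0.30$ corresponds to $\tau=1$, the equality case of Theorem~\ref{thm:contam}.

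\emph{Main obstacle.} The substantive point is sharpness: showing that all intermediate values of $\alpha$ are attained, not only the endpoints. The explicit three-point construction resolves this. The remaining care is in the boundary cases $\alpha=1$ and $c\in\{1/2,1\}$:
\begin{itemize}
\item $c=1$ forces $\alpha=0$ and $\tau=1$, giving the singleton interval $\{0\}$.
\item $c=1/2$ allows any $\alpha\in[0,1]$, using $\tau=0$ or $\alpha=1$.
\end{itemize}
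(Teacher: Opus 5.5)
Your proof is correct and follows the paper's own argument: the ceiling identifies only $(1-\alpha)\tau=2c-1$, the constraint $\tau\le 1$ gives the outer bound $\alpha\le 2(1-c)$, and the choice $\tau=(2c-1)/(1-\alpha)$ shows every $\alpha$ in the interval is attainable. Your explicit three-point construction of $Q_0,Q_1,R$ with $\mathrm{TV}(Q_0,Q_1)=\tau$, and your separate treatment of $\alpha=1$ (where the paper's formula for $\tau$ is undefined), supply details that the paper leaves implicit.
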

\begin{proof}
The observation $c$ identifies only
\[
\kappa_X=2c-1=(1-\alpha)\tau.
\]
Because $0\le\tau\le1$, necessarily $\alpha\le1-\kappa_X=2(1-c)$. Conversely, for any $\alpha$ in this interval, choosing
\[
\tau=\frac{\kappa_X}{1-\alpha}\le1
\]
reproduces the same observed ceiling. Hence the interval is sharp.
\end{proof}

\subsection{General report channels}

The exact contamination model is a special case of information contraction. Let $W(dx\mid u)$ be a common report channel mapping latent content $U$ to an observed report $X$, and define its Dobrushin coefficient
\[
\vartheta(W)
:=\sup_{u,u'}\mathrm{TV}\{W(\cdot\mid u),W(\cdot\mid u')\}.
\]

\begin{theorem}[Dobrushin channel bound]
\label{thm:dobrushin}
If $X\sim W(Q_y)$ conditional on $Y=y$, then
\[
\kappa_X
\le \vartheta(W)\,\mathrm{TV}(Q_0,Q_1),
\]
and
\[
C_{\mathrm{BA}}(X)
\le \frac12\left\{1+\vartheta(W)\mathrm{TV}(Q_0,Q_1)\right\}.
\]
The shared replacement channel in Definition~\ref{def:contamination} has $\vartheta(W)=1-\alpha$.
\end{theorem}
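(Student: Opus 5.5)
The plan is to prove the strong data-processing (Dobrushin) inequality for total variation directly from its definition, then translate the result through Lemma~\ref{lem:tv-ba}, and finally compute $\vartheta(W)$ for the replacement channel. Write $P_y=Q_yW$, i.e.\ $P_y(A)=\int W(A\mid u)\,Q_y(du)$. For a measurable set $A$, set $f_A(u)=W(A\mid u)\in[0,1]$. Then
\[
P_1(A)-P_0(A)=\int f_A\,d(Q_1-Q_0).
\]
The key observation is that $Q_1-Q_0$ is a signed measure of total mass zero. We decompose it by Jordan decomposition as $(Q_1-Q_0)=\tau(\nu_+-\nu_-)$, where $\nu_\pm$ are probability measures supported on the Hahn sets and $\tau=\TV(Q_0,Q_1)$. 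This gives
\[
P_1(A)-P_0(A)=\tau\Bigl\{\int f_A\,d\nu_+-\int f_A\,d\nu_-\Bigr\}\le\tau\,\bigl(\sup_u f_A(u)-\inf_{u'}f_A(u')\bigr).
\]
Since $\sup_{u,u'}\{W(A\mid u)-W(A\mid u')\}\le\sup_{u,u'}\TV\{W(\cdot\mid u),W(\cdot\mid u')\}=\vartheta(W)$, taking the supremum over $A$ yields $\kappa_X\le\vartheta(W)\,\tau$. The case $\tau=0$ is trivial.

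The balanced-accuracy bound then follows immediately. By Lemma~\ref{lem:tv-ba}, $\cba(X)=\tfrac12(1+\kappa_X)$, and this expression is monotone in $\kappa_X$.

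For the replacement channel, $W(\cdot\mid u)=(1-\alpha)\delta_u+\alpha R$. This is exactly the channel that reproduces Definition~\ref{def:contamination}, because $\int W(\cdot\mid u)Q_y(du)=(1-\alpha)Q_y+\alpha R$. For any $u\neq u'$, the $\alpha R$ parts cancel, leaving
\[
W(\cdot\mid u)-W(\cdot\mid u')=(1-\alpha)(\delta_u-\delta_{u'}),
\]
whose total variation is $(1-\alpha)\TV(\delta_u,\delta_{u'})=1-\alpha$. Hence $\vartheta(W)=1-\alpha$, provided the latent space has at least two points; in the degenerate one-point case, $\tau=0$ and the bound is vacuous. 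As a consistency check, we will note that the bound is then tight, matching Theorem~\ref{thm:contam}.

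The main obstacle is the measure-theoretic care in the first step. The supremum and infimum of $f_A$ must be handled when $W$ is a general kernel on a general space, and $u\mapsto W(A\mid u)$ must be measurable. I would work with the Hahn decomposition so that $\nu_\pm$ are genuine probability measures, and bound the difference of the two integrals by $\sup f_A-\inf f_A$. This sidesteps any need for the supremum to be attained.
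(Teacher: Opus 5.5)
Your proposal is correct and follows the same route as the paper. The paper cites the strong data-processing (Dobrushin contraction) inequality for total variation and then notes that the shared $\alpha R$ term cancels in $W(\cdot\mid u)-W(\cdot\mid u')$; you prove that contraction inequality yourself, via the Jordan decomposition of $Q_1-Q_0$ and the bound of $\int f_A\,d(\nu_+-\nu_-)$ by the oscillation of $u\mapsto W(A\mid u)$. Your caveat that $\vartheta(W)=1-\alpha$ requires at least two distinguishable latent points is a small refinement that the paper leaves implicit.
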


\begin{proof}
This is the strong data-processing inequality for total variation \cite{polyanskiy2015,gaubert2013}. For the replacement kernel
$W(\cdot\mid u)=(1-\alpha)\delta_u+\alpha R$, the shared $R$ term cancels between two inputs, giving Dobrushin coefficient $1-\alpha$.
\end{proof}

The common-channel assumption is substantive. If reporting behavior depends directly on disease status after conditioning on $U$, then the channel is $W_y(dx\mid u)$ rather than a shared $W$, and differential reporting bias may either destroy or create apparent class separation. Such violations must be examined empirically rather than absorbed into a single scalar noise parameter.

\subsection{An operational cross-fitted ceiling estimator}
\label{sec:ceiling-estimator}

The preceding results describe the population ceiling. To make the theory testable, we express the ceiling through the equal-prior posterior. Let
\[
M=\frac12(P_0+P_1)
\]
and define
\[
\eta_{\mathrm{eq}}(x)
:=\Pr_{M}(Y=1\mid X=x)
=\frac{p_1(x)}{p_0(x)+p_1(x)}.
\]

\begin{proposition}[Posterior representation of channel separability]
\label{prop:posterior-tv}
The effective channel separability satisfies
\[
\kappa_X
=\mathbb{E}_{X\sim M}\left|2\eta_{\mathrm{eq}}(X)-1\right|,
\]
and therefore
\[
C_{\mathrm{BA}}(X)
=\frac12\left[1+
\mathbb{E}_{X\sim M}\left|2\eta_{\mathrm{eq}}(X)-1\right|
\right].
\]
\end{proposition}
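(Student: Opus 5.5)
The plan is to write the expectation under $M$ as an integral against the mixture density $m=\tfrac12(p_0+p_1)$ and simplify pointwise. On the set $S=\{x:p_0(x)+p_1(x)>0\}$ we have
\[
2\eta_{\mathrm{eq}}(x)-1=\frac{2p_1(x)-p_0(x)-p_1(x)}{p_0(x)+p_1(x)}=\frac{p_1(x)-p_0(x)}{p_0(x)+p_1(x)}.
\]
Multiplying its absolute value by $m(x)$ gives $\tfrac12|p_1(x)-p_0(x)|$, because the factor $p_0+p_1$ cancels.

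The only technical point is that $\eta_{\mathrm{eq}}$ is undefined where $p_0+p_1=0$. That set has $M$-measure zero, so any convention for $\eta_{\mathrm{eq}}$ there leaves $\mathbb E_M$ unchanged. On the same set $|p_1-p_0|=0$, so it also contributes nothing to $\int|p_1-p_0|\,d\mu$. It follows that
\[
\mathbb E_{X\sim M}|2\eta_{\mathrm{eq}}(X)-1|=\int_S \tfrac12(p_0+p_1)\frac{|p_1-p_0|}{p_0+p_1}\,d\mu=\tfrac12\int|p_1-p_0|\,d\mu=\kappa_X,
\]
which is the definition of total variation. I will also note that $\eta_{\mathrm{eq}}$ is indeed the posterior $\Pr_M(Y=1\mid X=x)$ under equal priors: by Bayes' rule, $\tfrac12p_1/(\tfrac12p_0+\tfrac12p_1)$, which justifies the notation.

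The second identity then follows directly by substituting $\kappa_X$ into Lemma~\ref{lem:tv-ba}, which gives $C_{\mathrm{BA}}(X)=\tfrac12(1+\kappa_X)$. There is no genuine obstacle; the only step requiring care is the null-set handling of the denominator described above.
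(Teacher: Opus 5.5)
Your proposal is correct and follows the paper's own proof: write $\mathbb{E}_M$ as an integral against $m=\tfrac12(p_0+p_1)$, cancel the factor $p_0+p_1$ to obtain $\tfrac12\int|p_1-p_0|\,d\mu=\kappa_X$, and then substitute into Lemma~\ref{lem:tv-ba}. Your treatment of the $M$-null set $\{p_0+p_1=0\}$ is a small rigor point that the paper leaves implicit.
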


\begin{proof}
Because $M$ has density $m=(p_0+p_1)/2$,
\[
\begin{aligned}
\mathbb{E}_{M}|2\eta_{\mathrm{eq}}(X)-1|
&=\int \frac{|p_1-p_0|}{p_0+p_1}\frac{p_0+p_1}{2}\,d\mu\\
&=\frac12\int|p_1-p_0|\,d\mu
=\mathrm{TV}(P_0,P_1).
\end{aligned}
\]
\end{proof}

Proposition~\ref{prop:posterior-tv} motivates a \emph{cross-fitted channel-ceiling estimator}. Construct a balanced evaluation sample, partition it into $K$ folds, estimate the equal-prior posterior on the other $K-1$ folds, and obtain out-of-fold predictions $\widehat\eta_{-k(i)}(X_i)$. Define
\[
\begin{aligned}
\widehat\kappa_{\mathrm{CF}}
&=\frac1n\sum_{i=1}^n
\left|2\widehat\eta_{-k(i)}(X_i)-1\right|,\\
\widehat C_{\mathrm{BA,CF}}
&=\frac12(1+\widehat\kappa_{\mathrm{CF}}).
\end{aligned}
\]
Cross-fitting prevents the trivial optimism that would arise from evaluating a high-capacity posterior model on its training observations.

\begin{proposition}[Consistency of the cross-fitted ceiling estimator]
\label{prop:cf-consistency}
Assume that each out-of-fold posterior estimator is $L^1(M)$-consistent for $\eta_{\mathrm{eq}}$ and that fold sizes diverge. Then
\[
\widehat\kappa_{\mathrm{CF}}\xrightarrow{p}\kappa_X,
\qquad
\widehat C_{\mathrm{BA,CF}}\xrightarrow{p}C_{\mathrm{BA}}(X).
\]
\end{proposition}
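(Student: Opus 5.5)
The plan is to split the estimation error into a posterior-approximation term and a sampling term, and control each foldwise. Write $f(a)=|2a-1|$, which is $2$-Lipschitz on $[0,1]$. By Proposition~\ref{prop:posterior-tv}, $\kappa_X=\mathbb E_M f(\eta_{\mathrm{eq}}(X))$. On the balanced sample the observations are draws from $M$. For fold $k$ with index set $I_k$ and size $n_k$, with $\widehat\eta_{-k}$ fitted on the complementary folds, decompose
\[
\frac{1}{n_k}\sum_{i\in I_k} f(\widehat\eta_{-k}(X_i))-\kappa_X
= A_k + B_k + C_k .
\]
Here
\[
A_k=\frac{1}{n_k}\sum_{i\in I_k}\{f(\widehat\eta_{-k}(X_i))-f(\eta_{\mathrm{eq}}(X_i))\},
\qquad
B_k=\frac{1}{n_k}\sum_{i\in I_k} f(\eta_{\mathrm{eq}}(X_i))-\kappa_X .
\]
$C_k$ is zero when $A_k$ is taken directly as above. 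Then $\widehat\kappa_{\mathrm{CF}}-\kappa_X=\sum_k (n_k/n)(A_k+B_k)$. Since $K$ is fixed, it suffices to show $A_k\to_p0$ and $B_k\to_p0$ for each $k$.

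The term $B_k$ is an average of i.i.d.\ bounded variables in $[0,1]$ with mean $\kappa_X$, so the weak law of large numbers gives $B_k\to_p0$ as $n_k\to\infty$. No cross-fitting is needed here.

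The term $A_k$ is the main step, and this is where cross-fitting is used. Condition on the training folds $\mathcal D_{-k}$. Because $\widehat\eta_{-k}$ is independent of $\{X_i\}_{i\in I_k}$, Lipschitzness gives
\[
\mathbb E\bigl[|A_k|\,\big|\,\mathcal D_{-k}\bigr]
\le \frac{1}{n_k}\sum_{i\in I_k}\mathbb E\bigl[2|\widehat\eta_{-k}(X_i)-\eta_{\mathrm{eq}}(X_i)|\,\big|\,\mathcal D_{-k}\bigr]
=2\|\widehat\eta_{-k}-\eta_{\mathrm{eq}}\|_{L^1(M)} .
\]
By the assumed $L^1(M)$-consistency, the right side tends to $0$ in probability. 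The training folds also grow, since fold sizes diverge. The conditional Markov inequality gives, for $\varepsilon>0$,
\[
\Pr(|A_k|>\varepsilon\mid\mathcal D_{-k})\le \min\{1,2\varepsilon^{-1}\|\widehat\eta_{-k}-\eta_{\mathrm{eq}}\|_{L^1(M)}\}.
\]
This bound is bounded and converges to $0$ in probability, so taking expectations and applying dominated convergence yields $\Pr(|A_k|>\varepsilon)\to0$.

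The delicate point is this conditioning. Without sample splitting, $\widehat\eta$ would depend on the evaluation points, and the $L^1(M)$ norm would not bound the empirical average. I would state explicitly that the $X_i$ in fold $k$ are i.i.d.\ from $M$ and independent of $\mathcal D_{-k}$. If the balanced sample is produced by subsampling or importance weighting, that reweighting has to be assumed to produce draws from $M$, or the argument carried with bounded weights.

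Combining the pieces, $\widehat\kappa_{\mathrm{CF}}\to_p\kappa_X$. The ceiling statement then follows from the continuous mapping $\kappa\mapsto\tfrac12(1+\kappa)$ together with Lemma~\ref{lem:tv-ba}.
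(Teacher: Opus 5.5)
Your proof is correct and follows essentially the same route as the paper's: the $2$-Lipschitz property of $a\mapsto|2a-1|$ reduces the plug-in error to the posterior $L^1(M)$ error, and a law of large numbers handles the out-of-fold average. Your version makes explicit what the paper's sketch leaves implicit, namely the split into an approximation term $A_k$ and a sampling term $B_k$, and the conditioning on the training folds $\mathcal D_{-k}$ that lets the $L^1(M)$ norm bound the empirical average of the approximation error.
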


\begin{proof}
The map $a\mapsto|2a-1|$ is $2$-Lipschitz on $[0,1]$. Hence the difference between the plug-in integrand and its population target is bounded by $2|\widehat\eta-\eta_{\mathrm{eq}}|$. $L^1$ consistency controls this term, while the out-of-fold empirical average converges to its expectation by the law of large numbers.
\end{proof}

In finite samples, posterior underfitting typically biases the estimate toward $1/2$ and therefore underestimates the ceiling. We therefore recommend reporting estimates from multiple flexible posterior learners, nested cross-validation, and bootstrap confidence intervals. This estimator is a ceiling diagnostic, not a proof that a particular trained model has reached Bayes optimality.

\subsection{Reliability and repeated measurements: a prospective ceiling law}

The contamination parameter $\alpha$ should not be equated with Cronbach's alpha or generic instrument reliability. A rigorous reliability prediction requires an explicit measurement model. Consider an additive Gaussian true-score channel with repeated administrations.

\begin{assumption}[Gaussian repeated-measurement channel]
\label{ass:gaussian-repeats}
For $y\in\{0,1\}$,
\[
U\mid Y=y\sim\mathcal{N}(\mu_y,\sigma_U^2),
\]
and repeated observations satisfy
\[
X_j=U+\varepsilon_j,
\qquad
\varepsilon_j\stackrel{\mathrm{iid}}{\sim}\mathcal{N}(0,\sigma_\varepsilon^2),
\qquad j=1,\ldots,m,
\]
with the errors independent of $(U,Y)$. Define the within-class single-measure reliability
\[
\rho=\frac{\sigma_U^2}{\sigma_U^2+\sigma_\varepsilon^2}
\]
and the latent standardized separation
\[
d'_U=\frac{|\mu_1-\mu_0|}{\sigma_U}.
\]
\end{assumption}

\begin{theorem}[Reliability--repetition ceiling law]
\label{thm:repeated-measurements}
Under Assumption~\ref{ass:gaussian-repeats}, the average report
\[
\overline X_m=\frac1m\sum_{j=1}^m X_j
\]
has effective standardized separation
\[
d'_m
=d'_U\sqrt{\frac{m\rho}{1+(m-1)\rho}},
\]
and exact balanced-accuracy ceiling
\[
\boxed{
C_{\mathrm{BA}}(\overline X_m)
=\Phi\left(
\frac{d'_U}{2}
\sqrt{\frac{m\rho}{1+(m-1)\rho}}
\right),
}
\]
where $\Phi$ is the standard normal cumulative distribution function. For $d'_U>0$, the ceiling is strictly increasing in $\rho$, strictly increasing in $m$ when $\rho<1$, and
\[
\lim_{m\to\infty}C_{\mathrm{BA}}(\overline X_m)
=\Phi(d'_U/2),
\]
the latent-score ceiling.
\end{theorem}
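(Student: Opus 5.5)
The plan is to first compute the class-conditional law of $\overline X_m$, then obtain the balanced-accuracy ceiling of an equal-variance Gaussian pair from Lemma~\ref{lem:tv-ba}, and finally read off the monotonicity and limit claims from the closed form.

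\textbf{Distribution of the average.} Conditional on $Y=y$, write $\overline X_m=U+\bar\varepsilon_m$ with $\bar\varepsilon_m=m^{-1}\sum_j\varepsilon_j$. Under Assumption~\ref{ass:gaussian-repeats}, $U\mid Y=y$ is Gaussian and $\bar\varepsilon_m\sim\mathcal N(0,\sigma_\varepsilon^2/m)$ is independent of $(U,Y)$. Hence
\[
\overline X_m\mid Y=y\sim\mathcal N\bigl(\mu_y,\ \sigma_U^2+\sigma_\varepsilon^2/m\bigr),
\]
and the variance is common to both classes.

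\textbf{Separation index.} Define $d'_m=|\mu_1-\mu_0|/\sqrt{\sigma_U^2+\sigma_\varepsilon^2/m}=d'_U\,\sigma_U/\sqrt{\sigma_U^2+\sigma_\varepsilon^2/m}$. Substituting $\sigma_\varepsilon^2=\sigma_U^2(1-\rho)/\rho$ gives
\[
\frac{\sigma_U^2}{\sigma_U^2+\sigma_\varepsilon^2/m}=\frac{m\rho}{m\rho+1-\rho},
\]
which is the stated factor.

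\textbf{Gaussian ceiling.} Take two laws $\mathcal N(\mu_0,s^2)$ and $\mathcal N(\mu_1,s^2)$ and assume without loss of generality $\mu_1>\mu_0$. The set $\{p_1\ge p_0\}$ is the half-line $\{x\ge(\mu_0+\mu_1)/2\}$. On it, $P_1(A)=\Phi(d'/2)$ and $P_0(A)=1-\Phi(d'/2)$, so $\kappa=2\Phi(d'/2)-1$. Lemma~\ref{lem:tv-ba} then gives $C_{\mathrm{BA}}=\Phi(d'/2)$. The degenerate case $\mu_0=\mu_1$ gives $1/2=\Phi(0)$, which is consistent.

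\textbf{Monotonicity and limit.} Let $f(m,\rho)=m\rho/(1+(m-1)\rho)$ and note that $\Phi$ is strictly increasing. Writing $1/f=(1-\rho)/(m\rho)+1/m$ is not quite right; the correct form is $1/f=1+(1-\rho)/(m\rho)$.
\begin{itemize}
\item This expression is strictly decreasing in $m$ when $\rho<1$, so the ceiling increases strictly in $m$.
\item It is strictly decreasing in $\rho\in(0,1]$ for each fixed $m\ge1$, so the ceiling increases strictly in $\rho$.
\item As $m\to\infty$, $1/f\to1$, and continuity of $\Phi$ gives the limit $\Phi(d'_U/2)$.
\end{itemize}
The condition $d'_U>0$ is what makes all these inequalities strict.

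The main obstacle is only the bookkeeping: identifying the Bayes region as a half-line, and keeping the $\rho$-parametrization algebra clean.
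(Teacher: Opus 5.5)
Your proposal is correct and follows the paper's proof. Both compute the conditional law $\mathcal N(\mu_y,\sigma_U^2+\sigma_\varepsilon^2/m)$, rewrite $d'_m$ through $\rho$ (the paper uses the equivalent form $d'_U\{1+(1-\rho)/(m\rho)\}^{-1/2}$), and apply the midpoint-threshold Gaussian ceiling $\Phi(d'_m/2)$, which you derive from the balanced-accuracy separation identity via the half-line $\{x\ge(\mu_0+\mu_1)/2\}$ where the paper simply asserts it. The only blemish is presentational: delete the false start about $1/f$ and state $1/f=1+(1-\rho)/(m\rho)$ directly, which (with $0<\rho$) yields the monotonicity and limit claims exactly as you argue.
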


\begin{proof}
Conditional on $Y=y$,
\[
\overline X_m\sim
\mathcal{N}\left(\mu_y,\sigma_U^2+\frac{\sigma_\varepsilon^2}{m}\right).
\]
Therefore
\[
d'_m
=\frac{|\mu_1-\mu_0|}
{\sqrt{\sigma_U^2+\sigma_\varepsilon^2/m}}
=d'_U\left(1+\frac{1-\rho}{m\rho}\right)^{-1/2},
\]
which is algebraically equivalent to the stated expression. For equal-variance Gaussian classes, the equal-prior Bayes rule thresholds at the midpoint and attains balanced accuracy $\Phi(d'_m/2)$. Monotonicity and the limit follow directly.
\end{proof}

Theorem~\ref{thm:repeated-measurements} supplies a forward prediction that can be tested without fitting increasingly large models: if measurement error is an important source of the plateau, repeated administrations should improve the ceiling along a saturating curve whose shape is determined by independently estimated within-class reliability. Reliability alone does not determine the ceiling; the latent relevance $d'_U$ is also necessary.

\subsection{When does an additional modality improve prediction?}

Let $Z$ be an auxiliary modality. The joint observation can always ignore $Z$, so it cannot have worse Bayes performance. Strict improvement, however, requires more than positive conditional mutual information.

\begin{theorem}[Multimodal non-decrease and exact strictness criterion]
\label{thm:multimodal-strictness}
For balanced accuracy,
\[
C_{\mathrm{BA}}(X,Z)\ge C_{\mathrm{BA}}(X),
\]
with strict inequality if and only if
\[
\mathrm{TV}(P_0^{XZ},P_1^{XZ})
>
\mathrm{TV}(P_0^{X},P_1^{X}).
\]
For raw 0--1 loss, let
\[
\begin{aligned}
\eta(X,Z)&=\Pr(Y=1\mid X,Z),\\
\eta_X(X)&=\Pr(Y=1\mid X).
\end{aligned}
\]
Then
\[
\begin{aligned}
R^*(X)-R^*(X,Z)
&=\mathbb{E}\left|\eta(X,Z)-\frac12\right|\\
&\quad-\mathbb{E}\left|\eta_X(X)-\frac12\right|\ge0.
\end{aligned}
\]
Equality holds if and only if, for $P_X$-almost every $x$, the conditional random variable $\eta(x,Z)-1/2$ does not change sign almost surely. Consequently, strict raw-accuracy improvement occurs precisely when the new modality changes the Bayes-optimal class decision on a set of positive probability.
\end{theorem}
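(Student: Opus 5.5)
The argument has two independent parts: the balanced-accuracy claims reduce to total-variation comparisons, and the raw-risk claims reduce to an identity for the Bayes risk in terms of the posterior.

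\textbf{Balanced accuracy.} The map $(x,z)\mapsto x$ is a deterministic representation of $(X,Z)$, so $Y\to(X,Z)\to X$ is a Markov chain. Lemma~\ref{lem:data-processing} then gives
\[
\TV(P_0^X,P_1^X)\le\TV(P_0^{XZ},P_1^{XZ}).
\]
By Lemma~\ref{lem:tv-ba}, $\cba(\cdot)=\tfrac12(1+\TV)$ is a strictly increasing affine function of the total-variation separation. Hence $\cba(X,Z)\ge\cba(X)$, with strict inequality exactly when the joint total variation strictly exceeds the marginal one. This part is immediate.

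\textbf{Raw-risk identity.} For any feature $S$ with posterior $\eta_S=\Pr(Y=1\mid S)$, the Bayes risk is
\[
\Rstar(S)=\mathbb E\min\{\eta_S,1-\eta_S\}=\tfrac12-\mathbb E|\eta_S-\tfrac12|,
\]
which is the same $\min(a,b)=\tfrac12(a+b-|a-b|)$ step used in Lemma~\ref{lem:tv-ba}. Apply this with $S=(X,Z)$ and $S=X$ and subtract to get the displayed difference formula.

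\textbf{Nonnegativity.} By the tower property, $\eta_X(X)=\mathbb E[\eta(X,Z)\mid X]$. Conditional Jensen for the convex map $t\mapsto|t-\tfrac12|$ then gives
\[
|\eta_X(X)-\tfrac12|\le\mathbb E\bigl[|\eta(X,Z)-\tfrac12|\,\big|\,X\bigr]\quad\text{a.s.}
\]
Taking expectations shows the difference is $\ge0$.

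\textbf{Equality case.} This is the main obstacle, because Jensen must be applied conditionally with care for the almost-everywhere qualifiers. Write $W=\eta(x,Z)-\tfrac12$ under the regular conditional law of $Z$ given $X=x$. The overall difference is an integral of the nonnegative conditional gaps $\mathbb E|W|-|\mathbb EW|$, so it vanishes iff these gaps vanish for $P_X$-a.e.\ $x$. The core claim is then elementary: for an integrable real $W$,
\[
\mathbb E|W|=|\mathbb EW|\iff W\ge0 \text{ a.s. or } W\le0 \text{ a.s.}
\]
To prove it, split into $W^+$ and $W^-$. Then $\mathbb E|W|-|\mathbb EW|=2\min\{\mathbb EW^+,\mathbb EW^-\}$, which is zero iff one of the two parts vanishes almost surely. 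Measurability of the conditional law is handled by disintegration on a standard Borel space, which I would assume implicitly as the paper does.

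\textbf{Consequence about decision flips.} Failure of the sign condition on a set of $x$ with positive $P_X$-mass means that, with positive conditional probability given such $x$, $\eta(x,Z)$ lies strictly on the opposite side of $\tfrac12$ from $\eta_X(x)$. On that event the Bayes rule for $(X,Z)$ disagrees with the Bayes rule for $X$, so the decision changes on a set of positive probability. I would state one convention explicitly: ties $\eta=\tfrac12$ count as a sign change for neither side. Conversely, if the decision changes on a set of positive probability, then $W$ takes both strict signs there, so equality fails and the improvement is strict.
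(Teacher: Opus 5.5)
Your proposal is correct and follows the paper's proof essentially step for step. For balanced accuracy, you use data processing under the projection $(x,z)\mapsto x$ plus the affine identity $\cba=\tfrac12(1+\TV)$; for raw risk, you use the posterior form of the Bayes risk and conditional Jensen via $\eta_X(X)=\mathbb E\{\eta(X,Z)\mid X\}$, both exactly as in the paper. Your identity $\mathbb E|W|-|\mathbb E W|=2\min\{\mathbb E W^+,\mathbb E W^-\}$ and your explicit tie convention make the equality case and the decision-flip consequence more precise than the paper's one-line appeal to equality in Jensen. One small point: when $\eta_X(x)=\tfrac12$ there is no ``opposite side,'' but any tie-break of the $X$-rule still disagrees with the joint rule on one of the positive-probability sets $\{W>0\}$, $\{W<0\}$, so the conclusion stands.
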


\begin{proof}
The balanced-accuracy statement follows because marginalization $(X,Z)\mapsto X$ is a Markov kernel, so total variation of the marginal cannot exceed total variation of the joint law. For raw risk,
\[
R^*(X)=\frac12-\mathbb{E}|\eta_X(X)-1/2|
\]
and similarly for $(X,Z)$. Since
\[
\eta_X(X)=\mathbb{E}\{\eta(X,Z)\mid X\},
\]
conditional Jensen's inequality for the convex absolute-value function yields the non-negativity. Equality in Jensen's inequality for $|\cdot|$ holds exactly when the conditional support remains within one of its affine regions, namely one side of zero.
\end{proof}

The criterion clarifies why $I(Z;Y\mid X)>0$ is insufficient for strict accuracy gain: $Z$ may refine posterior confidence while leaving every posterior on the same side of the decision threshold. Such a modality may improve log loss, calibration, or ranking without improving 0--1 accuracy.

For a tractable exact ceiling under complementary modalities, consider conditionally independent Gaussian measurements.

\begin{proposition}[Gaussian multimodal complementarity law]
\label{prop:gaussian-multimodal}
Let $X^{(1)},\ldots,X^{(M)}$ be conditionally independent given $Y$, with
\[
X^{(j)}\mid Y=y
\sim\mathcal{N}(\mu_{jy},\Sigma_j),
\]
where each covariance $\Sigma_j$ is common across classes and positive definite. Define
\[
d_j^2
=(\mu_{j1}-\mu_{j0})^\top
\Sigma_j^{-1}
(\mu_{j1}-\mu_{j0}).
\]
Then the concatenated observation has
\[
d_{\mathrm{joint}}^2=\sum_{j=1}^{M}d_j^2
\]
and exact balanced-accuracy ceiling
\[
\boxed{
C_{\mathrm{BA}}(X^{(1)},\ldots,X^{(M)})
=
\Phi\left(
\frac12\sqrt{\sum_{j=1}^{M}d_j^2}
\right).
}
\]
If a newly added modality has $d_j>0$, it strictly raises the joint ceiling unless the existing feature set already has perfect separation.
\end{proposition}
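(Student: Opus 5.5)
The plan is to reduce the joint problem to a single equal-covariance Gaussian discrimination problem and then reuse the one-dimensional computation already made in the proof of Theorem~\ref{thm:repeated-measurements}.

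\emph{Joint law.} By conditional independence, the concatenated vector $X=(X^{(1)},\ldots,X^{(M)})$ is Gaussian under $Y=y$. Its mean is $\mu_y=(\mu_{1y},\ldots,\mu_{My})$ and its covariance is the block-diagonal $\Sigma=\mathrm{diag}(\Sigma_1,\ldots,\Sigma_M)$. This covariance is common to both classes and positive definite. Writing $\Delta=\mu_1-\mu_0$ blockwise, $\Sigma^{-1}$ is block diagonal, so
\[
d_{\mathrm{joint}}^2=\Delta^\top\Sigma^{-1}\Delta=\sum_{j=1}^M d_j^2 .
\]

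\emph{Ceiling of a two-Gaussian problem.} By Lemma~\ref{lem:tv-ba}, $C_{\mathrm{BA}}$ equals $\tfrac12(1+\TV(P_0,P_1))$ and is attained by the rule $\mathbf 1\{p_1\ge p_0\}$. With a shared covariance, the log-likelihood ratio is affine: $\log(p_1/p_0)(x)=\Delta^\top\Sigma^{-1}x-c$. The statistic $S=\Delta^\top\Sigma^{-1}X$ is therefore sufficient. Under $Y=y$ it is univariate normal with mean $\Delta^\top\Sigma^{-1}\mu_y$ and variance $d^2_{\mathrm{joint}}$. Standardizing gives two unit-variance normals whose means differ by $d_{\mathrm{joint}}$. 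The optimal rule thresholds $S$ at the midpoint, and each class is classified correctly with probability $\Phi(d_{\mathrm{joint}}/2)$. This yields the boxed formula. The case $\Delta=0$, where $d_{\mathrm{joint}}=0$ and the ceiling is $\tfrac12$, is consistent with the formula and needs no separate argument.

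\emph{Strictness.} Adding a modality with $d_j>0$ strictly increases $\sum d_j^2$, and $t\mapsto\Phi(\tfrac12\sqrt t)$ is strictly increasing, so the joint ceiling strictly increases. Within this model the ceiling is always below $1$, so the "perfect separation" exception can only arise as the degenerate limit $d=\infty$. I will state that explicitly rather than leave it as a gap. Theorem~\ref{thm:multimodal-strictness} could serve as a cross-check, but the direct monotonicity argument suffices.

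\emph{Main obstacle.} The main care point is the sufficiency reduction. I need to confirm that thresholding $S$ at the midpoint is exactly the equal-prior rule $p_1\ge p_0$, and that the resulting balanced accuracy equals $\Phi(d/2)$ rather than a prevalence-weighted quantity. I will check this by computing TPR and TNR directly from the two standardized normals.
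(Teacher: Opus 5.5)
Your proposal is correct and follows the paper's own argument. Conditional independence gives a block-diagonal joint covariance, the squared Mahalanobis separations add, and equal-covariance Gaussian discrimination has Bayes balanced accuracy $\Phi(d/2)$. You also make explicit two steps the paper only asserts: the affine log-likelihood-ratio reduction to a midpoint threshold on $S=\Delta^\top\Sigma^{-1}X$, and strictness via monotonicity of $t\mapsto\Phi(\tfrac12\sqrt t)$, together with your accurate remark that the perfect-separation exception cannot occur for finite means and positive definite covariance.
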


\begin{proof}
Conditional independence makes the joint covariance block diagonal. The squared Mahalanobis separation of the concatenated Gaussian vector is therefore the sum of the blockwise squared separations. Equal-covariance Gaussian discrimination has Bayes balanced accuracy $\Phi(d/2)$.
\end{proof}

Measurement noise is incorporated through $\Sigma_j$: a cleaner channel reduces the observed covariance relative to the class mean difference and increases $d_j$. Proposition~\ref{prop:gaussian-multimodal} also corrects an important interpretive point: in general,
\[
C_{\mathrm{BA}}(X,Z)
\ge \max\{C_{\mathrm{BA}}(X),C_{\mathrm{BA}}(Z)\},
\]
but the right-hand side is only a lower bound on the joint ceiling. Complementary modalities can yield a joint ceiling strictly above both single-modality ceilings.

\section{Cross-Fitted Frontier Audit}
\subsection{Equal-Prior Target and Importance Weighting}
The population identity uses the equal-prior mixture $M=\tfrac12(P_0+P_1)$, not the prevalence-weighted marginal $P_X=(1-\pi)P_0+\pi P_1$. If a probabilistic learner estimates the ordinary posterior $\eta_\pi(x)=P(Y=1\mid X=x)$, the equal-prior posterior is
\[
\eta_{\rm eq}(x)=
\frac{(1-\pi)\eta_\pi(x)}{(1-\pi)\eta_\pi(x)+\pi\{1-\eta_\pi(x)\}}.
\]
Equivalently, training with balanced class weights targets the equal-prior decision problem directly. For any integrable $h$,
\[
\mathbb E_M h(X)=\tfrac12\mathbb E\{h(X)\mid Y=0\}
+\tfrac12\mathbb E\{h(X)\mid Y=1\}
=\mathbb E_P[w(Y)h(X)],
\]
where $w(0)=1/[2(1-\pi)]$ and $w(1)=1/(2\pi)$. In a finite evaluation fold, the numerically stable equivalent is the class-normalized average
\[
\widehat\kappa=\frac12\left[
\frac1{n_0}\sum_{i:Y_i=0}|2\widehat\eta_{{\rm eq},i}-1|
+\frac1{n_1}\sum_{i:Y_i=1}|2\widehat\eta_{{\rm eq},i}-1|
\right].
\]
This weighting is required when the cohort is not artificially balanced.

\subsection{Algorithm}
\begin{algorithm}[H]
\caption{Cross-fitted channel-frontier audit}
\begin{algorithmic}[1]
\REQUIRE observations $(X_i,Y_i)$; optional group identifier; probabilistic learner; folds; bootstrap scheme
\STATE Construct stratified folds, respecting groups when repeated observations belong to one patient.
\FOR{each fold $k$}
  \STATE Fit the probabilistic learner on all other folds with balanced class weighting or posterior transformation.
  \STATE Store out-of-fold $\widehat\eta_{{\rm eq},i}$ for observations in fold $k$.
\ENDFOR
\STATE Compute $\widehat\cba_{\rm CF}=\tfrac12(1+\widehat\kappa)$ using class-normalized evaluation.
\STATE Tune the balanced-accuracy decision threshold using training data only; compute achieved out-of-fold BA and $G=\widehat\cba_{\rm CF}-\BA_{\rm achieved}$.
\STATE Bootstrap the independent sampling unit (patients for UCI; respondents for BRFSS and NHANES) and repeat the complete audit.
\STATE Run the permutation-null and training-fraction diagnostics below.
\RETURN frontier estimate, interval, achieved BA, AUROC, learner gap, optimism floor, and underfit verdict.
\end{algorithmic}
\end{algorithm}

\subsection{Bias Diagnostics}
\paragraph{Permutation-null optimism floor.}
After shuffling labels within the valid sampling structure, the population class-conditional laws coincide and the true ceiling is $0.5$. Define
\[
\Delta_{\rm null}=\widehat\cba_{\rm perm}-0.5.
\]
A positive value measures finite-sample overconfidence of the posterior plug-in functional. It is a diagnostic, not a correction term: subtracting it need not remove bias under the original signal distribution.

\paragraph{Underfit curve.}
The posterior learner is refit at fractions $0.25/0.5/0.75/1.0$ of the available training data while preserving the evaluation protocol. A materially positive final increment indicates that the estimate is still increasing and should be reported as a lower bound. A small terminal change or a non-monotone oscillation within sampling noise is treated as convergence. Population monotonicity under added variables does not imply finite-sample monotonicity: an uninformative coordinate can reduce estimation efficiency and make a joint estimate slightly lower than a marginal estimate.

\begin{figure*}[t]
\centering
\includegraphics[width=.72\textwidth]{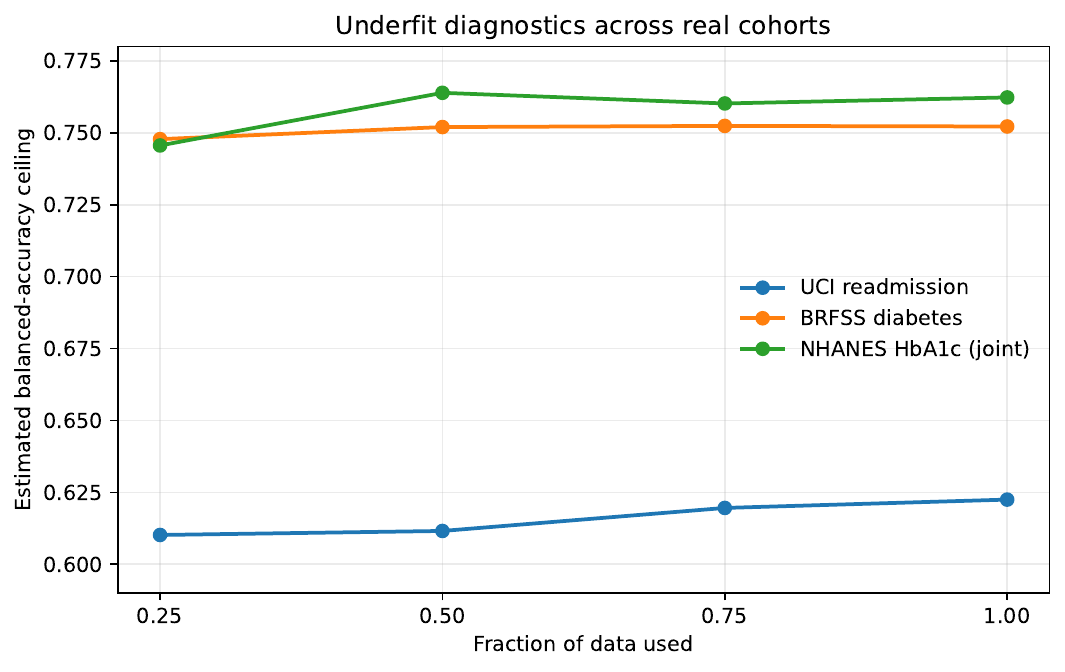}
\caption{Training-fraction diagnostic. UCI is still rising and is reported as a lower bound. BRFSS stabilizes. NHANES oscillates non-monotonically and is treated as converged rather than rising.}
\label{fig:supp-underfit}
\end{figure*}

\section{Real-Cohort Experimental Details}
\label{sec:real-cohorts}
\subsection{UCI Diabetes 130-US Hospitals Readmission}
The source cohort is the UCI diabetes hospital dataset described by Strack et al. \cite{strack2014impact}. The raw file contains $101{,}766$ encounters. Removing death and hospice discharges leaves $99{,}343$ encounters from $69{,}990$ unique patients, with 30-day-readmission prevalence $0.1139$. Folds use \texttt{StratifiedGroupKFold} with \texttt{patient\_nbr}; bootstrap replicates resample patients, not encounters. Administrative channel $A$ and clinical channel $B$ follow the experiment's feature partition. 

The audit gives ceiling $0.6225$ with 95\% CI $[0.6213,0.6240]$, AUROC $0.6692$, achieved BA $0.6223$, and $G=+0.0002$. The permutation-null ceiling is $0.5147$, so the optimism floor is $+0.0147$. The underfit sequence is $0.6102,0.6116,0.6196,0.6225$; its final $+0.0029$ increment requires the lower-bound label. Channel ceilings are $0.5753/0.5961/0.6206$ for $A/B/A+B$; AUROC changes $0.6058\to0.6674$, flip rate is $0.2996$, and risk gain is $0.0454$.

The full learner decomposition is reported in Table~\ref{tab:uci-learners} of Section~\ref{sec:learner-panels}. It shows that the near-zero GBDT gap is not forced by the estimator: LR has gap $+0.0039$, RF has $+0.0861$, and MLP has $+0.0527$. In particular, RF attains AUROC $0.6648$ but only BA $0.5349$, demonstrating that a learner may rank observations reasonably while remaining far below the best thresholded rule supported by its inputs.

\begin{figure*}[t]
\centering
\includegraphics[width=.72\textwidth]{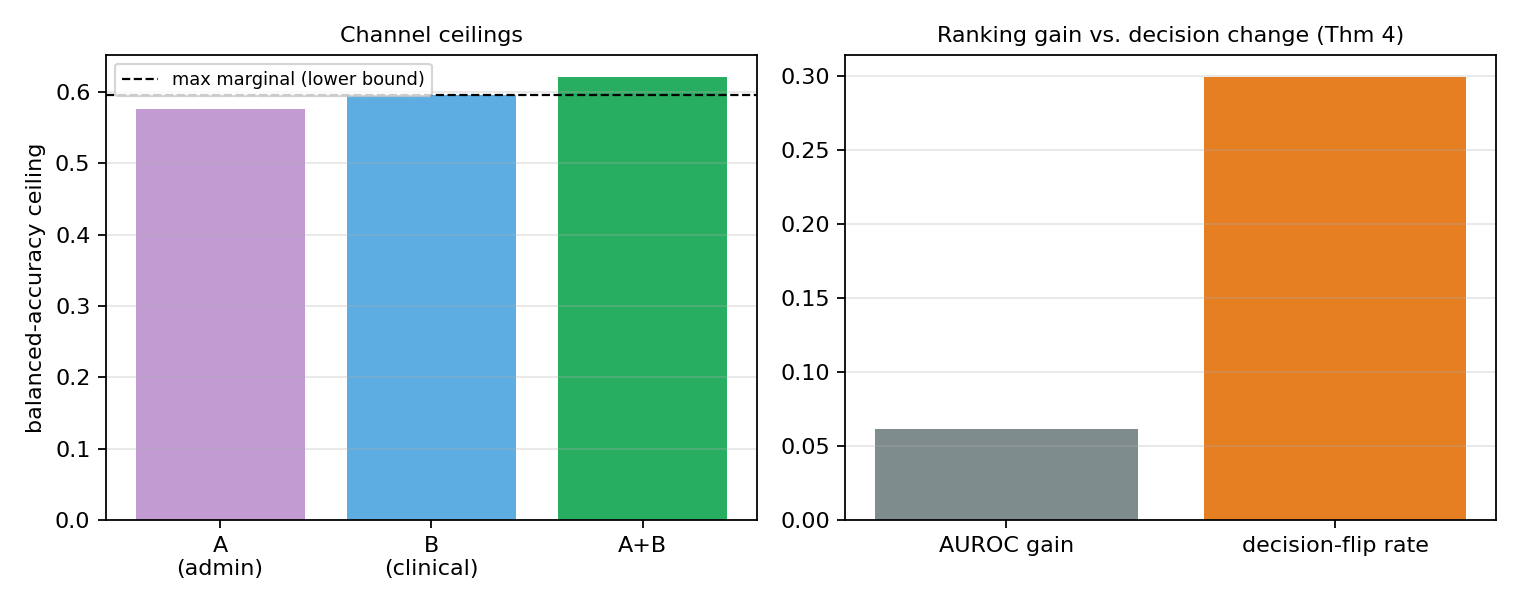}
\caption{UCI administrative, clinical, and joint channel results, together with the ranking gain and decision-flip rate.}
\label{fig:uci-channel}
\end{figure*}

\subsection{CDC BRFSS 2015 Diabetes Indicators}
The BRFSS analysis uses $253{,}680$ respondents and prevalence $0.1393$ \cite{cdcbrfss2015}. Every predictor is obtained by telephone survey, so this is the purest report-channel cohort in the paper. There is one row per respondent, and no grouping variable is needed. Theorem~4 partitions a perception channel $A=\{$GenHlth, MentHlth, PhysHlth, DiffWalk$\}$ and a recalled-diagnosis channel $B=\{$HighBP, HighChol, CholCheck, Stroke, HeartDiseaseorAttack, BMI$\}$. Channel $B$ represents prior objective measurements transmitted through memory and survey response.

The overall ceiling is $0.7522$ $[0.7515,0.7529]$, AUROC $0.8298$, achieved BA $0.7518$, and $G=+0.0003$. The optimism floor is $+0.0045$. The underfit sequence $0.7478,0.7520,0.7524,0.7522$ ends at $-0.0002$ and is converged. Channel ceilings $0.6917/0.7165/0.7413$ imply complementarity $+0.0248$; AUROC changes $0.7488\to0.8160$, flip rate is $0.1907$, and risk gain is $0.0496$.

Table~\ref{tab:brfss-learners} provides the corresponding learner panel. GBDT nearly reaches the estimated frontier ($G=+0.0003$), whereas the MLP has competitive AUROC $0.8180$ but BA $0.5793$, leaving gap $+0.1642$. The contrast connects the cohort-level result to the paper's ranking--decision distinction: strong ordering alone does not guarantee a useful hard decision rule. 

\subsection{NHANES 2015--2018 HbA1c}
NHANES contributes $10{,}219$ adults age $\ge20$ with glycohemoglobin measured; prevalence of HbA1c $\ge6.5\%$ is $0.1409$ \cite{nhanes20152016,nhanes20172018}. The questionnaire channel is
\begin{quote}\small
RIDAGEYR, RIAGENDR, RIDRETH3, DMDEDUC2, INDFMPIR, HUQ010, SMQ020, PAQ650.
\end{quote}
It is called \emph{questionnaire}, not self-report or PROM: it is mostly demographic, with self-rated health and two behavior variables. The measured channel is
\begin{quote}\small
BMXBMI, BMXWAIST, BMXHT, BMXWT, LBDHDD, LBXTC, LBXSATSI, LBXSASSI, LBXSAL, LBXSCR, LBXSUA, LBXSTR, LBXSGTSI, LBXWBCSI, LBXRBCSI, LBXHGB, LBXPLTSI.
\end{quote}
An explicit assertion excludes LBXGH, LBXGLU, LBXSGL, and LBXIN from both channels, preventing glycemic outcome leakage.

The regularized posterior learner is HistGradientBoosting with early stopping and native NaN handling (no sentinel imputation). Its settings are \texttt{max\_leaf\_nodes=15}, \texttt{min\_samples\_leaf=50}, $\ell_2=1.0$, and learning rate $0.05$. Questionnaire, measured, and joint ceilings are $0.7110$ $[0.7082,0.7141]$, $0.7152$ $[0.7118,0.7190]$, and $0.7623$ $[0.7582,0.7659]$. Their achieved BAs are $0.7093$, $0.7063$, and $0.7541$; AUROCs are $0.7714$, $0.7854$, and $0.8419$. The marginal gap $B-A=+0.0042$ is null because the intervals overlap. Complementarity $A+B-\max(A,B)=+0.0471$ is significant because the joint interval is disjoint from the measured interval. Flip rate is $0.2105$ and risk gain is $0.0513$. The optimism floor is $+0.0309$; the underfit sequence $0.7456,0.7639,0.7602,0.7623$ drops $0.0037$ from $0.5$ to $0.75$ and is treated as finite-sample oscillation.

The joint-channel learner results appear in Table~\ref{tab:nhanes-learners}. The panel again separates frontier estimation from learner quality: LR, RF, and GBDT have gaps $+0.0035$, $+0.0196$, and $+0.0081$, while the MLP reaches AUROC $0.8385$ but BA $0.5724$, producing gap $+0.1933$. Together with the null marginal channel contrast and significant joint complementarity, this shows that channel value is determined by conditional decision information rather than by the labels ``questionnaire or ``measured. 


\subsection{Complete Bootstrap Intervals}
\begin{table}[H]
\centering
\small
\caption{Available 95\% bootstrap intervals for channel frontiers.}
\begin{tabular}{llc}
\toprule
Cohort & Channel & Ceiling (95\% CI)\\
\midrule
UCI & administrative + clinical & $0.6225$ $[0.6213,0.6240]$\\
BRFSS & all survey predictors & $0.7522$ $[0.7515,0.7529]$\\
NHANES & questionnaire & $0.7110$ $[0.7082,0.7141]$\\
NHANES & measured & $0.7152$ $[0.7118,0.7190]$\\
NHANES & questionnaire + measured & $0.7623$ $[0.7582,0.7659]$\\
\bottomrule
\end{tabular}
\end{table}

\section{Complete Learner Panels}
\label{sec:learner-panels}
\begin{table}[H]
\centering
\small
\caption{UCI learner panel. Gap is ceiling minus achieved balanced accuracy.}
\label{tab:uci-learners}
\begin{tabular}{lrrrr}
\toprule
Learner & BA & AUROC & Ceiling & Gap\\
\midrule
LR & $0.6016$ & $0.6468$ & $0.6055$ & $+0.0039$\\
RF & $0.5349$ & $0.6648$ & $0.6211$ & $+0.0861$\\
GBDT & $0.6223$ & $0.6692$ & $0.6225$ & $+0.0002$\\
MLP & $0.5146$ & $0.5925$ & $0.5673$ & $+0.0527$\\
\bottomrule
\end{tabular}
\end{table}

\begin{table}[H]
\centering
\small
\caption{BRFSS learner panel.}
\label{tab:brfss-learners}
\begin{tabular}{lrrrr}
\toprule
Learner & BA & AUROC & Ceiling & Gap\\
\midrule
LR & $0.7461$ & $0.8225$ & $0.7481$ & $+0.0020$\\
RF & $0.7269$ & $0.8239$ & $0.7478$ & $+0.0209$\\
GBDT & $0.7518$ & $0.8298$ & $0.7522$ & $+0.0003$\\
MLP & $0.5793$ & $0.8180$ & $0.7435$ & $+0.1642$\\
\bottomrule
\end{tabular}
\end{table}

\begin{table}[H]
\centering
\small
\caption{NHANES joint-channel learner panel.}
\label{tab:nhanes-learners}
\begin{tabular}{lrrrr}
\toprule
Learner & BA & AUROC & Ceiling & Gap\\
\midrule
LR & $0.7426$ & $0.8151$ & $0.7461$ & $+0.0035$\\
RF & $0.7358$ & $0.8341$ & $0.7554$ & $+0.0196$\\
GBDT & $0.7541$ & $0.8419$ & $0.7623$ & $+0.0081$\\
MLP & $0.5724$ & $0.8385$ & $0.7656$ & $+0.1933$\\
\bottomrule
\end{tabular}
\end{table}



\section{Clinical Evidence-Synthesis Protocol}
\label{sec:evidence-synthesis}
The three real-cohort audits establish that the proposed frontier diagnostic can distinguish learner deficiency from measurement limitation in specific tasks. They do not, by themselves, show whether similar saturation patterns recur across diseases, outcomes, institutions, and learner families. We therefore complement the cohort experiments with a PRISMA-guided evidence synthesis designed to answer three broader questions: whether reported clinical prediction performance repeatedly occupies a restricted range; whether increasing sample size or model complexity systematically moves that range; and under what measurement configurations performance exceeds it. The review is used as descriptive external context rather than as a pooled estimate of a universal ceiling, because the source literature reports heterogeneous outcomes, validation schemes, populations, and metrics. In particular, AUROC, accuracy, balanced accuracy, F1, and AUPRC are retained on their original scales rather than combined into a common estimand.

\subsection{Review Question, Search, and Eligibility}
The synthesis follows PRISMA 2020 and umbrella-review guidance \cite{page2021prisma,aromataris2015summarizing}. Searches covered PubMed, PubMed Central, ScienceDirect, SpringerLink, Authorea, and arXiv. The PubMed string was:
\begin{quote}\small
(``machine learning''[MeSH] OR ``deep learning''[tiab] OR ``random forest''[tiab] OR ``gradient boosting''[tiab] OR ``neural network''[tiab]) AND (``clinical outcome''[tiab] OR ``surgical outcome''[tiab] OR ``patient-reported outcome''[tiab] OR ``PROM''[tiab] OR ``treatment response''[tiab]) AND (``systematic review''[pt] OR ``meta-analysis''[pt] OR ``accuracy''[tiab] OR ``AUC''[tiab]).
\end{quote}
Eligible reports were English-language systematic, scoping, or meta-analytic reviews, plus large primary studies with $n\ge500$, applying ML to structured clinical or patient-reported inputs and reporting a quantitative predictive metric. Imaging-only reports were excluded from the core structured-record synthesis but retained as channel comparators.

\subsection{Screening Units and Analytic Units}
The supplied screening record reports $1{,}117$ database records and 19 manually identified records, $1{,}016$ after duplicate removal, 768 title/abstract exclusions, 248 full-text assessments, and 144 reported full-text exclusions. Separately, the analytic dataset contains 30 source publications and 104 extracted task-level observations across more than 18 categories. Figure~\ref{fig:prisma-units} labels these as distinct units instead of making 30 sources and 104 task observations appear to be the same denominator.

\begin{figure*}[t]
\centering
\includegraphics[width=.88\textwidth]{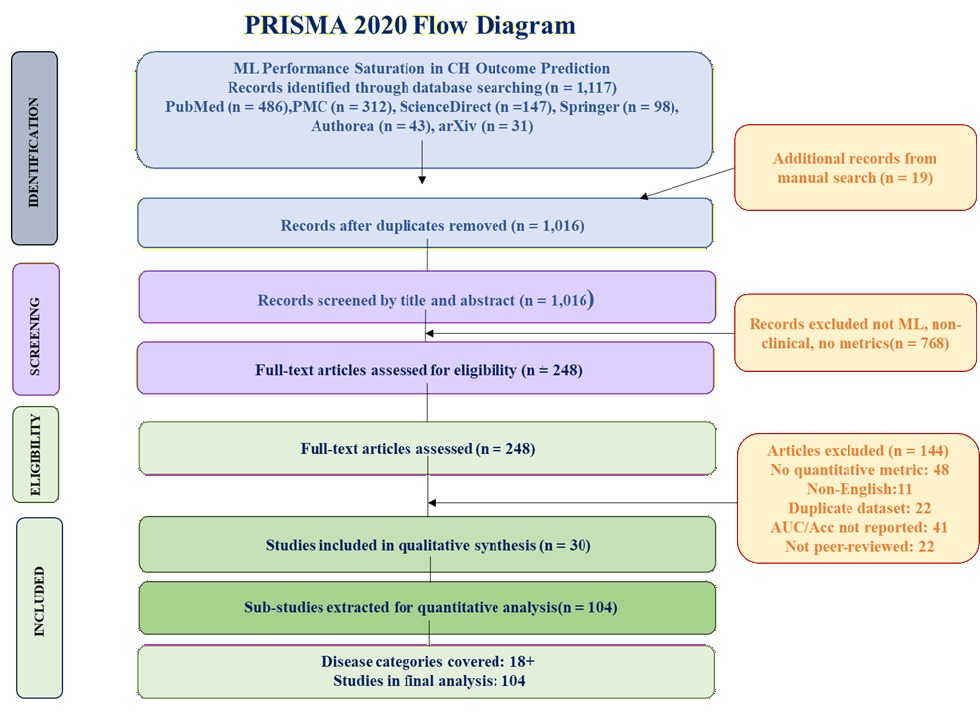}
\caption{PRISMA 2020 Flow Diagram. From 1,117 records identified across six databases, 30 reviews and primary
studies were included, yielding 104 sub-study observations across 18+ disease categories.}
\label{fig:prisma-units}
\end{figure*}

\subsection{Extraction, Metric Policy, and Risk of Bias}
Extracted fields were disease category, learner family, sample size, reported AUROC/accuracy/F1, validation design, class-specific recall when available, and multimodal status. AUROC, balanced accuracy, raw accuracy, F1, and AUPRC are not pooled as one estimand. The main paper uses the single-index AUROC-to-BA conversion only to display a contextual band, never to convert individual studies into audited channel frontiers. Risk of bias was organized around sample adequacy, reporting completeness, and validation approach, adapted from ROBIS and PROBAST.

\section{Result Analysis}
\label{sec:result-analysis}
\subsection{Disease-Category Summary}

Table~\ref{tab:disease-summary} summarizes 104 task-level observations extracted from 30 source publications and spanning more than 18 disease categories. The evidence base is deliberately broad: it includes surgical outcomes, cardiovascular disease, stroke, endocrine and renal disease, obstetrics, oncology, liver transplantation, mental health, autoimmune disease, hospital readmission, chronic pain, medical imaging, and ECG-based prediction. The largest task groups are orthopedic surgery/PROM prediction (12 observations), endocrine/renal/diabetes (11), and general oncology (11), followed by cardiovascular disease and ICU/sepsis (8 each). This breadth is useful because the same qualitative question---whether performance is limited by the learner or by the recorded channel---appears across very different clinical endpoints.

The table also makes clear why the synthesis is descriptive. A ``disease category'' may contain prognosis, treatment response, diagnosis, complications, or quality-of-life outcomes, and the reported ranges may combine AUROC and accuracy. Consequently, category ranges should not be read as pooled effect estimates or as directly comparable channel frontiers. Their value is pattern discovery: many structured-clinical categories repeatedly occupy a middle performance region, while chronic pain and readmission provide lower examples and imaging/ECG provide higher-signal comparators. The wide ranges for ICU/sepsis, breast cancer, autoimmune disease, and Parkinson disease further indicate that measurement composition and validation design vary substantially within a nominal disease label.

\begin{table*}[t]
\centering
\scriptsize
\caption{Reported category ranges from the supplied evidence table. Values remain descriptive and may combine AUROC and accuracy.}
\label{tab:disease-summary}
\begin{tabular}{lrrll}
\toprule
Category & Tasks & Range & Interpretation & Frequent model\\
\midrule
Medical imaging & 5 & $.85$--$.97$ & objective-signal comparator & CNN\\
ECG/cardiac signal & 2 & $.83$--$.99$ & objective-signal comparator & DL\\
Parkinson disease & 3 & $.75$--$.92$ & clinical and multimodal & RF/GenoML\\
Dementia/Alzheimer disease & 3 & $.75$--$.92$ & highly heterogeneous modalities & RF/CNN/XGB\\
Cardiac surgery & 6 & $.80$--$.90$ & recurrent region & RF/XGB\\
Stroke & 4 & $.80$--$.92$ & recurrent to above & XGB\\
Cardiovascular disease & 8 & $.80$--$.92$ & typical near $.85$ & XGB\\
Orthopedic surgery/PROM & 12 & $.80$--$.88$ & recurrent region & RF/XGB\\
ICU/sepsis & 8 & $.75$--$.99$ & broad range & XGB/LSTM\\
Endocrine/renal/diabetes & 11 & $.78$--$.90$ & typical $.80$--$.87$ & RF/XGB\\
Obstetrics & 6 & $.78$--$.90$ & typical $.78$--$.88$ & XGB/RF\\
General oncology & 11 & $.72$--$.93$ & typical $.78$--$.85$ & RF/NN/LASSO\\
Liver/transplant & 2 & $.75$--$.92$ & typical $.82$--$.88$ & RF/GBM\\
Mental health/depression & 3 & $.70$--$.92$ & typical $.78$--$.85$ & RF/XGB/LR\\
Breast cancer & 3 & $.57$--$.97$ & clinical lower; imaging higher & CNN/RF/SVM\\
Autoimmune/rheumatology & 3 & $.63$--$.92$ & clinical $.75$--$.85$ & RF/SVM/XGB\\
Hospital readmission & 3 & $.65$--$.82$ & below recurrent region & XGB\\
Chronic pain/PROM & 2 & $.49$--$.86$ & lowest PROM-heavy category & RF/SVM/LR\\
\bottomrule
\end{tabular}
\end{table*}

\subsection{Disease-Category Performance}
\begin{figure*}[t]
\centering
\includegraphics[width=.95\textwidth]{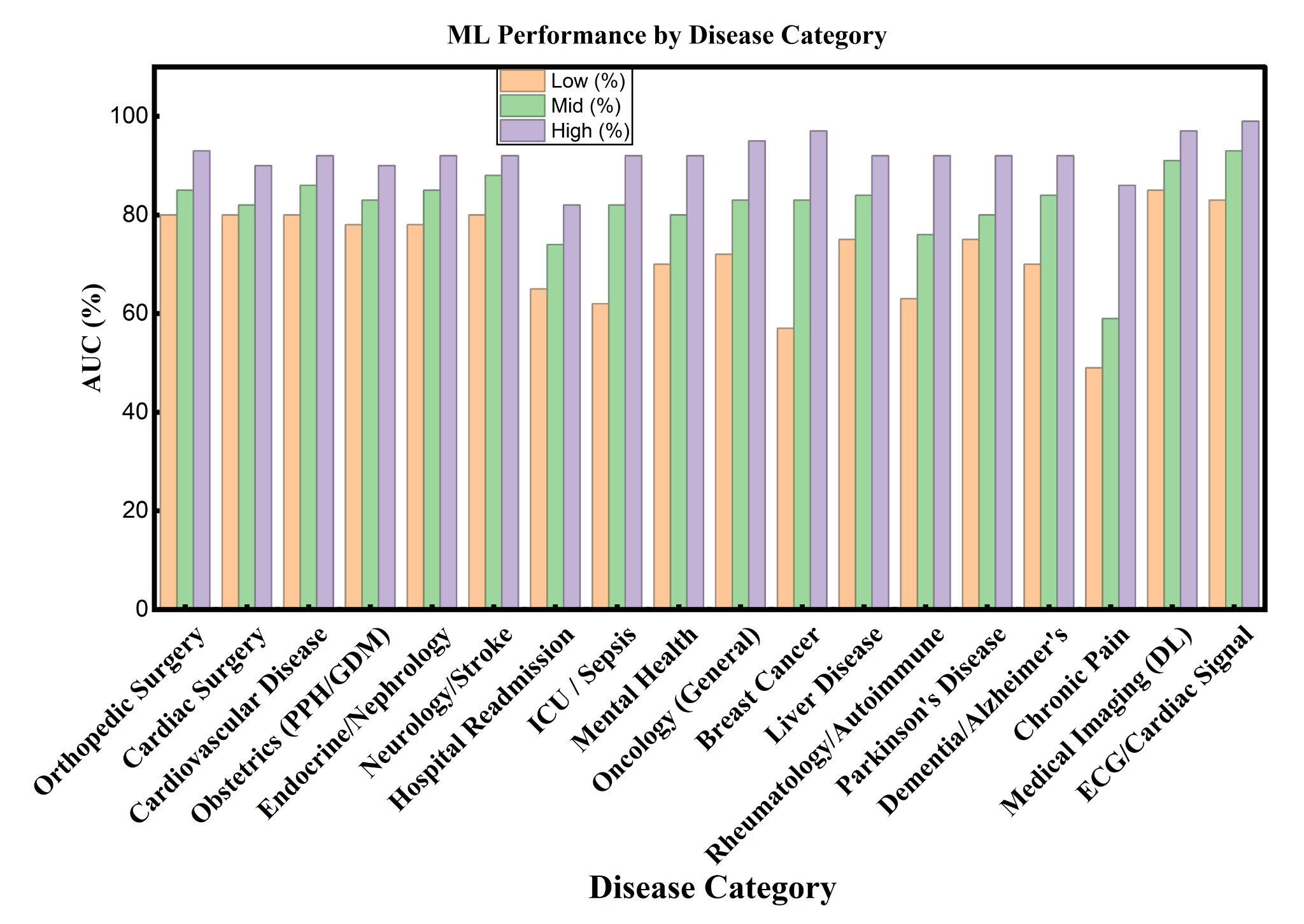}
\caption{Reported low, midpoint, and high performance by disease category across the 104 extracted task-level observations. The plot is descriptive: source studies report heterogeneous metrics and validation designs, so the bars are not pooled estimates of a common effect.}
\label{fig:sup-disease-performance}
\end{figure*}

Figure~\ref{fig:sup-disease-performance} visualizes the category ranges in Table~\ref{tab:disease-summary}. A recurrent middle region is visible across orthopedic surgery, cardiovascular disease, obstetrics, endocrine/renal disease, stroke, mental health, and several oncology tasks, despite substantial differences in pathophysiology and study population. The observation motivates a channel-level explanation: once the recorded variables contain a limited amount of class separation, changing the learner may improve approximation but cannot create missing clinical information.

The figure also shows important departures from the middle region. Chronic pain and hospital readmission include the lowest reported results, whereas imaging and ECG/cardiac-signal studies extend to substantially higher values. These contrasts are consistent with differences in measurement channels, but they do not identify a numeric noise fraction or prove that one modality is universally superior. Chronic pain, for example, is affected by subjective symptom perception, recall, mood, and social context; the observed AUC range of $0.49$--$0.65$ in the cited study \cite{zmudzki2023machine} is therefore consistent with a weak observed channel, but the plateau alone cannot identify the underlying contamination rate. Conversely, high imaging or ECG performance may reflect richer signal, narrower tasks, different validation designs, or some combination of these factors.

Several categories also have broad internal ranges. ICU/sepsis extends from $0.75$ to $0.99$, breast cancer from $0.57$ to $0.97$, and autoimmune/rheumatology from $0.63$ to $0.92$. Such dispersion cautions against treating the disease name as the channel: the actual inputs, outcome definition, cohort construction, and validation protocol determine the frontier. The real-cohort audits in Sections~\ref{sec:real-cohorts}--\ref{sec:learner-panels} address this limitation by measuring frontiers within fixed datasets and metrics.

\subsection{Model-Family Performance}
\begin{figure*}[t]
\centering
\includegraphics[width=.9\textwidth]{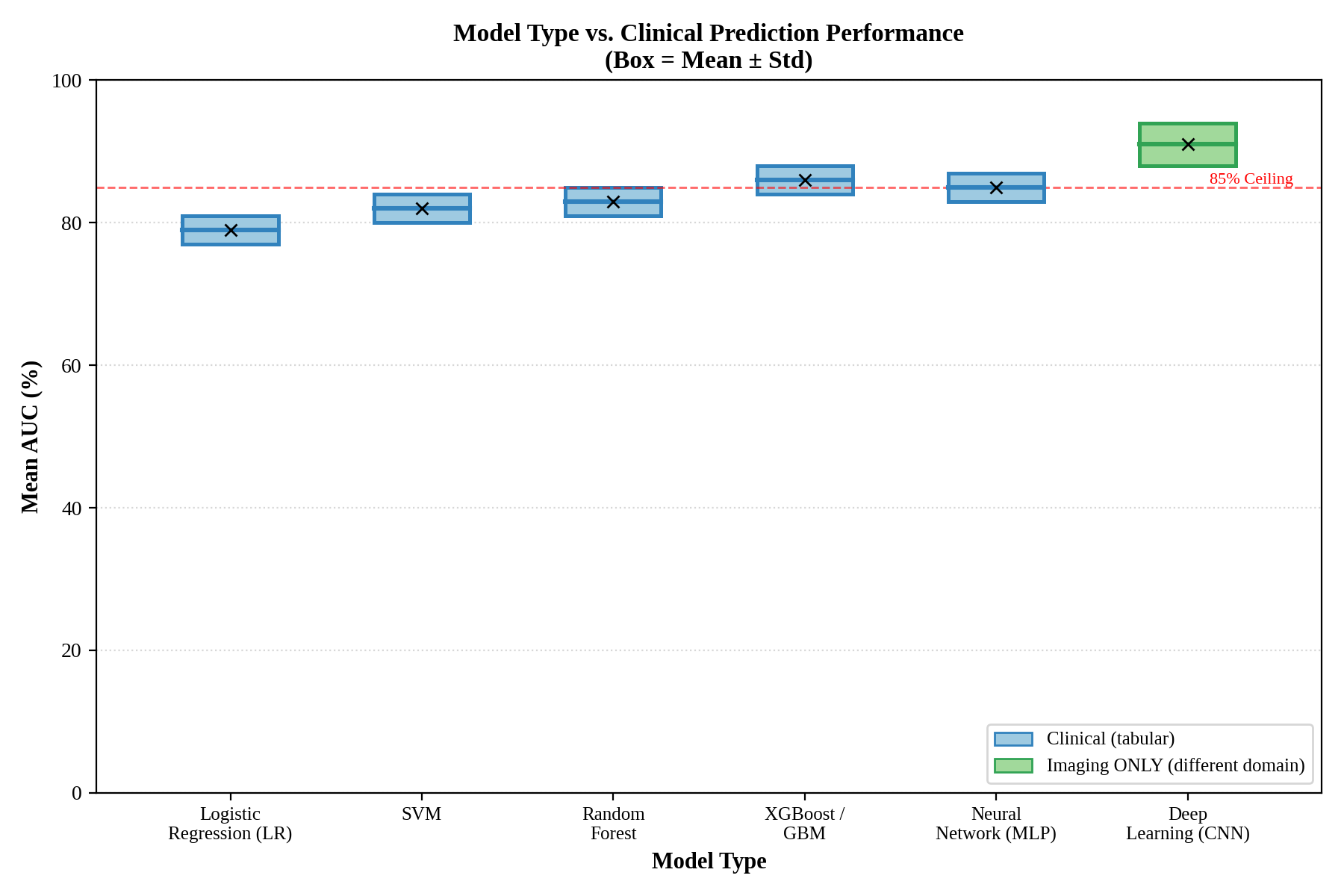}
\caption{Descriptive performance summaries by model family. Clinical tabular models show diminishing gains from logistic regression through boosting and neural networks; the imaging-only CNN bar is a distinct measurement domain and is shown as a comparator rather than evidence that architecture alone moves a fixed-channel frontier.}
\label{fig:sup-model-family}
\end{figure*}

Figure~\ref{fig:sup-model-family} summarizes the reported model-family pattern in the source corpus. Logistic regression typically lies near $0.78$--$0.80$, SVM near $0.80$--$0.82$, random forests near $0.81$--$0.84$, and XGBoost/gradient boosting near $0.83$--$0.87$. Multilayer perceptrons and tabular deep-learning systems generally add little beyond boosting, with typical reported values around $0.84$--$0.88$. The descriptive progression is therefore compatible with diminishing approximation gains: moving from a linear rule to a flexible nonlinear learner can matter, but increasingly complex models often approach the same task-specific information limit.

Cross-study comparisons cannot isolate architecture because disease, sample size, feature set, metric, and validation design all change simultaneously. The learner panels in Tables~\ref{tab:uci-learners}--\ref{tab:nhanes-learners} are therefore the stronger architecture test. Within fixed cohorts, estimated frontiers remain comparatively stable while achieved BA can differ dramatically. In BRFSS and NHANES, for example, MLP AUROC remains competitive while thresholded BA collapses, showing that model complexity can preserve ranking yet fail to realize a useful decision rule. The original complexity-score scatter is omitted because its $r=0.75$ and $R^2=0.75$ annotations are mutually incompatible unless independently verified.

\subsection{Dataset Size and Channel Richness}
\begin{figure*}[t]
\centering
\includegraphics[width=.9\textwidth]{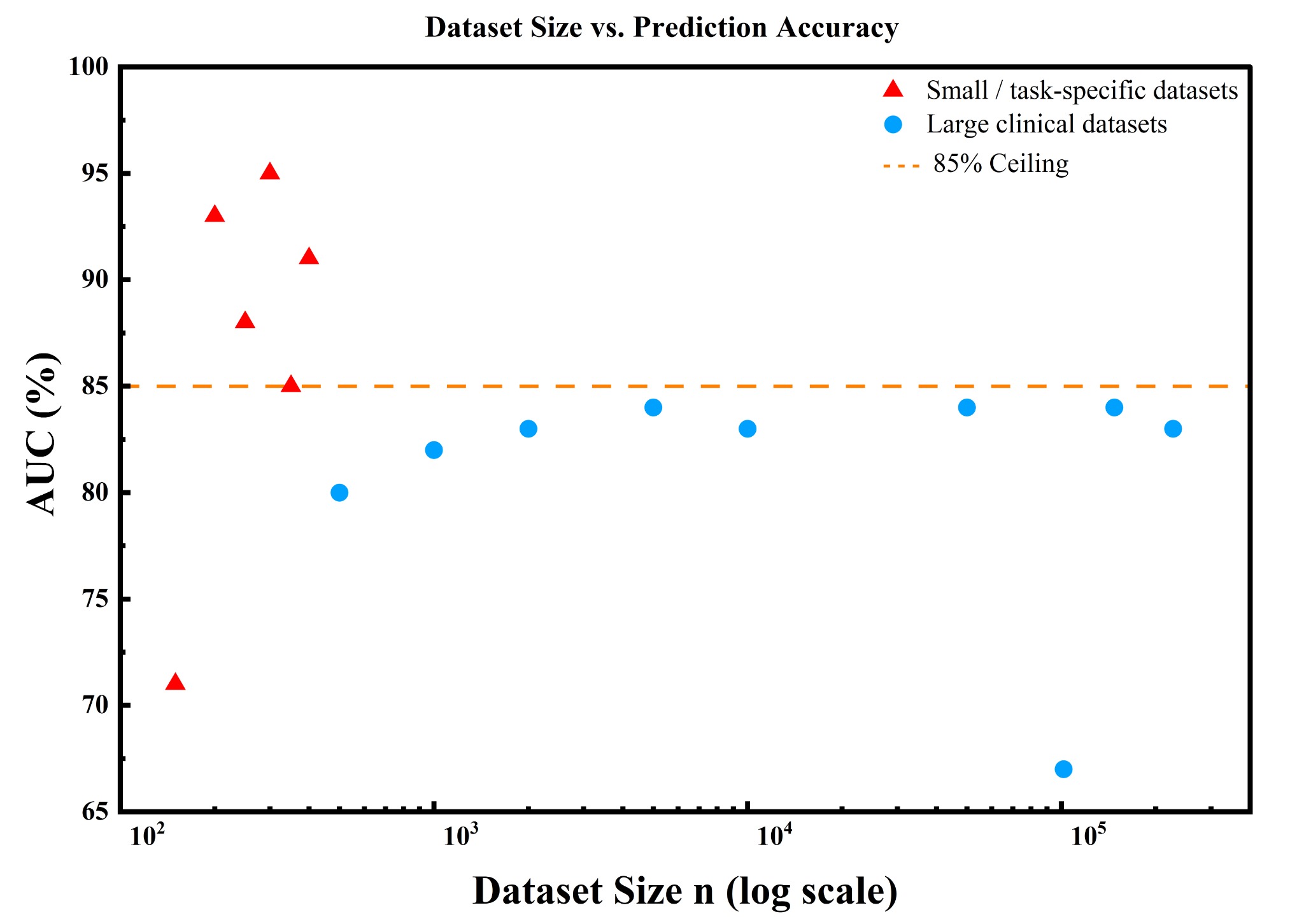}
\caption{Reported performance versus dataset size on a logarithmic scale. Large cohorts do not uniformly exceed the recurrent clinical range, while small task-specific datasets show substantial variance. The plot is descriptive across heterogeneous tasks and metrics.}
\label{fig:sup-dataset-size}
\end{figure*}

Figure~\ref{fig:sup-dataset-size} addresses whether sample size alone breaks the apparent saturation pattern. The large cardiac-surgery study with $227{,}087$ patients reports AUC $0.833$--$0.834$ \cite{sinha2023comparison}, illustrating that a very large cohort can still remain in the recurrent clinical region. At the same time, several small or narrowly defined datasets report much higher values, which may reflect genuinely easier tasks, richer channels, or optimistic validation. The scatter therefore does not support a simple monotone relation between $n$ and reported performance.

The dementia evidence requires particular qualification. The Veronese review should not be treated as a single large-$n$ structured-clinical point: 12 of its 21 studies include CT or MRI, four PET, seven CSF biomarkers, and five blood biomarkers; eight AUCs exceed $0.90$ and five are below $0.80$ \cite{veronese2025clinical}. Its mean therefore averages a heterogeneous, majority-multimodal literature rather than demonstrating a tight structured-record plateau. More informative is the within-outcome channel contrast: the claims-only Reinke cohort has $n=117{,}895$ and modest discrimination, whereas smaller memory-clinic and neuroimaging cohorts often exceed $0.85$--$0.90$ \cite{reinke2023dementia}. Performance thus runs opposite to sample size when the smaller cohorts contain richer measurements, consistent with the distinction between estimation error and channel information.

The high variance among small datasets also highlights publication and validation risks. Small samples can produce unstable estimates, broad uncertainty, and overoptimistic internal validation. Accordingly, Figure~\ref{fig:sup-dataset-size} should be interpreted as evidence that data quantity is not sufficient, not as evidence that data quantity is irrelevant.

\subsection{Clinical-Only and Multimodal Comparisons}
\begin{figure*}[t]
\centering
\includegraphics[width=.9\textwidth]{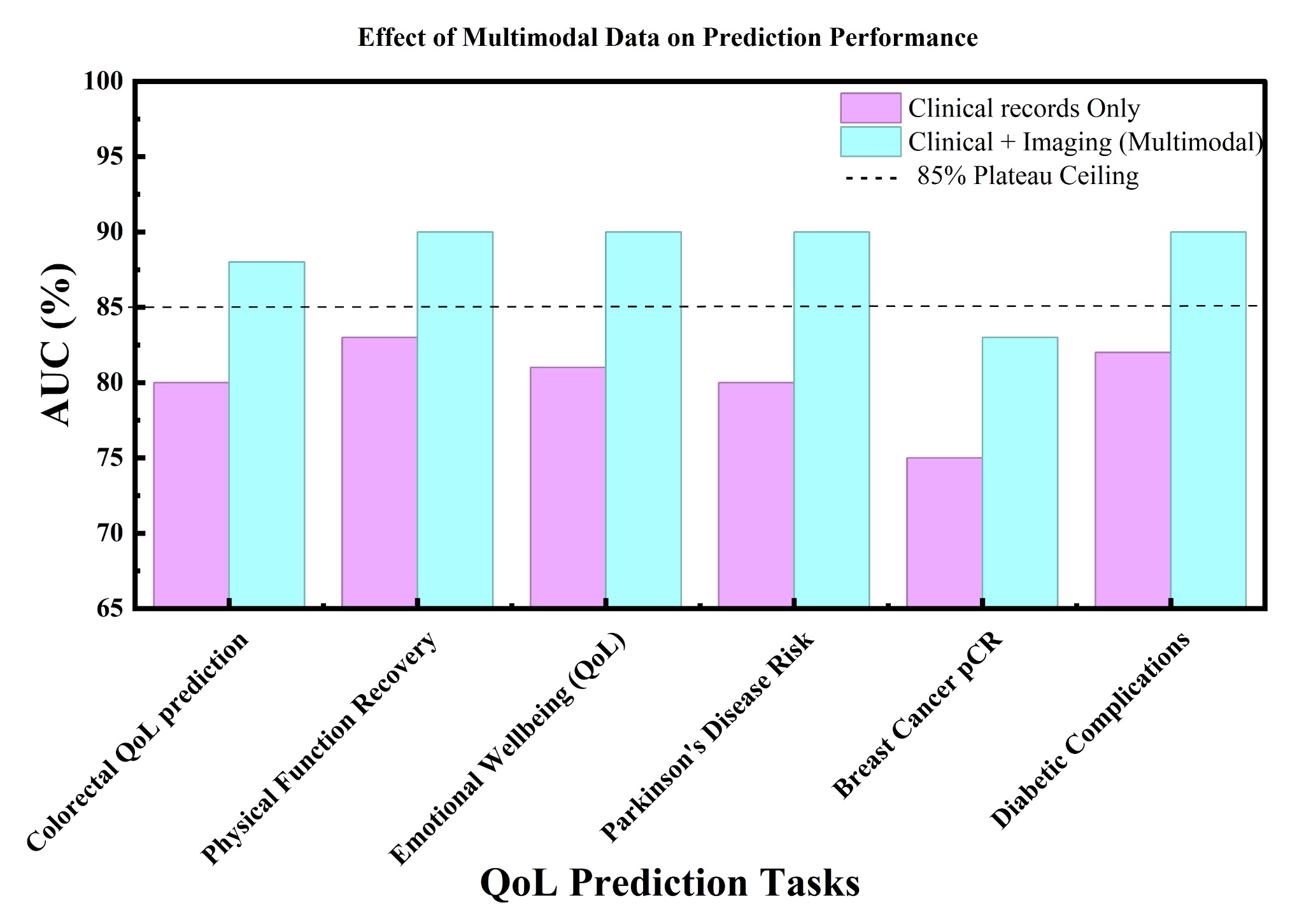}
\caption{Selected clinical-only and multimodal comparisons across quality-of-life and disease-prediction tasks. The figure reports descriptive study-level contrasts rather than a pooled causal effect of adding a modality.}
\label{fig:sup-multimodal}
\end{figure*}

Figure~\ref{fig:sup-multimodal} collects six selected comparisons spanning colorectal quality of life, physical-function recovery, emotional wellbeing, Parkinson disease, breast-cancer response, and diabetic complications. In these examples, clinical-only results lie roughly between $0.75$ and $0.83$, whereas the corresponding multimodal results lie roughly between $0.83$ and $0.90$. The selected contrasts therefore illustrate how adding a channel can move the attainable frontier when it contributes decision-relevant information not already contained in the clinical record.

The Parkinson example extends the observation beyond imaging: a genomics-augmented GenoML system reports AUC $0.897$, compared with approximately $0.78$--$0.83$ for clinical-only models in the same broad domain \cite{makarious2022multimodality}. Nevertheless, these literature comparisons are neither randomized modality ablations nor harmonized within-cohort evaluations. Differences in cohort, outcome, model, and validation can contribute to the apparent gain, so the figure does not establish a universal $5$--$10$ percentage-point effect for ``objective'' data.

The real NHANES experiment provides a more controlled refinement. Questionnaire and measured marginal frontiers are statistically indistinguishable, yet their joint frontier rises by $+0.0471$ over the better marginal. Thus, the useful principle is not that measured data are intrinsically cleaner than questionnaires; it is that a new channel helps when it changes decisions on a positive-probability subset, as characterized by Theorem~4. Figure~\ref{fig:sup-multimodal} is therefore best read as descriptive evidence of possible complementarity, while the cohort audits supply the direct within-dataset test.

\subsection{Empirical Saturation Formula}
\label{sec:empirical-formula}
The source evidence synthesis proposed an exploratory pre-training benchmark linking reported AUROC to sample size $n$ and a coarse model-complexity score $C$:
\begin{equation}
\widehat{\mathrm{AUC}}(n,C)
=A_{\min}+(A_{\max}-A_{\min})
\left[1-\exp\left\{-k\ln\left(\frac{n}{n_0}\right)\right\}\right]
+\beta C.
\label{eq:empirical-saturation}
\end{equation}
The supplied parameterization is $A_{\min}=0.68$, $A_{\max}=0.875$, $k=0.18$, $n_0=500$, and $\beta=0.012$. The complexity score assigns $C=0$ to logistic regression, $1$ to SVM, $2$ to random forest, $3$ to XGBoost/gradient boosting, $4$ to an MLP, and $5$ to tabular deep learning. Because the review eligibility criterion emphasizes studies with $n\ge n_0$, the expression is used only over that range. At $n=n_0$, the sample-size term is zero; as $n$ grows, its derivative decreases and the curve approaches $A_{\max}+\beta C$. Under the supplied coding, the largest asymptote is $0.875+5(0.012)=0.935$.

Equation~\ref{eq:empirical-saturation} encodes two observations from the descriptive corpus: rapid initial benefit from additional data followed by diminishing returns, and a smaller additive gain with model complexity. It is not a theorem and does not follow from the total-variation frontier. In particular, the $\beta C$ term permits different model families to approach different asymptotes, whereas Lemma~2 gives a common population frontier for all learners using the same variables. The formula should therefore be interpreted as a phenomenological summary of heterogeneous published studies, not as an estimator of a cohort-specific Bayes ceiling.

\begin{table*}[t]
\centering
\scriptsize
\caption{Selected validation cases reported for the empirical saturation formula. Errors are predicted minus reported AUROC.}
\label{tab:formula-validation}
\begin{tabular}{lrrrrr}
\toprule
Study & $n$ & $C$ & Reported & Predicted & Error\\
\midrule
Cardiac surgery \cite{sinha2023comparison} & $227{,}087$ & 3 & $0.834$ & $0.856$ & $+0.022$\\
Dementia review \cite{veronese2025clinical} & $>1{,}000{,}000$ & 2 & $0.845$ & $0.845$ & $0.000$\\
Orthopedic review \cite{ogink2021wide} & $5{,}507$ & 2 & $0.800$ & $0.815$ & $+0.015$\\
Postpartum hemorrhage \cite{ranjbar2023predicting} & $\sim5{,}000$ & 3 & $0.850$ & $0.840$ & $-0.010$\\
Stroke review \cite{yang2023predictive} & $\sim3{,}000$ & 3 & $0.872$ & $0.833$ & $-0.039$\\
30-day readmission \cite{emijohnson2025predicting} & $101{,}766$ & 3 & $0.667$ & $0.860$ & $+0.193$\\
CVD EHR review \cite{liu2025machine} & $\sim50{,}000$ & 2 & $0.865$ & $0.845$ & $-0.020$\\
GPT-4 perioperative \cite{chung2024large} & $\sim1{,}000$ & 4 & $0.810$ & $0.759$ & $-0.051$\\
\bottomrule
\end{tabular}
\end{table*}

The tabulated values show where the heuristic succeeds and where it fails. Cardiac surgery, dementia, orthopedic prediction, postpartum hemorrhage, and CVD EHR fall within $\pm0.03$ of the formula, while stroke, GPT-4 perioperative prediction, and especially 30-day readmission depart more substantially. The source narrative states that six of eight cases are within $\pm0.03$, but the supplied table yields five; we retain the tabulated values and do not repeat the inconsistent count. The readmission error is particularly informative in light of the real-cohort audit: GBDT nearly reaches the measured frontier on the UCI cohort, so the low reported AUROC is better explained by the recorded channel than by insufficient algorithmic complexity. A formula depending only on $n$ and $C$ cannot represent this channel-specific limitation.

\begin{figure}[t]
\centering
\begin{minipage}[t]{.49\textwidth}
\includegraphics[width=\linewidth]{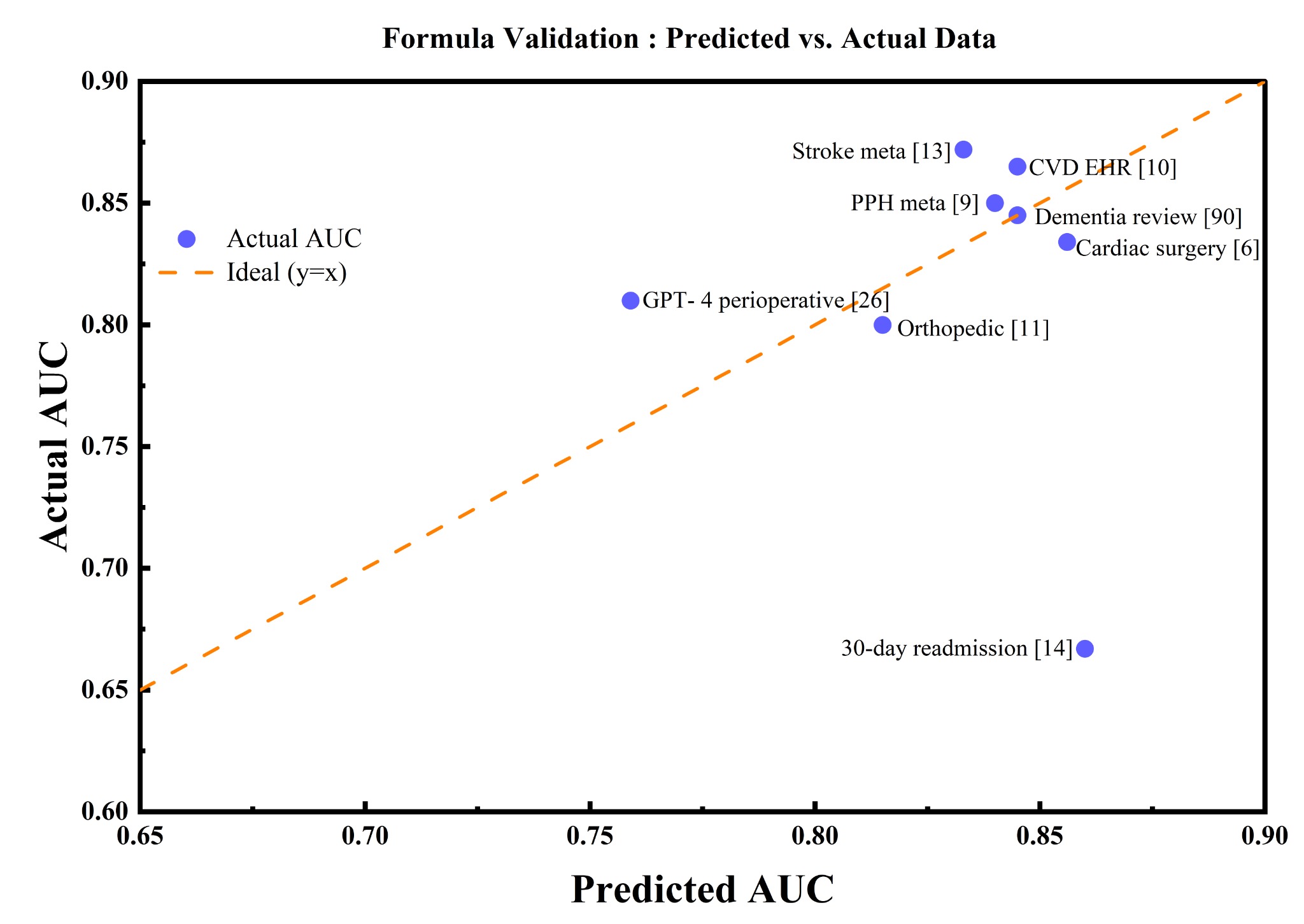}
\end{minipage}\hfill
\begin{minipage}[t]{.49\textwidth}
\includegraphics[width=\linewidth]{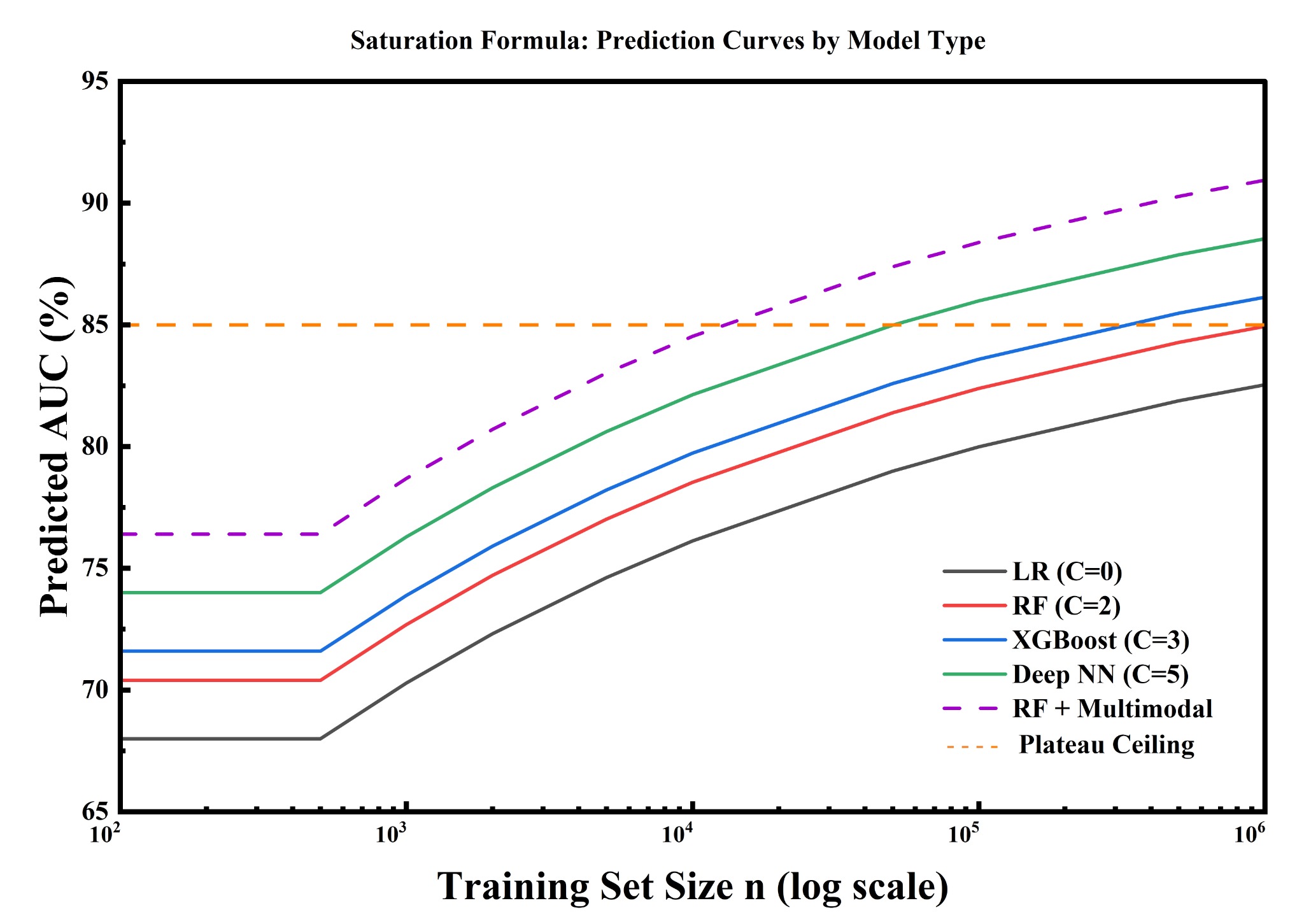}
\end{minipage}
\caption{Exploratory empirical saturation formula. Left: predicted versus reported AUROC for selected studies. Right: prediction curves as functions of training-set size and model-complexity score. The curves summarize the aggregate corpus and are not estimates of a common fixed-channel frontier.}
\label{fig:sup-empirical-formula}
\end{figure}

The left panel of Figure~\ref{fig:sup-empirical-formula} visualizes the validation cases in Table~\ref{tab:formula-validation}; distance from the diagonal exposes channel- or task-specific departures that $n$ and $C$ cannot capture. The right panel illustrates the intended diminishing-return behavior: the curves rise quickly at smaller $n$ and flatten as sample size increases, while larger complexity scores shift the predicted AUROC upward. The multimodal curve is shown separately because the original analysis treated additional measurement channels as a change in the attainable regime rather than merely another complexity increment.

As a practical heuristic, Equation~\ref{eq:empirical-saturation} can warn against expecting very high AUROC solely from a larger tabular cohort or a more complex learner. It should not be used for formal sample-size determination, channel-frontier estimation, or claims of a universal $0.875$ ceiling. Its parameters were fitted to aggregate summaries, uncertainty was not propagated, overlap among reviews was not modeled, and performance metrics and validation designs vary across sources. The cross-fitted frontier audit developed in the main paper supersedes the formula as the operational method because it estimates the information available in a specified cohort and directly separates learner gap from measurement-channel limitation. The superseded information-theoretic bottleneck diagram is omitted because positive conditional mutual information alone does not guarantee a change in hard decisions.

\bibliography{aaai2027}

\end{document}